\theoremstyle{definition}
\newtheorem{remark}{Remark}
\theoremstyle{plain}
\newtheorem{lemma}{Lemma}
\newtheorem{theorem}{Theorem}
\newtheorem{proposition}{Proposition}
\newcommand{\argmax}{\mathop{\mathrm{argmax}}}
\newcommand{\cN}{\mathcal{N}}
\newcommand{\R}{\mathbb{R}}
\newcommand{\tr}{\mathsf{Tr}}
\newcommand{\rank}{\mathsf{rank}}
\newcommand{\p}{{\rm I}\kern-0.18em{\rm P}}
\newcommand{\E}{{\rm I}\kern-0.18em{\rm E}}
\newcommand{\B}{\boldsymbol}
\newcommand{\gd}{\mathrm{GD}}
\newcommand{\sam}{\mathrm{SAM}}
\newcommand{\diag}{\mathrm{diag}}
\newcommand{\bias}{\mathrm{Bias}^2}
\newcommand{\var}{\mathrm{Var}}
\newcommand{\error}{\mathrm{Error}}
\newcommand{\errortest}{\mathrm{Error}}
\newcommand{\snr}{\mathrm{SNR}}
\newcommand{\Hk}{\mathcal{H}_K}
\newcommand{\Hkp}{\mathcal{H}_{K_+}}
\newcommand{\Hkn}{\mathcal{H}_{K_-}}
\newcommand{\KX}{\B{K}_{\B{X}}}
\newcommand{\Kdot}{\B{K}(\B{X},\cdot)}
\newtheorem*{assumption*}{Assumption}
\date{}
\begin{document}
	
	\title{On Statistical Properties of Sharpness-Aware Minimization: Provable Guarantees}
	
	
\author{Kayhan Behdin,$^{\dagger}$  Rahul Mazumder$^{\dagger,\ddag}$ 
	\\[4pt]
	$^{\dagger}$\textit{MIT Operations Research Center, Cambridge, MA}
	\\[2pt]
	$^{\ddag}$\textit{MIT Sloan Schools of Management, Cambridge, MA}}
	
\maketitle
\begin{abstract}
	Sharpness-Aware Minimization (SAM) is a recent optimization framework aiming to improve the deep neural network generalization, through obtaining flatter (i.e. less sharp) solutions. As SAM has been numerically successful, recent papers have studied the theoretical aspects of the framework and have shown SAM solutions are indeed flat. However, there has been limited theoretical exploration regarding statistical properties of SAM. In this work, we directly study the statistical performance of SAM, and present a new theoretical explanation of why SAM generalizes well. To this end, we study two statistical problems, neural networks with a hidden layer and kernel regression, and prove under certain conditions, SAM has smaller prediction error over Gradient Descent (GD). Our results concern both convex and non-convex settings, and show that SAM is particularly well-suited for non-convex problems. Additionally, we prove that in our setup, SAM solutions are less sharp as well, showing our results are in agreement with the previous work. Our theoretical findings  are validated using numerical experiments on numerous scenarios, including deep neural networks.
\end{abstract}

\section{Introduction}\label{sec:intro}
Training Deep Neural Networks (DNN) can be challenging, as it requires minimizing non-convex loss functions with numerous local minima (and saddle points). 
As different local minima have different generalization properties, recent research has been focused on developing optimization methods and techniques that improve the quality of DNN training, leading to better generalization over unseen data. 

An important property of a DNN solution's landscape is its sharpness, which is defined as how rapidly the loss value changes locally. A flatter solution is a solution where the highest and the lowest loss values in the region do not differ too much. Sharpness measures in practice include the largest eigenvalue~\citep{sharpnesspaper} or trace~\citep{minimumsharpness} of the Hessian of the loss. Sharpness-Aware Minimization (SAM)~\citep{sampaper} is an optimization framework that builds on the observation that sharpness of the training loss correlates
with the generalization performance of a DNN. Specifically, flatter solutions to DNNs have been found to generalize better~\citep{flatness1,flatness2,flatness3,flatness4,sampaper}. Thus, in SAM, the loss function is modified, in a way that it encourages convergence to flatter regions of the loss. SAM has been shown to be empirically successful in numerous tasks~\citep{samvit,behdin2022improved,nlpmsam} and has been extended to several variations~\citep{gsampaper,samforfree}. Thus, there has been a growing interest in understanding the theoretical underpinnings of SAM. In this paper, our goal is to further the theoretical understanding of SAM, by exploring the implicit regularization implications of the algorithm dynamics.

\begin{figure}[t!]
	\centering
	\begin{tabular}{cccc}
		CIFAR100 & Theory, noiseless & CIFAR10-noisy &  Theory, noisy \\
		\includegraphics[width=0.2\linewidth,trim =0.8cm 0cm .8cm 0cm, clip = true]{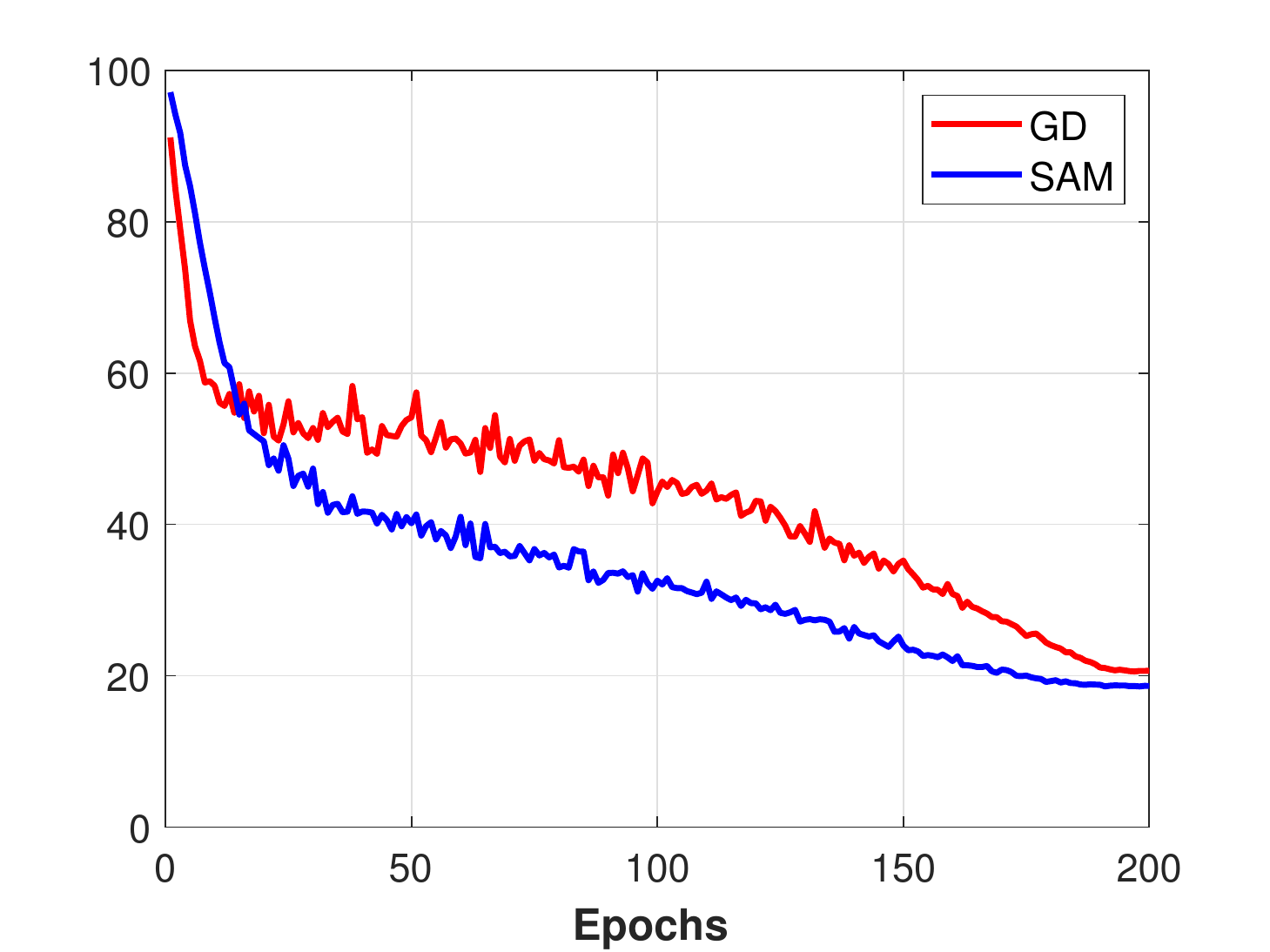}&
		\includegraphics[width=0.2\linewidth,trim =0.8cm 0cm .8cm 0cm, clip = true]{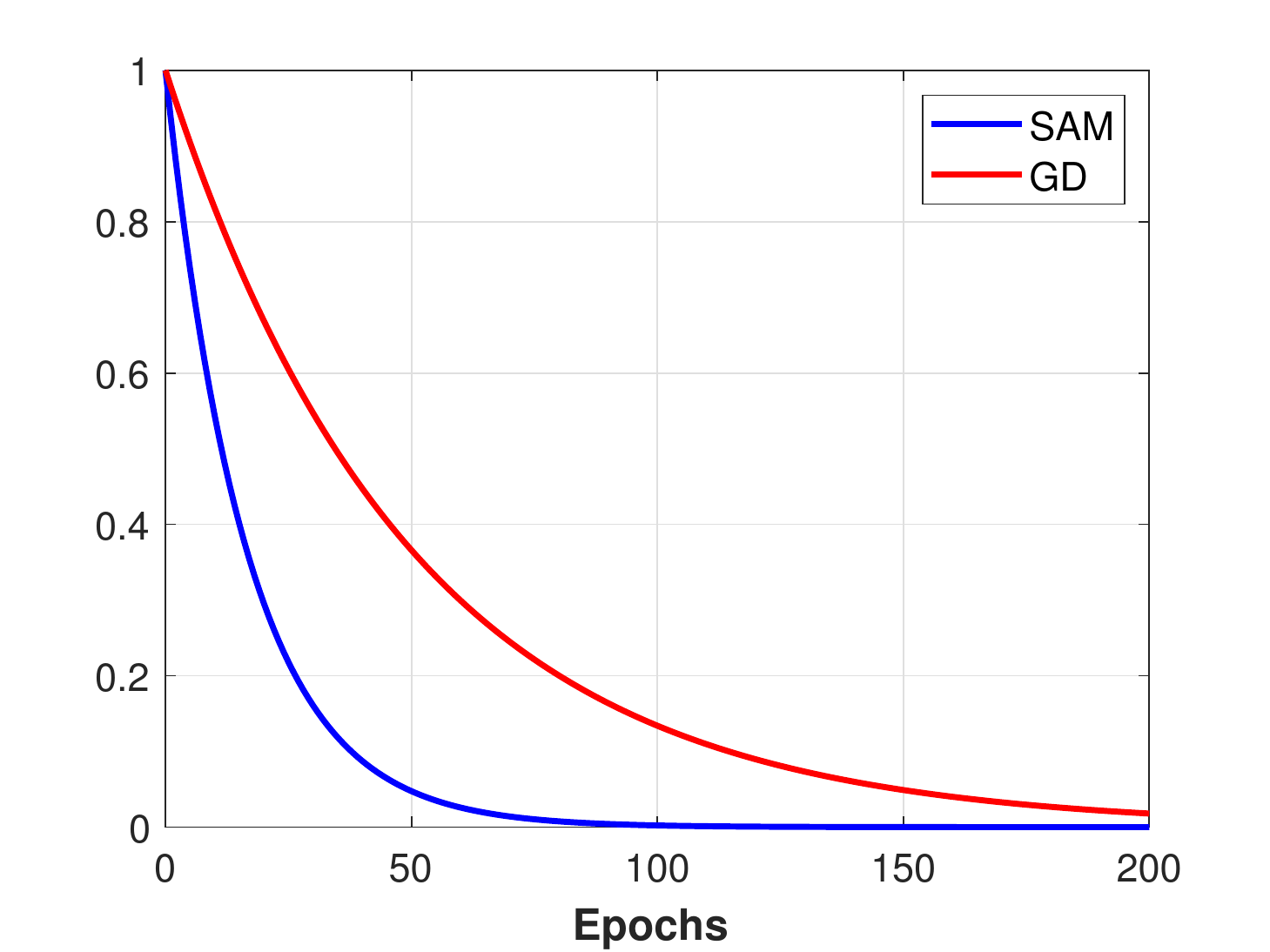}&
		\includegraphics[width=0.2\linewidth,trim =0.8cm 0cm .8cm 0cm, clip = true]{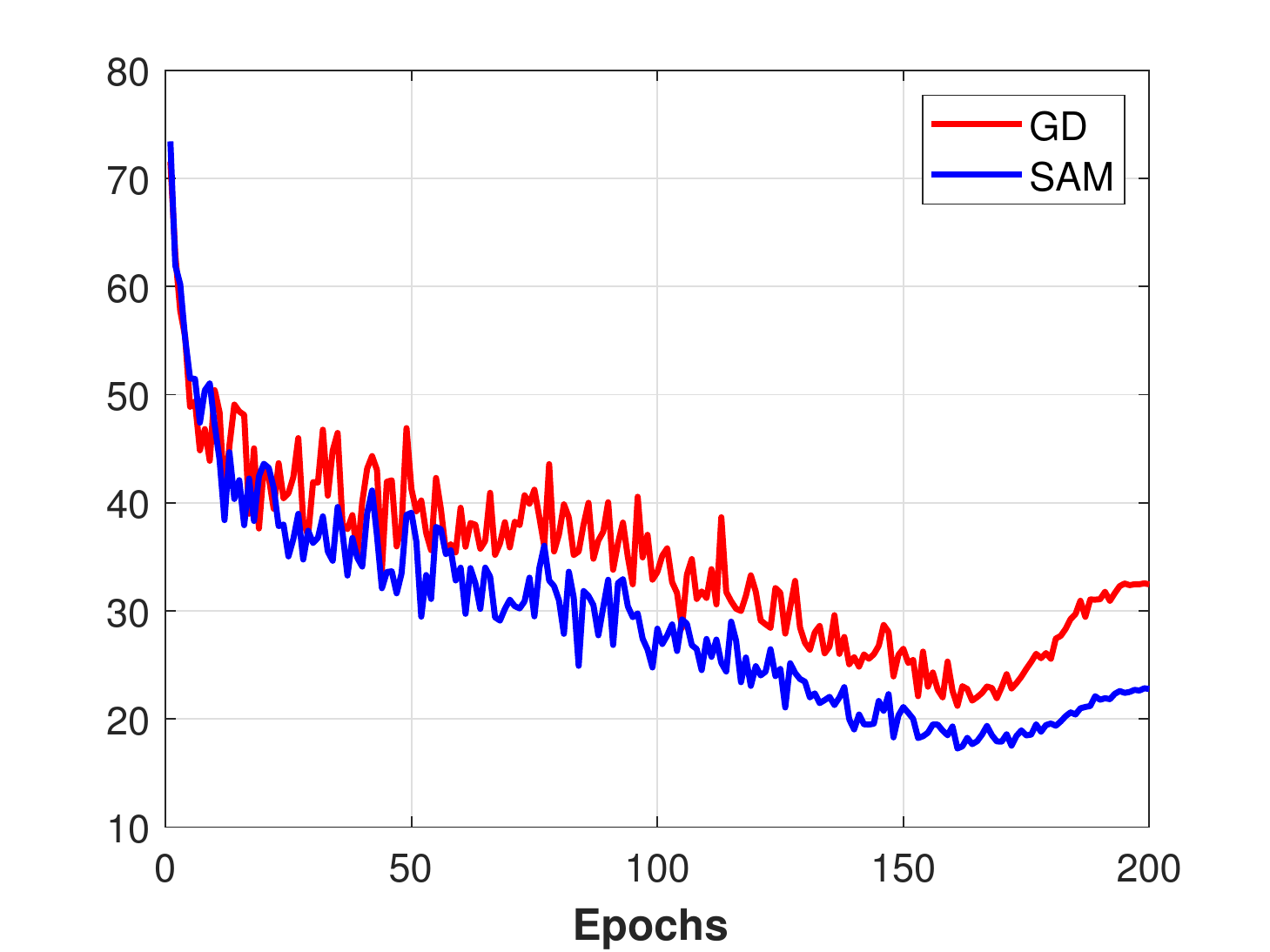}&
		\includegraphics[width=0.2\linewidth,trim =.8cm 0cm .8cm 0cm, clip = true]{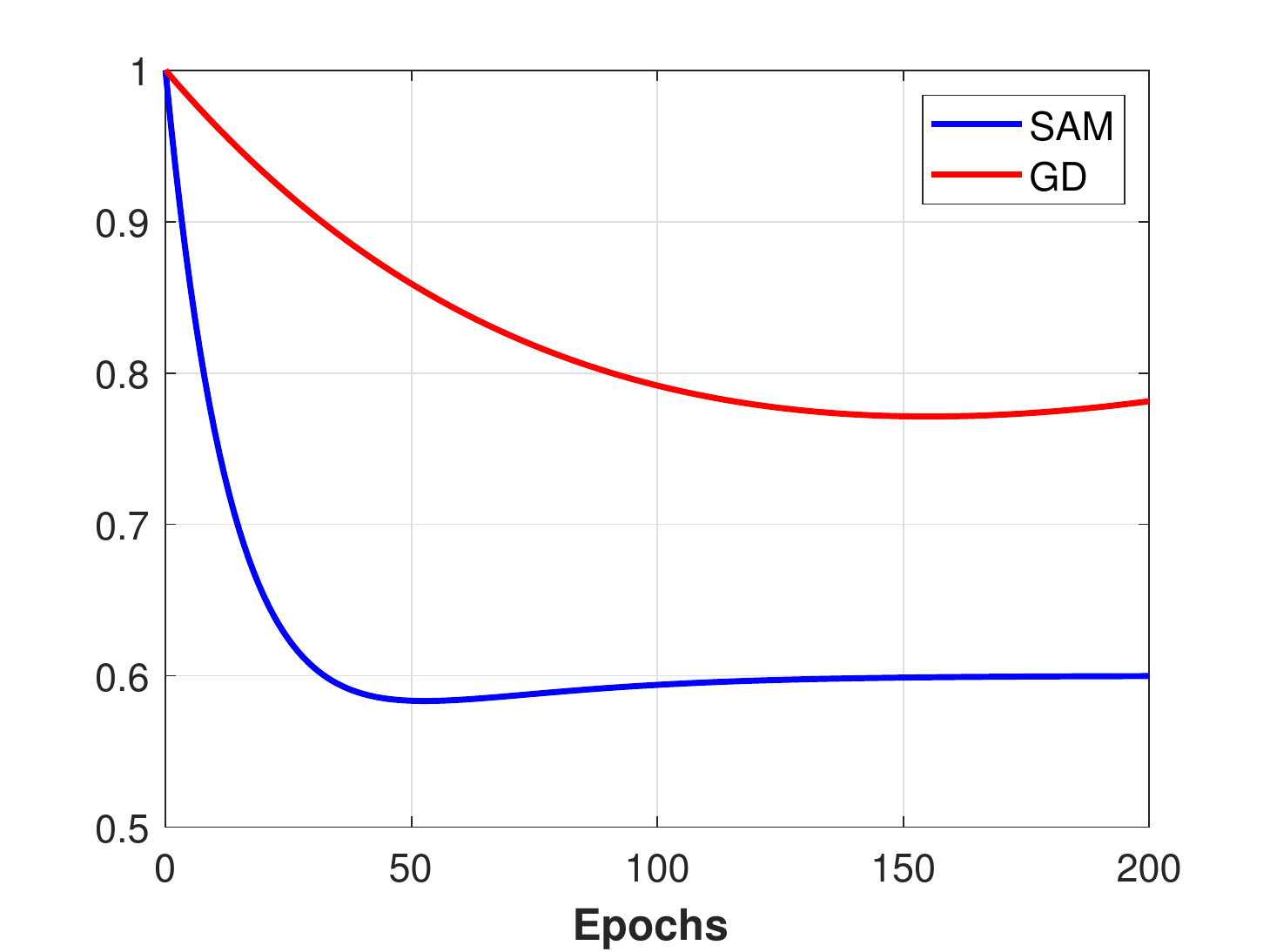} \\
		(a) & (b) & (c) & (d)
	\end{tabular}
	\caption{\small  Comparison of GD and SAM classification error on the validation set, over epochs for different cases. (a): CIFAR100 with clean labels on ResNet50. (b): Theoretical error curve for a particular model from our analysis for a noiseless case. (c): CIFAR10 with noisy training labels on ResNet50. (d): Theoretical error curve for a noisy case. Note that our theoretical plots capture the relative profile of real data.}
	\label{fig:intro}
\end{figure}

\textbf{Related Work.}~\cite{sampaper} introduced SAM and presented upper bounds for the generalization performance of SAM. Their bound suggests that SAM should generalize well, however, their result does not completely explain why SAM performs better than a vanilla training using Stochastic Gradient Descent (SGD). 
Most current papers explain the statistical performance of SAM through analyzing the SAM solution loss landscape and geometry, specially, its sharpness. Particularly, continuous time analysis (i.e. when the step size is infinitesimally small) has been used to show SAM can choose more sparse solutions~\citep{icmlpaper}, can regularize eigenvalues of the Hessian of the loss~\citep{optsam1} and eventually select flatter solutions~\citep{newsde}. The recent work of~\cite{optsam2} also explores the connections between SAM and variational inference, and how SAM seeks flatter solutions. \cite{neweigen} show that SAM regularizes the eigenvalues of the Hessian, resulting in a flatter solution. Another interesting work is by~\cite{sambounce} which explores SAM's trajectory for quadratic loss functions and explains how SAM can lead to flatter minima. Although the fact that SAM solutions are flatter partially explains the good generalization of SAM, we note that sharp minima can generalize as well~\citep{sharpcan,kaddour2022when} and sharpness can be generally manipulated by reparameterizing the network~\citep{minimumsharpness}.  This shows the need for a \emph{statistical} analysis of SAM, rather than a \emph{geometric} one.

\textbf{Summary of Results and Approach.} In a departure from the literature, we directly study the statistical properties and performance of SAM. To this end, we consider two statistical problems, a neural network with a hidden layer, and (indefinite) kernel regression, as kernel methods have been shown to be closely related to DNNs and understanding kernel methods is valuable in the DNN literature~\cite{kerneldeep1,kerneldeep2,kerneldeep3,kerneldeep4,kerneldeep5}. We present a crisp characterization of the prediction error of SAM and Gradient Descent (GD) for these two problems over the course of the algorithm, and show that under certain conditions, SAM can have a lower prediction error compared to GD. In our analysis, we study both convex and non-convex problems and show that SAM particularly works well in the non-convex cases, where GD might have unbounded error unlike SAM. Moreover, we show that SAM solutions in our setup tend to be flatter compared to GD, which theoretically shows the correlation between statistical performance and sharpness.

On a technical level, we characterize the SAM trajectory for the aforementioned problems and show a bias-variance trade-off for the prediction error of the algorithm, where bias generally decreases over iterations and variance increases. We show that SAM has a lower bias compared to GD, while GD's variance can be lower than SAM's. This shows SAM performs better when bias is the dominant term, for example when the noise is not too large or the total number of epochs is finite, as is in practice~\citep{early-deep}, specially for large models~\citep{undertrain1,undertrain2}. Moreover, we show that in non-convex settings, GD can have unbounded bias and variance while SAM is able to keep the error bounded, showing a better performance. Our numerical results on several models including deep neural networks agree with our theoretical insights. 

We use numerical experiments to illustrate some of our results. In Figure~\ref{fig:intro}(a), we compare SAM and GD classification error on the validation set over epochs when training ResNet50 network on CIFAR100 dataset (see Section~\ref{sec:Numerical} for more details on numerical experiments). We see that SAM has better accuracy over GD for almost all epochs, specially in earlier phases of training, which can be explained by our theory. As the training labels are not noisy in this case, bias is likely to be dominant and as we show, SAM's bias is less than GD's for all iterations under our model assumptions. In fact, in Figure~\ref{fig:intro}(b) we show the error plot calculated from our theory for a noiseless model\footnote{The details of plots in Figures~\ref{fig:intro}(b,d) are discussed in Appendix~\ref{app:toy}. The plots show error for a kernel regression problem with least-squares loss.}, which follows the same trends as Figure~\ref{fig:intro}(a), showing how our theory can explain the differences of GD/SAM in practice. In another case, we compare the performance of SAM and GD for CIFAR10 with training label noise in Figure~\ref{fig:intro}(c). Both methods perform worse in later epochs, which can be due to variance becoming dominant. However, GD performs even worse than SAM in the noisy setup. As we show, in the non-convex settings GD can have larger (and even unbounded) variance over SAM, which explains the performance gap seen here. Particularly, Figure~\ref{fig:intro}(d) plots the error from our theory for a noisy model, which again, shows similar trends to the real data plots, such as non-monotonicity of the error and the increasing gap between SAM and GD in later iterations.

We note that our approach is different from the previous work. Instead of studying geometric properties of SAM's solution such as its sharpness, which can partially explain why SAM generalizes better, we directly study the statistical performance of SAM. Hence, we present a direct explanation for SAM's performance in practice, rather than relying on the correlation between flatness and generalization.  Moreover, our analysis is different from previous work, which does not require us to assume the step size is infinitesimally small, unlike most current work~\citep{newsde,icmlpaper,optsam1,optsam2}. This provides insights for non-infitnesimal step sizes used in practice.

\textbf{Our contributions.} Our contributions in this paper can be summarized as follows:  \textbf{(i)} We study the statistical performance of SAM for one layer neural networks and (indefinite) kernel regression; \textbf{(ii)} We show that for these two problem classes, SAM has lower prediction error over GD under certain conditions, specially for non-convex settings; \textbf{(iii)} We show that in our settings, SAM tends to be flatter, confirming the correlation between generalization and flatness; \textbf{(iv)} We verify our theoretical findings using numerical experiments on synthetic and real data, and models including DNNs.

\section{SAM: An Overview}
Let $f:\R^p\mapsto \R$ be the objective function that we seek to minimize. In many machine learning applications in particular, we have $f(\B{w})=\sum_{i=1}^n f_i(\B{w})/n$ where $f_i$ is the loss value corresponding to the $i$-th observation. A standard approach to minimizing $f$ is the GD approach where the model parameters, or weights, are updated by the iterations
\begin{equation}\label{gd-update}
	\begin{aligned}
		\B{w}_{k+1}^{\gd} &=  \B{w}_{k}^{\gd} - \eta \nabla f( \B{w}_k^{\gd})
	\end{aligned}
\end{equation}
where $\eta>0$ is the step size or learning rate. In SAM~\citep{sampaper}, the goal is to find a flatter solution that does not fluctuate too much in a neighborhood of the solution. Therefore, SAM modifies $f$ as
\begin{equation}\label{fsamexact}
	f^{\sam}(\B{w})=\max_{\|\B{\varepsilon}\|_2\leq \rho}f(\B{w}+\B{\varepsilon})
\end{equation}
and GD is then applied over $f^{\sam}$, which captures the worst objective locally. The hope is that by minimizing $f^{\sam}$, a solution is found that does not perform bad locally, and hence the local loss function is flat. As calculating $f^{\sam}$ in closed form is difficult,~\cite{sampaper} suggest to approximate $f$ with a linear function, i.e.,
$$\argmax_{\|\B{\varepsilon}\|_2\leq \rho}f(\B{w}+\B{\varepsilon})\approx  \argmax_{\|\B{\varepsilon}\|_2\leq \rho}f(\B{w})+\B{\varepsilon}^T\nabla f(\B{w}).$$
The linear approximation leads to~\citep{sampaper}: 
$$f^{\sam}(\B{w})\approx f(\B{w}+\rho \nabla f(\B{w})/\|\nabla f(\B{w})\|_2 ).$$
Taking the gradient of this approximation and ignoring second order terms, the SAM updates are given as (we refer to~\cite{sampaper} for details of derivation)
\begin{equation}\label{sam-update}
	\begin{aligned}
		\B{w}_{k+1}^{\sam} &=  \B{w}_{k}^{\sam} - \eta \nabla f( \B{w}_k^{\sam}+\rho\nabla f( \B{w}_k^{\sam})).
	\end{aligned}
\end{equation}
We note that in~\eqref{sam-update}, we ignored the normalization of the inner gradient. But recent work~\citep{icmlpaper} has shown that the effect of such normalization can be neglected and we follow suit. We also note that our analysis in this work is done directly on~\eqref{sam-update} (based on the linear approximation to $f$) which is implemented in practice, unlike the original loss $f^{\sam}$ which is hard to compute. 

\subsection{Overview of Results}\label{sec:overview}
Throughout the paper, we assume $n$ data points $(y_i,\B{x}_i)_{i=1}^n$ are given with $\B{x}_i\in\R^d$. In our statistical model, each observation is $y_i=y_i^*+\epsilon_i$ where $y_i^*$ is the true noiseless observation and $\epsilon_i$'s are the zero-mean independent noise values with $\E[\B{\epsilon}\B{\epsilon}^T]=\sigma^2\B{I}$. We let $\Phi(\B{w};\B{x}_i)$ to be our predicted value for observation $i$, where $\B{w}\in\R^p$ parameterizes the model. We consider the least squares loss as
\begin{equation}\label{quadratic-loss}
	f(\B{w})=\frac{1}{2}\sum_{i=1}^n\left(y_i-\Phi(\B{w};\B{x}_i)\right)^2.
\end{equation}
The expected prediction error for a solution $\B{w}$ is therefore defined as 
\begin{equation}\label{bias-var-new}
	\error(\B{w})=\E_{\B{\epsilon}}\left[\frac{1}{n}\sum_{i=1}^n\left(y_i^*-\Phi(\B{w};\B{x}_i)\right)^2\right].
\end{equation}
One hence can decompose error as 
\begin{equation}\label{new-error}
	\error(\B{w})=\underbrace{\frac{1}{n}\sum_{i=1}^n\left(y_i^*-\E_{\B{\epsilon}}\left[\Phi(\B{w};\B{x}_i)\right]\right)^2}_{\bias(\B{w})} + \underbrace{\frac{1}{n}\E_{\B{\epsilon  }}\left[\sum_{i=1}^n\left(\Phi(\B{w};\B{x}_i)-\E_{\B{\epsilon}}\left[\Phi(\B{w};\B{x}_i)\right]\right)^2\right]}_{\var(\B{w})}.
\end{equation}
The bias term in~\eqref{new-error} captures how far the expected predicted value is from the true model, while the variance term is the variance of the prediction resulting from the noise. 

We discuss the details of models we study in Section~\ref{sec:def-nn} for the neural network model, and in Section~\ref{sec:def-kernel} for the kernel regression case. Our goal is to show that under certain condition, SAM has lower statistical error compared to GD.
To this end, we will characterize the bias and variance terms in~\eqref{new-error}. Specifically, we show that in all cases that we consider, SAM has a lower bias compared to GD. Moreover, SAM has higher variance in convex settings, but has significantly lower variance in non-convex settings. This quantifies that SAM is well-suited for non-convex problems.
\section{Statistical Models}\label{sec:models}
Before stating our results, we discuss two important statistical models that we consider and present a formal problem definition for each.

\subsection{Neural Networks with a Hidden Layer}\label{sec:def-nn}
Let $\phi(x):\R\mapsto\R$ be a possibly non-linear activation function. A neural network with one hidden layer and $L$ hidden neurons can be defined as $\Phi(\B{w};\B{x})=\sum_{l=1}^L\phi(\B{x}^T\B{w}^{(l)})$  where $\B{w}^{(l)}\in\R^d$ and $\B{w}=(\B{w}^{(1)},\cdots,\B{w}^{(L)})\in\R^{p}$ where $p=dL$. For the rest of the paper, we consider the ReLU as the activation function, $\phi(x)=\max(0,x)$. 
Let $\B{a}(\B{w};\B{x})=(\B{a}_1(\B{w};\B{x}),\cdots,\B{a}_L(\B{w};\B{x}))\in\R^p$ where for $l\in[L]$, we have $ \B{a}_l(\B{w};\B{x})\in\R^d$,
\begin{equation}
	\B{a}_l(\B{w};\B{x})=\begin{cases} \B{0} & \mbox{if } \B{x}^T\B{w}^{(l)}\leq 0 \\
		\B{x} & \mbox{if } \B{x}^T\B{w}^{(l)}> 0.
	\end{cases}
\end{equation}
Under this notation, for the ReLU activation we have $\Phi(\B{w};\B{x})=\B{a}(\B{w},\B{x})^T\B{w}$. We study the sequence $\B{w}_k^{\sam}$ from~\eqref{sam-update} where $f(\B{w})$ is given in~\eqref{quadratic-loss}. In particular, we let $\B{w}_k^{\gd}$ to be the sequence from~\eqref{gd-update} with $\rho=0$. We assume both SAM and GD use the same step size $\eta$ and they both start from an initial solution such as $\B{w}_0$. We also consider the following assumptions.

\begin{enumerate}[leftmargin=*,label=\textbf{(A\arabic*})]
	\item \label{assumption-relu} There exists $\bar{k}\geq 1$ such that for $0 \leq k\leq \bar{k}$ and $i\in[n]$, we have $$\B{a}(\B{w}_k^{\sam};\B{x}_i)=\B{a}(\B{w}_k^{\gd};\B{x}_i)=\B{a}(\B{w}_0;\B{x}_i).$$
	\item \label{assumption-underlying} There exists $\bar{\B{w}}\in\R^p$ such that $\B{a}(\bar{\B{w}};\B{x}_i)=\B{a}(\B{w}_0;\B{x}_i)$ for $i\in[n]$ and $y_i^*=\B{a}(\bar{\B{w}};\B{x}_i)^T\bar{\B{w}}$.
\end{enumerate}

Assumption~\ref{assumption-relu} states that the quantities $\B{x}^T\B{w}^{(l)}$ do not change sign over the course of the algorithm to avoid non-differentiability of ReLU. This can be ensured by choosing a sufficiently small step size or studying the method near a local minimum where the solution does not change significantly, a common approach to studying DNNs~\citep{flatness4,sharpnesspaper,flatness5}. Moreover, as $\B{a}(\B{w};\B{x})\in\R^{dL}$, Assumption~\ref{assumption-underlying} is likely to hold true if $L$ is sufficiently large (i.e. the total number of hidden neurons is large).

It is worth noting that if $\phi(x)=x$ and $L=1$, $\Phi(\B{w};\B{x})=\B{w}^T\B{x}$ which simplifies the model described above to the ordinary least-squares problem. By taking $\B{a}(\B{w};\B{x})=\B{x}$, we have $\Phi(\B{w};\B{x})=\B{a}(\B{w};\B{x})^T\B{w}=\B{x}^T\B{w}$ similar to the ReLU case above. Moreover, Assumption~\ref{assumption-relu} holds trivially for the linear regression case, and Assumption~\ref{assumption-underlying} simplifies to existence of $\bar{\B{w}}\in\R^d$ such that $y_i^*=\B{x}_i^T\bar{\B{w}}$ for $i\in[n]$ which is standard in the linear regression literature. Therefore, the framework developed here for ReLU networks can be readily applied to the linear regression problem.

\subsection{Kernel Regression} \label{sec:def-kernel}
Kernel methods and feature mappings have been a staple of machine learning algorithms in different applications~\citep{2008kernel}. Moreover, kernel methods have been studied to better understand optimization and generalization in machine learning~\citep{kernelgeneral}. This is specially interesting as a long line of work has explored connections and similarities between DNNs and kernels~\cite{kerneldeep1,kerneldeep2,kerneldeep3,kerneldeep4,kerneldeep5}, making the analysis of kernel methods even more important. Let $K:\R^d\times\R^d \mapsto \R$ be a kernel and $\B{X}\in\R^{n\times d}$ be the model matrix with rows of $\B{x}_1,\cdots,\B{x}_n$. We define the Gram matrix associated with this kernel and data as 
$\KX = [K(\B{x}_i,\B{x}_j)]$.
A classical assumption in kernel learning is that $K$ is Positive Semidefinite (PSD), that is $\KX$ is PSD for any $\B{X}\in\R^{n\times d}$ and $n\geq 1$. However, there has been a growing interest in learning with indefinite kernels as they often appear in practice due to noisy observations and/or certain data structures (see~\cite{indefinite1,indefinite2,indefinire3,indefinitenew} and references therein). Therefore, throughout this paper, we do not assume $K$ is PSD. In fact, we assume $K=K_+-K_-$ where $K_+,K_-$ are two PSD kernels, resulting in $K$ being indefinite. We use $\Hk$ to denote the Reproducing Kre\u{\i}n Kernel Space (RKKS) for which $K$ is the reproducing kernel. Note that $\Hk=\Hkp\bigoplus \Hkn$ where $\Hkp,\Hkn$ are Reproducing Kernel Hilbert Spaces (RKHS) associated with $K_+,K_-$ and $\bigoplus$ denotes orthogonal direct sum~\cite{indefinite1}. We also assume $K$ is symmetric, that is $K(\B{x}_i,\B{x}_j)=K(\B{x}_j,\B{x}_i)$ for all $\B{x}_i,\B{
	x}_j$.  Given pairs of observations $(y_i,
\B{x}_i)_{i=1}^n$, we seek to learn the function $h\in\Hk$ such that $h(\B{x}_i)\approx y_i$ for all $i$. To this end, for $h\in\Hk$ we define the loss
\begin{equation}
	L[h] = \frac{1}{2}\sum_{i=1}^n (h(\B{x}_i)-y_i)^2.
\end{equation}
We note that $L[h]$ is a function of $h\in\Hk$. The gradient of this loss then can be calculated as\footnote{We provide a short review of kernel gradients in Appendix~\ref{app:kernels}.}
\begin{equation}
	\nabla L[h] = \sum_{i=1}^n (h(\B{x}_i) -y_i)K(\B{x}_i,\cdot)\in\Hk
\end{equation}
where $K(\B{x},\cdot):\R^d\mapsto\R$ denotes the evaluation function, $K(\B{x},\cdot)(\B{y})=K(\B{x},\B{y})$. Although the SAM algorithm was introduced in the context of losses in $\R^p$, one can mimic SAM in the RKKS. Specifically, we define KernelSAM, an equivalent of SAM algorithm in the RKKS, by iterations
\begin{equation}\label{kernel-sam}
	h_{k+1}^{\sam}=h_k^{\sam} - \eta \nabla L[h_k^{\sam}+\rho\nabla L[h_k^{\sam}]].
\end{equation}
Our first result is a representer theorem for KernelSAM. For $\B{w}\in\R^n$, we will use the notation
$$\B
{w}^T\Kdot:=\sum_{i=1}^n w_i K(\B{x}_i,\cdot)\in\Hk.$$

\begin{theorem}\label{representerthm}
	Suppose $h_0^{\sam}=0$. Then, for $k\geq 1$, there exists $\B{w}_k^{\sam}\in\R
	^n$ such that
	$h_k^{\sam}=(\B{w}_k^{\sam})^T\Kdot$.\footnote{An explicit expression for updates can be found in~\eqref{beta-representer}.}
\end{theorem}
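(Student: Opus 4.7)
The plan is to prove this by induction on $k$, essentially mimicking the classical representer theorem argument but adapted to the two-step SAM update. The key observation is that $\nabla L[h]$ always lies in the span of $\{K(\B{x}_i,\cdot)\}_{i=1}^n$, because by definition $\nabla L[h] = \sum_{i=1}^n (h(\B{x}_i)-y_i) K(\B{x}_i,\cdot)$. So each application of the gradient operator keeps us inside this finite-dimensional subspace, and so does each linear combination of iterates with such gradients. Since SAM's update is built from exactly these operations (apply $\nabla L$, add to $h$, apply $\nabla L$ again, and subtract), the finite-dimensional subspace is preserved.

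The base case is immediate: $h_0^{\sam} = 0$ corresponds to $\B{w}_0^{\sam} = \B{0}$. For the inductive step, assume $h_k^{\sam} = (\B{w}_k^{\sam})^T \Kdot$. First I would evaluate $h_k^{\sam}$ at the data points, yielding $h_k^{\sam}(\B{x}_j) = (\KX \B{w}_k^{\sam})_j$, so that $\nabla L[h_k^{\sam}] = \B{u}^T \Kdot$ with $\B{u} = \KX \B{w}_k^{\sam} - \B{y}$. Then $h_k^{\sam} + \rho \nabla L[h_k^{\sam}] = (\B{w}_k^{\sam} + \rho \B{u})^T \Kdot$, which again has the representer form. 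Applying $\nabla L$ once more gives $\nabla L[h_k^{\sam} + \rho\nabla L[h_k^{\sam}]] = (\KX(\B{w}_k^{\sam} + \rho\B{u}) - \B{y})^T \Kdot$. Plugging into the KernelSAM recursion~\eqref{kernel-sam} and using linearity of $\B{w} \mapsto \B{w}^T\Kdot$, I obtain
\begin{equation*}
\B{w}_{k+1}^{\sam} = \B{w}_k^{\sam} - \eta\bigl(\KX(\B{w}_k^{\sam} + \rho(\KX\B{w}_k^{\sam} - \B{y})) - \B{y}\bigr) = (\B{I} - \eta\KX - \eta\rho \KX^2)\B{w}_k^{\sam} + \eta(\B{I} + \rho\KX)\B{y},
\end{equation*}
which is the explicit recursion referenced in footnote~\eqref{beta-representer}. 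This completes the induction.

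There is no real obstacle here. The only mild subtlety is working in the Kre\u{\i}n space $\Hk$ rather than a Hilbert space, but since nothing in the argument uses positive-definiteness of the inner product (we only need linearity of the gradient operator and the explicit formula for $\nabla L$), the derivation carries over verbatim. The second subtlety is just bookkeeping: the gradient appears twice per iteration in SAM, but since both evaluations produce elements of $\mathrm{span}\{K(\B{x}_i,\cdot)\}$, the closure under the update is preserved. I would briefly flag that the closed-form recursion for $\B{w}_k^{\sam}$ will also be useful downstream for analyzing the bias-variance decomposition, since the map $\B{w}_k^{\sam} \mapsto \B{w}_{k+1}^{\sam}$ is affine in $\B{w}_k^{\sam}$, which lets one diagonalize in the eigenbasis of $\KX$ in later arguments.
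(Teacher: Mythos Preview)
Your proposal is correct and follows essentially the same inductive argument as the paper: both use the observation that $\nabla L[\cdot]$ always lands in $\mathrm{span}\{K(\B{x}_i,\cdot)\}$, set up the base case $\B{w}_0^{\sam}=\B{0}$, and then push the induction through the two gradient evaluations in the SAM step. You go slightly further by simplifying the recursion all the way to the closed matrix form $(\B{I}-\eta\KX-\eta\rho\KX^2)\B{w}_k^{\sam}+\eta(\B{I}+\rho\KX)\B{y}$, which the paper defers to a separate lemma, but this is only an organizational difference.
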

Theorem~\ref{representerthm} shows that at each iteration, the SAM solution can be represented as a linear combination of $K(\B{x}_i,\cdot)$ which allows us to directly study $\B{w}_k^{\sam}$. Therefore, using the notation from Section~\ref{sec:overview}, 
\begin{equation}\label{kernel-phi}
	\Phi(\B{w}_k^{\sam};\B{x})=\sum_{j=1}^n (w_k^{\sam})_jK(\B{x}_j,\B{x})=h_k^{\sam}(\B{x}).
\end{equation}

Similar to the case of ReLU networks, we seek to characterize the error for KernelSAM. To this end, we assume the model is well-specified and there exists $\bar{\B{w}}\in\R^n$ such that 
\begin{equation}
	y_i=\underbrace{\sum_{j=1}^n\bar{w}_j K(\B{x}_j,\B{x}_i)}_{y_i^*} + \epsilon_i
\end{equation}
where $\epsilon_i$'s are the noise values, independent of $\B{X}$, with the property $\E[\B{\epsilon}]=\B{0}$ and $\E[\B{\epsilon}\B{\epsilon}^T]=\sigma^2\B{I}$. 
With this notation, we let $\bar{h}=\bar{\B{w}}^T\Kdot$ to be the noiseless estimator.  The expected error for $h_k^{\sam}=(\B{w}_k^{\sam})^T\Kdot\in\Hk$ is defined as 
$$\error(\B{w}_k^{\sam})=\E_{\B{\epsilon}}\left[\frac{1}{n}\sum_{i=1}^n\left(\bar{h}(\B{x}_i)-h_k^{\sam}(\B{x}_i)\right)^2\right]=\E_{\B{\epsilon}}\left[\frac{1}{n}\sum_{i=1}^n\left(y_i^*-\Phi(\B{w}_k^{\sam};\B{x}_i)\right)^2\right]$$
with $\Phi(\cdot;\cdot)$ defined in~\eqref{kernel-phi}. Our final result in this section shows that under the model discussed here, KernelSAM is equivalent to applying SAM on a (non-convex) quadratic objective.
\begin{theorem}\label{thm:kernel-quad}
	The solution $\B{w}_k^{\sam}$ defined in Theorem~\ref{representerthm} follows~\eqref{sam-update} where 
	\begin{equation}\label{kernel-loss}
		f(\B{w})=\frac{1}{2}(\B{w}-\bar{\B{w}})^T\KX(\B{w}-\bar{\B{w}})-\B{w}^T\B{\epsilon}.  
	\end{equation}
\end{theorem}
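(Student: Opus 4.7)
The plan is to carry out the KernelSAM update in the $n$-dimensional coefficient space supplied by Theorem~\ref{representerthm} and match the resulting recursion against the SAM update~\eqref{sam-update} for the claimed quadratic $f$. By Theorem~\ref{representerthm} we may write $h_k^{\sam} = (\B{w}_k^{\sam})^T \Kdot$ for all $k \geq 0$ (with $\B{w}_0^{\sam} = \B{0}$), so it suffices to read off the update rule satisfied by the coefficient sequence $\{\B{w}_k^{\sam}\}$ and verify it is precisely~\eqref{sam-update} for the stated $f$.

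First I would compute $\nabla L$ on functions of representer form. If $h = \B{w}^T \Kdot$ then $h(\B{x}_i) = (\KX \B{w})_i$, so the gradient formula $\nabla L[h] = \sum_i (h(\B{x}_i) - y_i) K(\B{x}_i, \cdot)$ gives
$$\nabla L[\B{w}^T \Kdot] = (\KX \B{w} - \B{y})^T \Kdot.$$
Applying this identity twice to unpack the KernelSAM iteration~\eqref{kernel-sam}, the inner argument becomes $h_k^{\sam} + \rho \nabla L[h_k^{\sam}] = \bigl(\B{w}_k^{\sam} + \rho (\KX \B{w}_k^{\sam} - \B{y})\bigr)^T \Kdot$, and the outer gradient then equals $\bigl(\KX(\B{w}_k^{\sam} + \rho(\KX \B{w}_k^{\sam} - \B{y})) - \B{y}\bigr)^T \Kdot$. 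Reading off coefficients yields the induced recursion
$$\B{w}_{k+1}^{\sam} = \B{w}_k^{\sam} - \eta\bigl[\KX\bigl(\B{w}_k^{\sam} + \rho(\KX \B{w}_k^{\sam} - \B{y})\bigr) - \B{y}\bigr].$$

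Next I would match this to the SAM update for the stated $f$. Using the model equation $\B{y} = \KX \bar{\B{w}} + \B{\epsilon}$, the residual rearranges as $\KX \B{w} - \B{y} = \KX(\B{w} - \bar{\B{w}}) - \B{\epsilon}$, which is exactly $\nabla f(\B{w})$ for $f(\B{w}) = \tfrac{1}{2}(\B{w}-\bar{\B{w}})^T \KX (\B{w}-\bar{\B{w}}) - \B{w}^T \B{\epsilon}$. Substituting this identification into the coefficient recursion collapses it to $\B{w}_{k+1}^{\sam} = \B{w}_k^{\sam} - \eta \nabla f\bigl(\B{w}_k^{\sam} + \rho \nabla f(\B{w}_k^{\sam})\bigr)$, which is~\eqref{sam-update}.

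The argument is essentially a direct computation, so the only real subtlety concerns consistency of the coefficient representation: when $\KX$ is singular (which can happen for indefinite $K$) the vector $\B{w}_k^{\sam}$ representing $h_k^{\sam}$ need not be unique. The cleanest way to handle this is to \emph{define} $\B{w}_k^{\sam}$ by the recursion derived above starting from $\B{w}_0^{\sam} = \B{0}$ — which is what the proof of Theorem~\ref{representerthm} already does (see~\eqref{beta-representer}) — and then verify inductively that with this choice $h_k^{\sam} = (\B{w}_k^{\sam})^T \Kdot$ holds in the RKKS. Under that convention the derivation above is valid term-by-term and yields the theorem.
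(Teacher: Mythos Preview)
Your proposal is correct and follows essentially the same route as the paper: compute the KernelSAM update in coefficient space and match it to the SAM recursion for the stated quadratic. The paper packages the computation slightly differently---it first expands the coefficient recursion into the explicit matrix form $\B{w}_{k+1}^{\sam} = (\B{I}-\eta \KX -\eta\rho\KX^2)\B{w}_k^{\sam}+\eta(\B{I}+\rho\KX)(\KX\bar{\B{w}}+\B{\epsilon})$ (their Lemma~E.1) and then compares this against a preliminary result giving the SAM recursion for a generic quadratic---whereas you identify $\KX\B{w}-\B{y}=\nabla f(\B{w})$ directly and collapse the recursion in one step; but this is a cosmetic difference, not a different argument.
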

As we study indefinite kernels, $\KX$ might be indefinite and therefore $f(\B{w})$ in Theorem~\ref{thm:kernel-quad} can be non-convex. This shows that our analysis of SAM applies to both convex (as in the linear regression case discussed in Section~\ref{sec:def-nn} and PSD kernels) and non-convex functions, as for indefinite kernels.

\section{Main Results}\label{sec:mainres}


\subsection{ReLU Networks}
In this section, we review our theoretical results for the ReLU networks discussed in Section~\ref{sec:def-nn}. We note that as discussed, this model also readily applies to the least-squares linear regression problem and therefore, we do not study that problem separately. Let $\B{A}\in\R^{n\times p}$ be the matrix with $i$-th row equal to $\B{a}(\B{w}_0;\B{x}_i)$. Let us consider the following Singular Value Decomposition (SVD) of $\B{A}$,
$
\B{A}=\B{V}\B{\Sigma}\B{U}^T = \begin{bmatrix}\B{V}_1 & \B{V}_2
\end{bmatrix}\begin{bmatrix}\B{\Sigma}_1 & \\ & \B{0}
\end{bmatrix}\begin
{bmatrix}\B{U}_1^T \\ \B{U}_2^T
\end{bmatrix}$
where $\B{\Sigma}_1\in\R^{r\times r}$ collects nonzero singular values of $\B{A}$ and $r$ is the rank of $\B{A}$. We let $\B{D}_1=\B{\Sigma}_1^2$. Theorem~\ref{thm-relu} characterizes the error for the neural model discussed in Section~\ref{sec:def-nn}.
\begin{theorem}\label{thm-relu}
Suppose $\B{w}_0=\B{U}_1\B{U}_1^T\B{w}_0$ and $\B{0}\prec \B{I}-\eta\B{D}_1-\eta\rho\B{D}_1^2\prec \B{I}$ and let $\B{u}=\B{U}_1^T(\bar{\B{w}}-\B{w}_0)$. Then, under the model from Section~\ref{sec:def-nn} one has for $k\leq \bar{k}$
\begin{equation}
	\begin{aligned}\label{linear-solution-thm1}
		\bias(\B{w}_k^{\sam})&=\frac{1}{n}\sum_{i=1}^r (1-\eta d_i-\eta\rho d_i^2)^{2k}d_i u_i^2 \\
		\var(\B{w}_k^{\sam})&=\frac{\sigma^2}{n}\tr\left( \left(\B{I}-(\B{I}-\eta\B{D}_1-\eta\rho\B{D}_1^2)^k\right)^2 \right).
	\end{aligned}
\end{equation}  
In particular, for $\bar{k}\geq k\geq 0$ one has $ \bias(\B{w}_k^{\sam})\leq  \bias(\B{w}_k^{\gd})$ and $\var(\B{w}_k^{\sam})\geq \var(\B{w}_k^{\gd})$.
\end{theorem}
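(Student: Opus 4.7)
The plan is to leverage Assumption~\ref{assumption-relu} to reduce the SAM/GD dynamics to a linear recursion and then diagonalize it via the SVD of $\B{A}$. Under~\ref{assumption-relu} we have $\Phi(\B{w};\B{x}_i) = \B{a}(\B{w}_0;\B{x}_i)^T \B{w}$ for all iterates with $k \le \bar{k}$, so the loss is the quadratic $f(\B{w}) = \tfrac12 \|\B{y} - \B{A}\B{w}\|_2^2$ with gradient $\nabla f(\B{w}) = \B{M}\B{w} - \B{A}^T\B{y}$, where $\B{M} := \B{A}^T\B{A}$. Because this gradient is affine in $\B{w}$, one gets the exact identity $\nabla f(\B{w} + \rho \nabla f(\B{w})) = (\B{I} + \rho \B{M})\nabla f(\B{w})$, so the SAM update~\eqref{sam-update} becomes the \emph{exact} linear recursion
\[
\B{w}_{k+1}^{\sam} = \B{T}\,\B{w}_k^{\sam} + \eta(\B{I} + \rho\B{M})\B{A}^T\B{y}, \qquad \B{T} := \B{I} - \eta\B{M} - \eta\rho\B{M}^2.
\]
Using~\ref{assumption-underlying} to write $\B{y} = \B{A}\bar{\B{w}} + \B{\epsilon}$, the error $\B{r}_k := \bar{\B{w}} - \B{w}_k^{\sam}$ then satisfies $\B{r}_{k+1} = \B{T}\B{r}_k - \eta(\B{I} + \rho\B{M})\B{A}^T\B{\epsilon}$, and GD is recovered by setting $\rho = 0$.

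Next I would substitute the SVD $\B{A} = \B{V}\B{\Sigma}\B{U}^T$. Since $\B{M} = \B{U}_1\B{D}_1\B{U}_1^T$ and $\B{A}^T = \B{U}_1\B{\Sigma}_1\B{V}_1^T$ both have range contained in $\mathrm{range}(\B{U}_1)$, and the hypothesis $\B{w}_0 = \B{U}_1\B{U}_1^T\B{w}_0$ puts the initialization in this subspace, the whole trajectory stays in $\mathrm{range}(\B{U}_1)$; moreover the $\B{U}_2$-component of $\B{r}_k$ is frozen but is annihilated by the prediction operator $\B{A}$, so it does not affect the error. Projecting via $\tilde{\B{r}}_k := \B{U}_1^T\B{r}_k$ diagonalizes the recursion:
\[
\tilde{\B{r}}_{k+1} = \tilde{\B{T}}\,\tilde{\B{r}}_k - \eta(\B{I}+\rho\B{D}_1)\B{\Sigma}_1\B{V}_1^T\B{\epsilon}, \qquad \tilde{\B{T}} := \B{I} - \eta\B{D}_1 - \eta\rho\B{D}_1^2, \qquad \tilde{\B{r}}_0 = \B{u}.
\]
Taking expectation gives $\E\tilde{\B{r}}_k = \tilde{\B{T}}^k \B{u}$, and since $\sum_i (y_i^* - \Phi(\B{w}_k^{\sam};\B{x}_i))^2 = \|\B{A}\B{r}_k\|_2^2 = \|\B{\Sigma}_1\tilde{\B{r}}_k\|_2^2$, the bias formula follows immediately by squaring the entries of $\B{\Sigma}_1\tilde{\B{T}}^k \B{u}$.

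For the variance I would exploit the geometric-series identity
\[
\sum_{j=0}^{k-1} \tilde{\B{T}}^j = \bigl(\eta\B{D}_1(\B{I}+\rho\B{D}_1)\bigr)^{-1}\bigl(\B{I} - \tilde{\B{T}}^k\bigr),
\]
so that the factor $(\B{I}+\rho\B{D}_1)$ coming from the noise coefficient cancels cleanly, leaving $\tilde{\B{r}}_k - \E\tilde{\B{r}}_k = -\B{\Sigma}_1^{-1}(\B{I} - \tilde{\B{T}}^k)\B{V}_1^T\B{\epsilon}$. Multiplying by $\B{\Sigma}_1$, squaring, and using $\B{V}_1^T\B{V}_1 = \B{I}_r$ together with $\E[\B{\epsilon}\B{\epsilon}^T] = \sigma^2\B{I}$ gives the stated trace formula. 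Finally the two inequalities follow termwise from the spectral hypothesis $\B{0}\prec\tilde{\B{T}}\prec\B{I}$: for each $i$ with $d_i>0$ one has $0 < 1-\eta d_i - \eta\rho d_i^2 < 1-\eta d_i < 1$, so the bias summands $(1-\eta d_i - \eta\rho d_i^2)^{2k} d_i u_i^2$ are smaller for SAM, while the variance summands $\bigl(1-(1-\eta d_i - \eta\rho d_i^2)^k\bigr)^2$ are larger.

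The only real obstacle is bookkeeping rather than any deep technical step: one must carefully track that the orthogonal-complement component of $\B{r}_k$ is killed by $\B{A}$ so the projected recursion captures the full prediction error, and verify the pleasant cancellation of $(\B{I}+\rho\B{D}_1)$ in the variance calculation via the geometric-series identity, which is what makes the final formula depend on $\rho$ only through the single matrix $\tilde{\B{T}}$ and thereby makes the termwise SAM--vs--GD comparison transparent.
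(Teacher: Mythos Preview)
Your proposal is correct and follows essentially the same route as the paper's proof: both reduce, via Assumptions~\ref{assumption-relu}--\ref{assumption-underlying}, to the quadratic loss $\tfrac12\|\B{y}-\B{A}\B{w}\|_2^2$, derive the exact linear SAM recursion, diagonalize through the SVD of $\B{A}$, sum the resulting geometric series (where the factor $\B{I}+\rho\B{D}_1$ cancels), and finish with the same termwise comparison. The only organizational difference is that you track the error vector $\B{r}_k=\bar{\B{w}}-\B{w}_k$ and project it onto $\mathrm{range}(\B{U}_1)$, whereas the paper first writes $\B{w}_k^{\sam}$ in closed form via the preliminary Theorem~\ref{thm-prelim} and then simplifies; the content is the same.
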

We note that Theorem~\ref{thm-relu} is applicable to GD by setting $\rho=0$. Theorem~\ref{thm-relu} precisely characterizes the expected SAM trajectory, and its corresponding bias and variance terms for the neural network model. 
Specifically,
we see that bias for SAM in each iteration is smaller than GD, while the variance for SAM is larger. We note that as $k$ increases, the bias term decreases while the variance increases. Therefore, if the optimization is run for finitely many steps, the bias term is more likely to be the dominant term and as SAM has a lower bias, SAM is more likely to outperform GD. This intuitive argument is formalized in Proposition~\ref{lin-gen-prop}.

\begin{proposition}\label{lin-gen-prop}
Suppose there exists a numerical constant $c_0>1$ such that
\begin{equation}\label{lin-gen-prop-cond}
	\begin{aligned}
		1-\eta d_r  \leq c_0(1-\eta d_1 - \eta\rho d_1^2),~~
		1-\eta d_1 \geq \sqrt{c_0} (1-\eta d_r -\eta\rho d_r^2).
	\end{aligned}
\end{equation}
Let $\snr=\|\B{X}(\bar{\B{w}}-\B{w}_0)\|_2^2/r\sigma^2$ and assume $\snr\geq 1$. Under the assumptions of Theorem~\ref{thm-relu}, if
$$k\leq \frac{\log[2/(\snr + 1)]}{\log[(1-\eta d_1-\eta\rho d_1^2)^2/(1-\eta d_r -\eta\rho d_r^2)]}\land \bar{k}$$
one has
$\error(\B{w}_{k}^{\sam})\leq \error(\B{w}_{k}^{\gd})$.
\end{proposition}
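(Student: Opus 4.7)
The plan is to compare $\error(\B{w}_k^{\sam})$ and $\error(\B{w}_k^{\gd})$ directly from the closed-form expressions in Theorem~\ref{thm-relu}, and to translate the stated range of $k$ into the inequality $\error(\B{w}_k^{\sam})\leq \error(\B{w}_k^{\gd})$. For brevity, write $\alpha_i=1-\eta d_i$ for the GD multiplier and $\beta_i=1-\eta d_i-\eta\rho d_i^2$ for the SAM multiplier; both lie in $(0,1)$ by the hypothesis of Theorem~\ref{thm-relu}. Using the elementary factorizations $\alpha_i^{2k}-\beta_i^{2k}=(\alpha_i^k-\beta_i^k)(\alpha_i^k+\beta_i^k)$ and $(1-\beta_i^k)^2-(1-\alpha_i^k)^2=(\alpha_i^k-\beta_i^k)(2-\alpha_i^k-\beta_i^k)$, I would rewrite
\[
n\bigl[\error(\B{w}_k^{\gd})-\error(\B{w}_k^{\sam})\bigr]=\sum_{i=1}^{r}(\alpha_i^{2k}-\beta_i^{2k})(d_i u_i^2+\sigma^2)-2\sigma^2\sum_{i=1}^{r}(\alpha_i^k-\beta_i^k),
\]
so that the goal reduces to showing the right-hand side is nonnegative.

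Next I would extract uniform two-sided bounds relating $\alpha_i$ and $\beta_i$. Ordering $d_1\geq\cdots\geq d_r>0$, the sequences $\alpha_\cdot$ and $\beta_\cdot$ are both non-decreasing in $i$, so the two inequalities $\alpha_r\leq c_0\beta_1$ and $\alpha_1\geq\sqrt{c_0}\,\beta_r$ in~\eqref{lin-gen-prop-cond} propagate coordinate-wise to $\sqrt{c_0}\,\beta_i\leq \alpha_i\leq c_0\,\beta_i$ for every $i$. Raising to the $k$-th power yields $\alpha_i^k+\beta_i^k\geq (c_0^{k/2}+1)\beta_i^k$ and $(c_0^{k/2}-1)\beta_i^k\leq \alpha_i^k-\beta_i^k\leq (c_0^k-1)\beta_i^k$, which combined with $\beta_i\geq\beta_1$ on one side and $\beta_i\leq\beta_r$ on the other gives
\[
\alpha_i^{2k}-\beta_i^{2k}\geq (c_0^k-1)\beta_1^{2k},\qquad \alpha_i^k-\beta_i^k\leq (c_0^k-1)\beta_r^k.
\]

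Substituting these bounds into the identity above, and using the SNR identity $\sum_i d_i u_i^2=r\sigma^2\,\snr$ (which comes from the SVD of the effective design matrix $\B{A}$ together with Assumption~\ref{assumption-underlying}, so that $\|\B{A}(\bar{\B{w}}-\B{w}_0)\|_2^2=\sum_i d_i u_i^2$), the common positive factor $c_0^k-1$ cancels in both sums, and the desired inequality reduces to $\beta_1^{2k}(\snr+1)\geq 2\beta_r^k$, equivalently
\[
\frac{\beta_1^{2k}}{\beta_r^k}\geq \frac{2}{\snr+1}.
\]
Since $\beta_1<1$ forces $\beta_1^2<\beta_r$, so $\log(\beta_1^2/\beta_r)<0$, and $\snr\geq 1$ makes $\log[2/(\snr+1)]\leq 0$, taking logarithms and dividing by the negative quantity flips the inequality and produces exactly the upper bound on $k$ stated in the proposition; the cap $k\leq\bar{k}$ preserves applicability of Theorem~\ref{thm-relu}. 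The only delicate step in this plan is arranging the estimates so that the factor $c_0^k-1$ appears with the same exponent on both sides and cancels cleanly, leaving the clean logarithmic condition; everything else is bookkeeping with monotone scalar sequences and elementary algebra.
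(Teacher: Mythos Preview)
Your argument is correct and lands on the same scalar reduction $\beta_1^{2k}(\snr+1)\geq 2\beta_r^k$ as the paper, but by a more direct route. The paper first establishes separate upper and lower envelopes for $\error(\B{w}_k^{\sam})-\sigma^2 r/n$ and $\error(\B{w}_k^{\gd})-\sigma^2 r/n$ via uniform eigenvalue bounds (its Lemma~\ref{lemma-lbub}), then uses~\eqref{lin-gen-prop-cond} to show $\error(\B{w}_k^{\gd})-\sigma^2 r/n\geq c_0^k\bigl(\error(\B{w}_k^{\sam})-\sigma^2 r/n\bigr)$, and finally invokes a dedicated auxiliary statement (its Lemma~\ref{lem-prop2-helper}) to verify $\error(\B{w}_k^{\sam})\geq\sigma^2 r/n$ on the stated range of $k$. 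You instead work with the difference $\error^{\gd}-\error^{\sam}$ from the outset, propagate~\eqref{lin-gen-prop-cond} coordinate-wise to $\sqrt{c_0}\,\beta_i\leq\alpha_i\leq c_0\,\beta_i$, and arrange the estimates so that the factor $c_0^k-1$ appears with the same exponent on both sides and drops out; this collapses the paper's two lemmas into a single line. The paper's detour buys a reusable intermediate bound on $\error^{\sam}$, while your argument is more self-contained and avoids centering around $\sigma^2 r/n$ altogether.
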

Proposition~\ref{lin-gen-prop} shows that, assuming noise is not too large, SAM has a lower error compared to GD if the optimization is run for finitely many steps.  
\begin{remark}
As noted, in practice DNNs are trained for a limited number of epochs~\citep{early-deep}, and it is believed~\citep{undertrain1,undertrain2} recent large neural networks, specially language models, tend to be undertrained due to resources limitations. This shows the assumption that $k$ is finite is realistic.  
\end{remark}

\begin{remark}
An interesting special case of Theorem~\ref{thm-relu} is the noiseless case where $\sigma=0$. We note that Theorem~\ref{thm-relu} implies that SAM has a lower error than GD for all iterations $k\geq 1$ for this case.
\end{remark}
\begin{remark}
In Appendix~\ref{app:props}, we discuss the selection of $\eta,\rho$ to ensure condition~\eqref{lin-gen-prop-cond} holds. On a high level, condition~\eqref{lin-gen-prop-cond} suggests taking $\rho\geq \eta$ to take advantage of SAM performance.
\end{remark}
\begin{remark}
Proposition~\ref{lin-gen-prop} suggests that the total number of iterations should be smaller in noisy cases. As we demonstrate numerically in Section~\ref{sec:Numerical}, this is necessary to avoid overfitting to noise.
\end{remark}
\subsection{Kernel Regression}
Assume the eigenvalue decomposition $\KX=\B{U}\B{D}\B{U}^T$. For simplicity, we assume $\rank(\KX)=n$. We let $\B{U}_1,\B{D}_1$ and $\B{U}_2,\B{D}_2$ collect eigenvectors and eigenvalues of $\KX$ corresponding to positive and negative eigenvalues, respectively. We also let $\B{D}=\diag(d_1,\cdots,d_n)$ with $d_1\geq \cdots \geq d_n$.
\begin{theorem}\label{kernel-bias-var-thm}
Suppose $h_0^{\sam}=0$ and let $\B{u}=\B{U}^T{\bar{\B{w}}}$. Then, $\var(\B{w}_k^{\sam})=\var^+(\B{w}_k^{\sam})+\var^-(\B{w}_k^{\sam})$ where
\begin{equation}\label{kernel-thm-eq}
	\begin{aligned}
		\bias(\B{w}_k^{\sam})&=\frac{1}{n}\sum_{i=1}^n (1-\eta d_i-\eta\rho d_i^2)^{2k}d_i^2 u_i^2 \\
		\var^+(\B{w}_k^{\sam})&=\frac{\sigma^2}{n}\tr\left( \left(\B{I}-(\B{I}-\eta\B{D}_1-\eta\rho\B{D}_1^2)^k\right)^2 \right)\\
		\var^-(\B{w}_k^{\sam})&=\frac{\sigma^2}{n}\tr\left( \left(\B{I}-(\B{I}-\eta\B{D}_2-\eta\rho\B{D}_2^2)^k\right)^2 \right).
	\end{aligned}
\end{equation}  
\end{theorem}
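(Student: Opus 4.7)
The plan is to use Theorem~\ref{thm:kernel-quad} to reduce the analysis of KernelSAM to SAM applied to the quadratic loss $f(\B{w})=\tfrac{1}{2}(\B{w}-\bar{\B{w}})^T\KX(\B{w}-\bar{\B{w}})-\B{w}^T\B{\epsilon}$, for which I can derive a closed-form recursion. Computing $\nabla f(\B{w})=\KX(\B{w}-\bar{\B{w}})-\B{\epsilon}$, I would plug into the SAM rule~\eqref{sam-update} to get
\begin{equation*}
\nabla f\bigl(\B{w}+\rho\nabla f(\B{w})\bigr)=(\B{I}+\rho\KX)\bigl(\KX(\B{w}-\bar{\B{w}})-\B{\epsilon}\bigr),
\end{equation*}
and then, setting $\B{v}_k:=\B{w}_k^{\sam}-\bar{\B{w}}$, obtain the affine recursion
$$\B{v}_{k+1}=(\B{I}-\eta\KX-\eta\rho\KX^2)\,\B{v}_k+\eta(\B{I}+\rho\KX)\B{\epsilon}.$$

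Next I would pass to the eigenbasis. Write $\tilde{\B{v}}_k=\B{U}^T\B{v}_k$ and $\tilde{\B{\epsilon}}=\B{U}^T\B{\epsilon}$, and let $\B{M}=\B{I}-\eta\B{D}-\eta\rho\B{D}^2$, which is diagonal. Since $h_0^{\sam}=0$ the representer theorem gives $\B{w}_0^{\sam}=\B{0}$, so $\tilde{\B{v}}_0=-\B{u}$. Unrolling the recursion and summing a geometric series diagonally,
\begin{equation*}
\tilde{\B{v}}_k=-\B{M}^k\B{u}+\eta(\B{I}+\rho\B{D})\Bigl(\sum_{j=0}^{k-1}\B{M}^j\Bigr)\tilde{\B{\epsilon}}.
\end{equation*}
The key algebraic simplification is that $\B{I}-\B{M}=\eta\B{D}(\B{I}+\rho\B{D})$, so on the span where $\B{D}$ is invertible (which is all of $\R^n$ by the $\rank(\KX)=n$ assumption) the noise coefficient collapses to $\B{D}^{-1}(\B{I}-\B{M}^k)$, yielding
$$\tilde{\B{v}}_k=-\B{M}^k\B{u}+\B{D}^{-1}(\B{I}-\B{M}^k)\,\tilde{\B{\epsilon}}.$$

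To extract the error, I use that by~\eqref{kernel-phi} the vector of predictions at the training points is $\KX\B{w}_k^{\sam}$, so $y_i^*-\Phi(\B{w}_k^{\sam};\B{x}_i)$ is the $i$-th entry of $-\KX\B{v}_k$ and
$\error(\B{w}_k^{\sam})=\tfrac{1}{n}\E_{\B{\epsilon}}\|\KX\B{v}_k\|_2^2=\tfrac{1}{n}\E_{\B{\epsilon}}\|\B{D}\tilde{\B{v}}_k\|_2^2$. Taking expectation with respect to $\B{\epsilon}$ (mean zero) gives $\E_{\B{\epsilon}}[\B{D}\tilde{\B{v}}_k]=-\B{D}\B{M}^k\B{u}$, so the bias term is $\tfrac{1}{n}\|\B{D}\B{M}^k\B{u}\|_2^2=\tfrac{1}{n}\sum_i(1-\eta d_i-\eta\rho d_i^2)^{2k}d_i^2 u_i^2$, matching~\eqref{kernel-thm-eq}. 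For the variance I use that $\B{U}^T$ preserves the covariance $\sigma^2\B{I}$, so
$\var(\B{w}_k^{\sam})=\tfrac{1}{n}\E\|(\B{I}-\B{M}^k)\tilde{\B{\epsilon}}\|_2^2=\tfrac{\sigma^2}{n}\tr\bigl((\B{I}-\B{M}^k)^2\bigr)$. Splitting the trace according to whether $d_i>0$ or $d_i<0$, i.e.\ grouping the diagonal blocks $\B{D}_1$ and $\B{D}_2$, yields the decomposition $\var=\var^++\var^-$ in the stated form.

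The computation is essentially routine once the right change of variables is made; the main place to be careful is the algebraic simplification $\eta(\B{I}+\rho\B{D})(\B{I}-\B{M})^{-1}=\B{D}^{-1}$, which is what makes the noise-dependent part collapse to a clean $(\B{I}-\B{M}^k)$ factor free of $\rho$ (explaining why $\rho$ does not appear outside the exponent in the variance). The only subtle step is invoking Theorem~\ref{thm:kernel-quad} correctly to import the quadratic equivalence, and using that $\B{D}$ may have negative diagonal entries because $K$ is indefinite---this is what makes the natural split into $\var^+$ and $\var^-$ meaningful, but does not affect the recursion since $\B{M}$ is still diagonal.
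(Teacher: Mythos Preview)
Your proposal is correct and follows essentially the same approach as the paper's proof: both reduce to the quadratic equivalence (you via Theorem~\ref{thm:kernel-quad}, the paper via its Lemma giving the explicit KernelSAM recursion), diagonalize in the eigenbasis of $\KX$, unroll the recursion using the telescoping identity $\eta(\B{I}+\rho\B{D})\sum_{j=0}^{k-1}\B{M}^j=\B{D}^{-1}(\B{I}-\B{M}^k)$, and then read off the bias and variance from $\|\B{D}\tilde{\B{v}}_k\|_2^2$. Your choice to work with the shifted variable $\B{v}_k=\B{w}_k^{\sam}-\bar{\B{w}}$ from the outset makes the presentation a bit tidier, but the content is the same.
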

In Theorem~\ref{kernel-bias-var-thm}, $\var^+,\var^-$ capture the variance from positive and negative eigenvalues of $\KX$, respectively. As we see, the behavior in a non-convex case where some eigenvalues are negative is wildly different from the case where all eigenvalues are non-negative. In particular, if $d_n<0$, not only $\bias(\B{w}_k^{\sam})\leq \bias(\B{w}_k^{\gd})$, but the GD bias actually diverges to infinity, while the SAM bias converges to zero under the assumptions of Theorem~\ref{kernel-bias-var-thm}. In terms of variance, we see that similarly, $\var^-$ for SAM stays bounded, while it can diverge to infinity for GD. This shows that in the indefinite setting, GD might have unbounded error in the limit of $k\to\infty$ while SAM can keep the error bounded. We also see $\var^+$ shows a behavior similar to the variance from the ReLU case (Theorem~\ref{thm-relu}), implying that when the number of iterations is limited, SAM has smaller error than GD. This shows that SAM is even more suited to the non-convex case as it performs well for both finite and infinite number of iterations. This explanation is formalized in Proposition~\ref{kernel-gen-prop}.

\begin{proposition}\label{kernel-gen-prop}
Suppose there exists a numerical constant $c_0>1$ such that
\begin{equation}\label{kernel-gen-prop-cond}
	\begin{aligned}
		1-\eta d_r & \leq c_0(1-\eta d_1 - \eta\rho d_1^2),~~ 
		1-\eta d_1 \geq \sqrt{c_0} (1-\eta d_r -\eta\rho d_r^2)
	\end{aligned}
\end{equation}
where $r$ is such that $d_r>0,d_{r+1}<0$. Moreover, assume there exists $\varepsilon>0$ that for $j\geq r+1$,
$$1-\eta d_j-\eta\rho d_j^2 \leq 1\leq 1+\varepsilon \leq 1-\eta d_j.$$
Let $\snr=\|\KX\bar{\B{w}}\|_2^2/r\sigma^2$ and assume $\snr\geq 1$. Then, under the assumptions of Theorem~\ref{kernel-bias-var-thm}, if
\begin{equation}\label{kernel-prop-cond-min}
	k\leq \frac{\log[2/(\snr + 1)]}{\log[(1-\eta d_1-\eta\rho d_1^2)^2/(1-\eta d_r -\eta\rho d_r^2)]}~~\text{AND}~~k\geq \frac{\log 2}{\log (1+\varepsilon)}
\end{equation}
one has
$\error(\B{w}_{k}^{\sam})\leq \error(\B{w}_{k}^{\gd})$. Moreover, if $d_n<0$,
$$\lim_{k\to\infty}\error(\B{w}_k^{\gd})=\infty,\lim_{k\to\infty}\error(\B{w}_k^{\sam})<\infty.$$
\end{proposition}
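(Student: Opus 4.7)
The plan is to combine the bias-variance formulas of Theorem~\ref{kernel-bias-var-thm} with the two-sided bound on $k$ in~\eqref{kernel-prop-cond-min}, splitting every trace/sum by the sign of the associated eigenvalue so that the positive-eigenvalue part mirrors Proposition~\ref{lin-gen-prop} while the negative-eigenvalue part is handled by a new, orthogonal argument.

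\textbf{Step 1: sign-based decomposition.} I would introduce
\[
\bias(\B{w}_k) = \bias^+(\B{w}_k) + \bias^-(\B{w}_k), \qquad \var(\B{w}_k) = \var^+(\B{w}_k) + \var^-(\B{w}_k),
\]
where the $\pm$ superscripts restrict the sum (or trace) in~\eqref{kernel-thm-eq} to indices $i\leq r$ or $i\geq r+1$. The GD counterparts are obtained by taking $\rho=0$. It suffices to control the difference on each block separately and add them.

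\textbf{Step 2: positive block via the ReLU argument.} For $i\leq r$ the eigenvalues are positive, the involved factors $(1-\eta d_i)^{2k}$ and $(1-\eta d_i-\eta\rho d_i^2)^{2k}$ both lie in $(0,1)$, and the first half of~\eqref{kernel-gen-prop-cond} together with the upper bound on $k$ in~\eqref{kernel-prop-cond-min} is exactly the hypothesis of Proposition~\ref{lin-gen-prop}. I would therefore invoke that proposition's argument verbatim (the only change being $d_i^2 u_i^2$ in place of $d_i u_i^2$, which is irrelevant to the comparison) to obtain
\[
\bias^+(\B{w}_k^{\sam}) + \var^+(\B{w}_k^{\sam}) \;\leq\; \bias^+(\B{w}_k^{\gd}) + \var^+(\B{w}_k^{\gd}).
\]

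\textbf{Step 3: negative block via the two-sided envelope.} For $j\geq r+1$ the assumption $1+\varepsilon \leq 1-\eta d_j$ gives $(1-\eta d_j)^{k}\geq (1+\varepsilon)^{k}\geq 2$ once $k\geq \log 2/\log(1+\varepsilon)$, while the assumption $1-\eta d_j-\eta\rho d_j^2\leq 1$ combined with the corresponding strict lower bound (implicit in the bias/variance formulas being well defined; equivalently $|1-\eta d_j-\eta\rho d_j^2|<1$) yields $|(1-\eta d_j-\eta\rho d_j^2)^{k}|\leq 1$. A termwise comparison then gives $\bias^-(\B{w}_k^{\sam}) \leq \bias^-(\B{w}_k^{\gd})$ and, after noting that $((1-\eta d_j)^k-1)^2$ grows at least like $(1+\varepsilon)^{2k}$ whereas $(1-(1-\eta d_j-\eta\rho d_j^2)^k)^2\leq 4$, also $\var^-(\B{w}_k^{\sam}) \leq \var^-(\B{w}_k^{\gd})$ for $k$ at least as large as the lower bound in~\eqref{kernel-prop-cond-min}. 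Adding the bounds of Steps 2 and 3 establishes the first claim.

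\textbf{Step 4: limits as $k\to\infty$.} Assuming $d_n<0$, the factor $(1-\eta d_n)^k\to\infty$, so the corresponding summand in $\var^-(\B{w}_k^{\gd})$ diverges, forcing $\error(\B{w}_k^{\gd})\to\infty$. Conversely $|1-\eta d_i-\eta\rho d_i^2|\leq 1$ for every $i$, so each summand in $\bias(\B{w}_k^{\sam})$ is bounded by $d_i^2 u_i^2/n$ and each summand in $\var(\B{w}_k^{\sam})$ by $4\sigma^2/n$, giving a uniform upper bound on $\error(\B{w}_k^{\sam})$. The main obstacle I anticipate is Step 3: while the bias comparison is immediate, showing the variance comparison requires leveraging the precise form of $k\geq\log 2/\log(1+\varepsilon)$ to dominate the bounded SAM variance by a GD variance that merely begins to grow; this is the only place where the lower bound on $k$ is actively used, and some care is needed to keep the constants consistent with those in~\eqref{kernel-prop-cond-min}.
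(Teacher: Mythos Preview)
Your proposal is correct and follows essentially the same route as the paper: split by the sign of the eigenvalue, reuse the Proposition~\ref{lin-gen-prop} argument verbatim on the positive block, and compare bias and variance termwise on the negative block, with the lower bound on $k$ entering only through $\var^-$. Regarding the constant you flag at the end, the paper bounds $(1-(1-\eta d_j-\eta\rho d_j^2)^k)^2\leq 1$ (implicitly taking $1-\eta d_j-\eta\rho d_j^2\in[0,1]$), so that $((1+\varepsilon)^k-1)^2\geq 1$ already suffices at $k=\log 2/\log(1+\varepsilon)$; your cautious bound of $4$ would instead need $k\geq \log 3/\log(1+\varepsilon)$.
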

Similar to Proposition~\ref{lin-gen-prop}, Proposition~\ref{kernel-gen-prop} shows that when the total number of iterations is not too large, SAM performs better. Moreover, as discussed, SAM is able to keep the error bounded in the non-convex case, while GD's error diverges as $k\to\infty$. 
\subsection{SAM solutions are flat.}
As discussed, numerous papers have numerically observed the correlation between flatness and generalization, where flatter solutions tend to generalize better. In our work, we directly explained how SAM can perform better statistically compared to GD. However, one might ask if such a correlation between flatness and error exists in our setup. Here, we answer this question in the affirmative. Let us define the sharpness for SAM (and GD similarly) as the expected local fluctuations in the loss, 
\begin{equation}
\kappa^{\sam}_k=\max_{\|\B\varepsilon\|_2\leq \rho_0}\E_{\B{\epsilon}}[f(\E_{\B\epsilon}[\B{w}_k^{\sam}]+\B\varepsilon)-f(\E_{\B\epsilon}[\B{w}_k^{\sam}])]
\end{equation}
for some $\rho_0>0$ which might be different from $\rho$, and $f$ is given in~\eqref{quadratic-loss} for the ReLU case and in~\eqref{kernel-loss} for the kernel regression setup. Note that this can be considered as the expected value of sharpness defined by~\cite{sampaper}, $\max_{\|\B\varepsilon\|_2\leq \rho_0}f(\B{w}+\B\varepsilon)-f(\B{w})$ which motivates the SAM algorithm. 
\begin{proposition}\label{sharpness-prop}
\textbf{(1)} Under the assumptions of Theorem~\ref{thm-relu}, for $k\geq 1$
$$\kappa^{\gd}_k-\kappa^{\sam}_k\geq \rho_0^2\frac{d_r-d_1}{2}+\rho_0\left(\sqrt{\sum_{i=1}^r (1-\eta d_i)^{2k}d_i^2u_i^2}-\sqrt{\sum_{i=1}^r (1-\eta d_i-\eta\rho d_i^2)^{2k}d_i^2u_i^2}\right).$$
\noindent\textbf{(2)} Under the assumptions of Theorem~\ref{kernel-bias-var-thm}, if $d_n<0$
$$\lim_{k\to\infty}\kappa^{\gd}_k=\infty> \kappa_k^{\sam}~~\forall k\geq 1.$$
\end{proposition}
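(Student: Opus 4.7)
The plan is to reduce the computation of $\kappa_k$ to a constrained quadratic program determined by $\E_{\B{\epsilon}}[\B{w}_k]$ and the Hessian of the loss, and then compare the resulting expressions for SAM and GD.

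For part (1), I would use assumptions~\ref{assumption-relu} and~\ref{assumption-underlying} to write $f(\B{w}) = \tfrac12\|\B{y}-\B{A}\B{w}\|_2^2$ with $\B{y} = \B{A}\bar{\B{w}}+\B{\epsilon}$ along the trajectory. Expanding and taking expectation in $\B{\epsilon}$ kills the terms linear in $\B{\epsilon}$, yielding, for deterministic $\B{w}^*$ and $\B{\varepsilon}$,
$$\E_{\B{\epsilon}}[f(\B{w}^*+\B{\varepsilon}) - f(\B{w}^*)] = -\B{\varepsilon}^T \B{A}^T\B{r} + \tfrac12 \B{\varepsilon}^T \B{A}^T\B{A}\B{\varepsilon}, \quad \B{r} = \B{A}(\bar{\B{w}}-\B{w}^*).$$
Setting $\B{w}^* = \E[\B{w}_k^{\sam}]$ and using the closed form of $\E[\B{w}_k^{\sam}]$ derived inside Theorem~\ref{thm-relu} (the $\B{U}_1$-coordinates of $\bar{\B{w}}-\E[\B{w}_k^{\sam}]$ are $(1-\eta d_i-\eta\rho d_i^2)^k u_i$), I obtain $\|\B{A}^T\B{r}^{\sam}\|_2^2 = \sum_{i=1}^r d_i^2(1-\eta d_i-\eta\rho d_i^2)^{2k}u_i^2$, and the same identity with $\rho=0$ for GD, producing exactly the square roots in the statement.

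Next I would bound $\kappa^{\gd}_k$ from below and $\kappa^{\sam}_k$ from above. For the GD lower bound, plug $\B{\varepsilon} = -\rho_0 \B{A}^T\B{r}^{\gd}/\|\B{A}^T\B{r}^{\gd}\|_2$; since $\B{A}^T\B{r}^{\gd}\in\mathrm{range}(\B{A}^T)=\mathrm{span}(\B{U}_1)$, the Rayleigh quotient of $\B{A}^T\B{A}$ on this subspace is $\geq d_r$, giving $\kappa^{\gd}_k \geq \rho_0\|\B{A}^T\B{r}^{\gd}\|_2 + \tfrac12\rho_0^2 d_r$. For the upper bound, applying Cauchy--Schwarz on the linear term and $\B{\varepsilon}^T\B{A}^T\B{A}\B{\varepsilon}\leq d_1\|\B{\varepsilon}\|_2^2$ on the quadratic one yields $\kappa^{\sam}_k \leq \rho_0\|\B{A}^T\B{r}^{\sam}\|_2 + \tfrac12\rho_0^2 d_1$. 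Subtracting these two bounds gives part (1) exactly.

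For part (2), Theorem~\ref{thm:kernel-quad} replaces the loss by a quadratic with Hessian $\KX$ plus the linear perturbation $-\B{w}^T\B{\epsilon}$ that vanishes in expectation, so the same computation gives $\E_{\B{\epsilon}}[f(\B{w}^*+\B{\varepsilon})-f(\B{w}^*)] = \B{\varepsilon}^T \KX(\B{w}^*-\bar{\B{w}}) + \tfrac12 \B{\varepsilon}^T\KX\B{\varepsilon}$. In finite dimensions $\kappa^{\sam}_k$ is finite for every $k$, so only the divergence $\kappa^{\gd}_k\to\infty$ requires work. I would lower-bound $\kappa^{\gd}_k$ by choosing $\B{\varepsilon}$ along $\KX(\E[\B{w}_k^{\gd}]-\bar{\B{w}})$, obtaining $\kappa^{\gd}_k \geq \rho_0\|\KX(\E[\B{w}_k^{\gd}]-\bar{\B{w}})\|_2 + \tfrac12\rho_0^2 d_n$, and then show the norm term diverges: when $d_n<0$ the GD contraction factor in the bottom eigendirection is $|1-\eta d_n|>1$, so the corresponding eigencomponent of $\bar{\B{w}}-\E[\B{w}_k^{\gd}]$ grows geometrically and is then amplified by $|d_n|$ under $\KX$. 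The main obstacle is handling the possibly negative quadratic term $\tfrac12 \B{\varepsilon}^T\KX\B{\varepsilon}$: a priori it could cancel the diverging linear part, but its magnitude is at most $\tfrac12\rho_0^2|d_n|$, which is a constant in $k$, so the unbounded linear term dominates. A secondary care point is checking the range-of-$\B{A}^T$ argument in part (1), ensuring the value $d_r$ (and not a smaller eigenvalue arising from the nullspace of $\B{A}^T\B{A}$) is what enters the final difference and matches the $d_r-d_1$ in the statement.
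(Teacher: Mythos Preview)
Your proposal is correct and follows essentially the same approach as the paper: both reduce $\kappa_k$ to the quadratic form $\tfrac12\B\varepsilon^T\B{H}\B\varepsilon + \B\varepsilon^T\B{H}(\B{w}^*-\bar{\B{w}})$, bound it above via Cauchy--Schwarz and the top eigenvalue $d_1$, and bound it below by testing against a direction in $\mathrm{span}(\B{U}_1)$ (you pick the specific vector $\B\varepsilon\propto\B{A}^T\B{r}$ while the paper restricts to $\B\varepsilon=\B{U}_1\B{v}$ and then maximizes, but the resulting bounds coincide). Your handling of Part~(2), including the observation that the quadratic term is $O(\rho_0^2|d_n|)$ uniformly in $k$ while the linear term diverges, mirrors the paper's argument.
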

Proposition~\ref{sharpness-prop} shows that for the ReLU setup, SAM has lower sharpness compared to GD for sufficiently small $\rho_0,k$. Specially, if $d_r=d_1$, SAM has lower sharpness for $k,\rho_0>0$. Moreover, for the indefinite kernel setup, this proposition shows that GD has unbounded sharpness, unlike SAM. This further confirms the connections between generalization and flatness observed theoretically~\cite{ding2022flat} and numerically~\cite{sampaper,flatness4} in the literature. Moreover, this is in agreement with the previous work showing SAM leads to flatter solutions compared to GD~\cite{sambounce,neweigen,optsam1,optsam2,sampaper}.

\section{Numerical Experiments}\label{sec:Numerical}
We conduct various numerical experiments on linear models, kernel methods and deep neural networks to examine the theory we developed and to gain further insight into SAM. Due to space limitations, we only discuss main insights from our DNN experiments here. We use CIFAR10/100~\citep{cifar} data and noisy versions of CIFAR10 provided by~\cite{cifarn} to train ResNet50 network~\citep{resnets} in our experiments here. Additional results for ResNet18 network as well as experiments on linear/kernel models can be found in Appendix~\ref{app:numerical}.

\noindent\textbf{Large $\rho$ vs small $\rho$:} First, we consider the case with clean (noiseless) labels, where one can expect the bias to be the dominant term. In this case, our theory would suggest taking a larger $\rho$ lowers the error. Moreover, our theory anticipates SAM performs specially better than (S)GD in earlier epochs where the difference in bias is even larger. 

We show that these insights hold true in our experiments. Particularly, in Figure~\ref{fig:vs} [Two Left Panels] we observe that when $\rho>0$, SAM performs better than GD in almost all epochs. We see that as we increase $\rho$, SAM performs quite well over the first 150 epochs. However, the gains from large $\rho$ tend to fade in later epochs as smaller values of $\rho$ get to lower bias values as well. Nevertheless, we see that in terms of accuracy, it is better to choose a larger $\rho$ rather than a small $\rho$ (the accuracy values are given in figure legends. Also see Figure~\ref{fig:varyrho18} for more details). In case of CIFAR10, $\rho=0.1$ is the best value of $\rho$ in our experiments and taking $\rho=0.5$ results in a smaller loss of accuracy, compared to taking $\rho=0$ (i.e. GD). In the case of CIFAR100, $\rho=0.5$ results in better accuracy compared to $\rho=0.1$, which shows that generally, overestimating $\rho$ is less harmful than underestimating $\rho$. This mostly agrees with theory that taking larger $\rho$ in noiseless settings is better, although we note that in practice, variance might not be exactly zero so large $\rho$ might perform slightly worse than smaller $\rho$, as is the case for CIFAR10.

\noindent\textbf{Early stopping in noisy settings:} Next, we consider a noisy setting and show that to avoid overfitting, the training has to be stopped early, showing the assumption that the number of epochs is finite is realistic. We use two versions of noisy CIFAR10, \texttt{random label 1} and \texttt{worse labels} from~\cite{cifarn} which we call random and worse, respectively. The random version has about $17\%$ noise in training labels, while worse has about $40\%$ noise. The validation labels for both datasets are noiseless. As we see in Figure~\ref{fig:vs} [Two Right Panels], as the noise increases both methods tend to overfit in the later stages of training, and overfitting is stronger when noise is higher. This shows that in noisy settings training has to be stopped earlier as noise increases.

\noindent\textbf{Performance under noise:} As we see from Figure~\ref{fig:vs}, in noisy settings the gap between SAM and GD is even larger. This can be explained as in non-convex settings, GD can have unbounded variance (cf. Theorem~\ref{kernel-bias-var-thm}) which leads to worse performance of GD specially in later epochs. 

\noindent\textbf{Decaying $\rho$ helps:}  We observe that having large $\rho$ helps in initial phases of training, while having a smaller $\rho$ might help in the later phases. Therefore, we propose to start SAM with a large value of $\rho$ to decrease bias and then decay $\rho$ over the course of algorithm to limit the increase of variance (the details are discussed in Appendix~\ref{app:numerical}). The result for this case are shown in Table~\ref{table-1}. Full is the accuracy at the end of training (epoch 200) and Early corresponds to early stopping (epoch 120 for SGD and 50 for SAM-based methods). As can be seen, starting with larger than optimal $\rho$ and decaying leads to accuracy results similar or slightly better than using the optimal fixed $\rho$. Interestingly, using large $\rho$ leads to considerably better performance if training has to be stopped early, which is often the case in practice specially for large models~\cite{undertrain1,undertrain2} due to resource limitations.

\begin{figure}[t!]
\centering
\begin{tabular}{cccc}
	CIFAR10 & CIFAR100 & CIFAR10-Random & CIFAR10-Worse\\
	\includegraphics[width=0.21\linewidth,trim =0.8cm 0cm .8cm 0cm, clip = true]{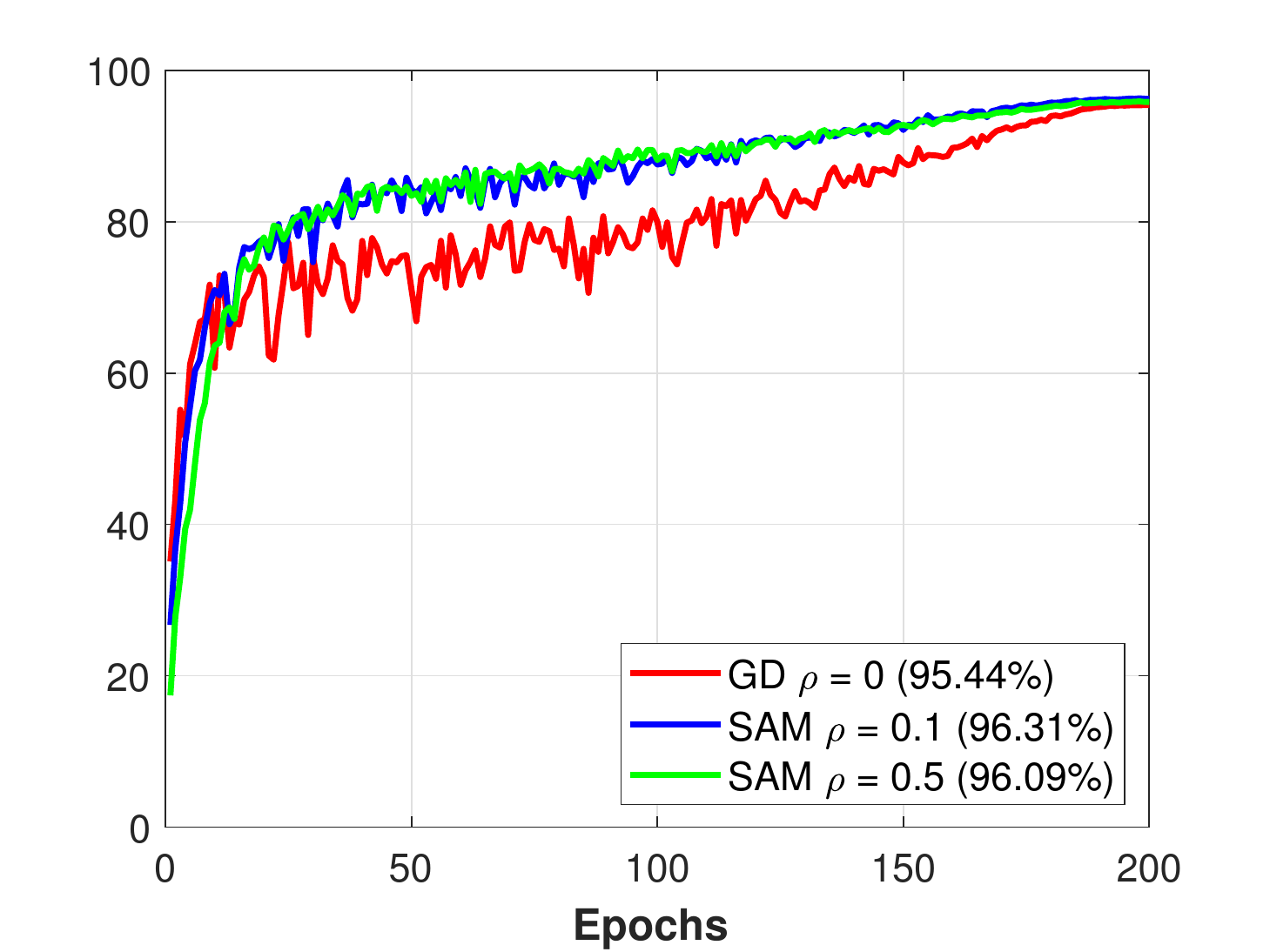}&   
	\includegraphics[width=0.21\linewidth,trim =.8cm 0cm .8cm 0cm, clip = true]{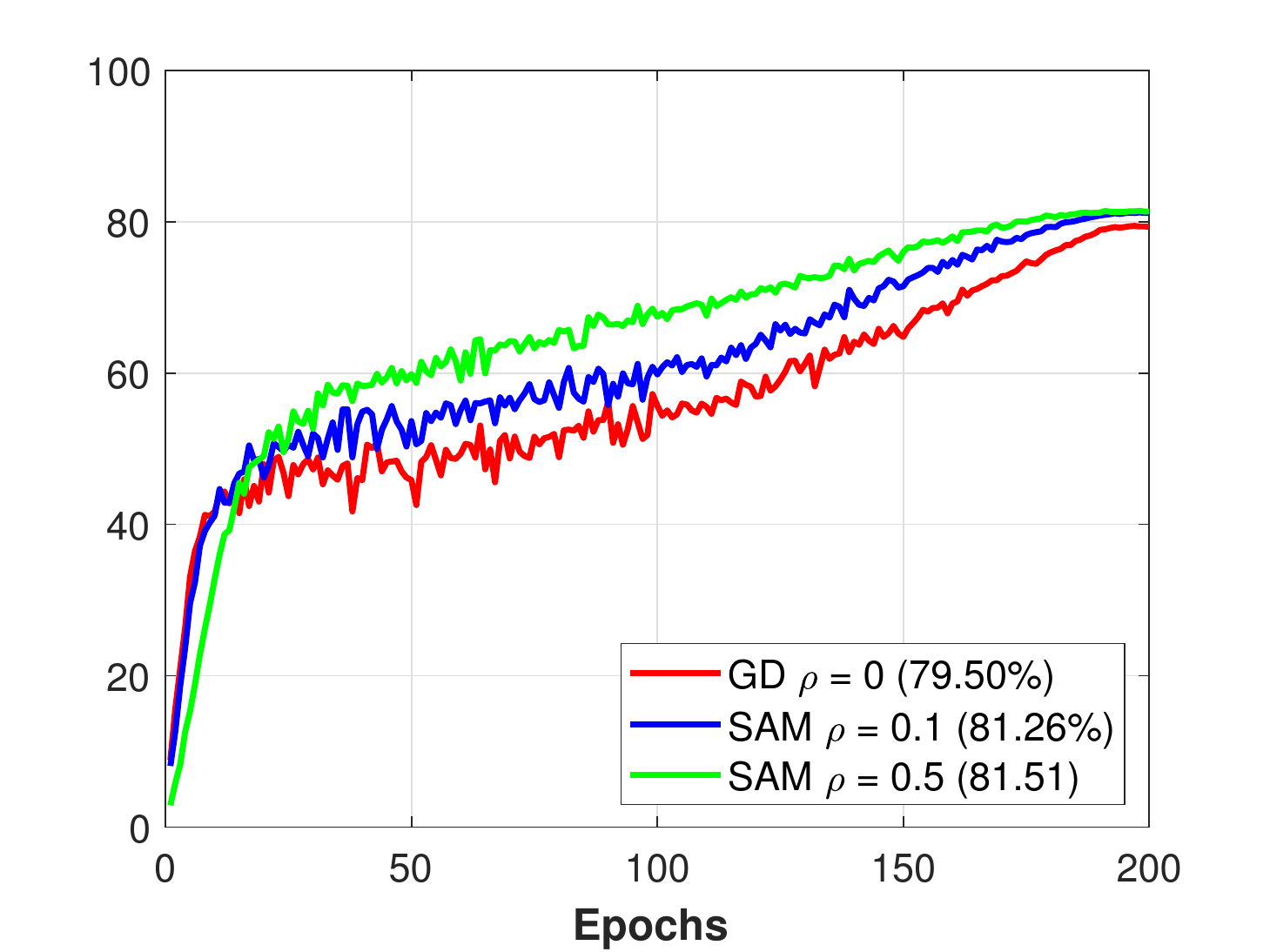}  &
	\includegraphics[width=0.21\linewidth,trim =0.8cm 0cm .8cm 0cm, clip = true]{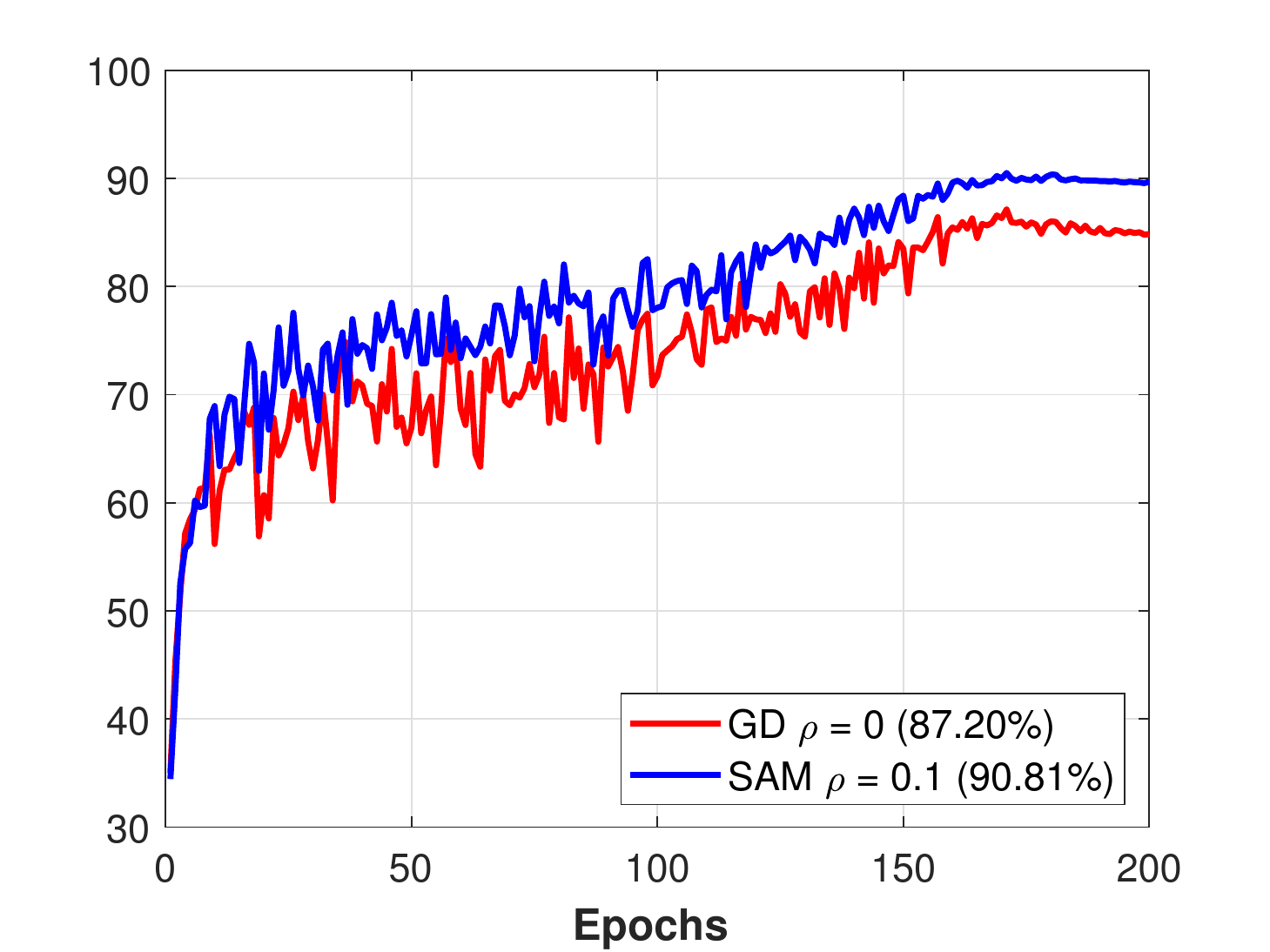}&   
	\includegraphics[width=0.21\linewidth,trim =.8cm 0cm .8cm 0cm, clip = true]{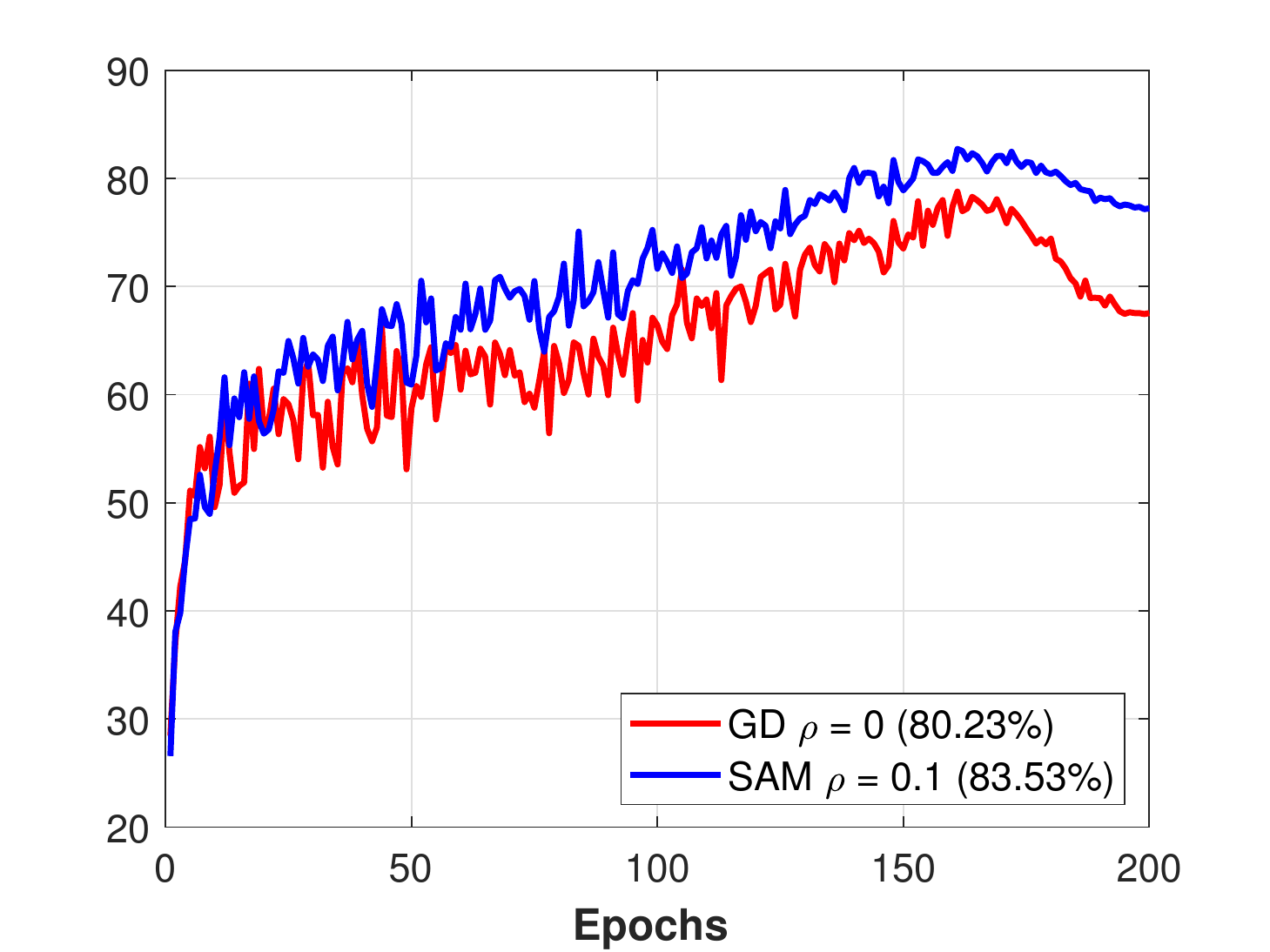}   
\end{tabular}
\caption{\small  Accuracy over epochs for SAM and GD with ResNet50 and different datasets. The number in the parenthesis in the legend shows the average best accuracy.}
\label{fig:vs}
\end{figure}

\begin{table}[t]
\caption{\small Comparison of SGD, SAM with optimal $\rho$ and SAM with $\rho$ decaying over the course of algorithm.}
\label{table-1}
\vskip 0.15in
\begin{center}
	\begin{footnotesize}
		\begin{sc}
			\begin{tabular}{cccc}
				\toprule
				Dataset & Method & Full  & Early    \\
				\midrule
				\multirow{3}{*}{CIFAR10} & SGD & $95.44\pm0.06$ & $82.99\pm0.75$ \\
				& SAM & $96.31\pm0.06$ & $81.43\pm2.73$ \\
				&  SAM-Decay & $96.42\pm0.10$ & $86.79\pm0.38$ \\
				\multirow{3}{*}{CIFAR100} & SGD & $79.50\pm0.33$ & $58.87\pm0.62$ \\
				& SAM & $82.01\pm0.09$ & $60.20\pm0.97$ \\
				&  SAM-Decay & $82.02\pm0.27$  & $61.92\pm1.63$\\
				\bottomrule
			\end{tabular}
		\end{sc}
	\end{footnotesize}
\end{center}
\vskip -0.1in
\end{table}

\section{Conclusion and Future Work}
We presented a direct explanation of why SAM generalizes well through 
studying the statistical performance of SAM/GD for two classes of problems. Specifically, we showed that SAM works well for neural networks with a hidden ReLU layer if the noise is not too high. We also showed that in indefinite kernel regression, corresponding to a non-convex optimization problem, SAM can have bounded error while GD has unbounded error.  
An interesting question is that how stochastic version of SAM would differ from the full-batch setting studied here. In Appendix~\ref{sec:stochastic}, we study a stochastic version of SAM and compare it to SGD for a special case. As we see, SAM tends to benefit from stochasticity even more, specially in high-dimensional settings. A deeper analysis of stochastic SAM is left for a future work.

	\section*{Acknowledgments}
	This research is supported in part by a grant from the Office of Naval Research (N000142112841). Authors would like to thank MIT SuperCloud for providing computational resources for this work.

	\newpage
	
	\bibliographystyle{plainnat}
	\bibliography{reff}

\begin{thebibliography}{10}

\bibitem{neweigen}
Atish Agarwala and Yann~N. Dauphin.
\newblock Sam operates far from home: eigenvalue regularization as a dynamical
  phenomenon, 2023.

\bibitem{icmlpaper}
Maksym Andriushchenko and Nicolas Flammarion.
\newblock Towards understanding sharpness-aware minimization.
\newblock In {\em International Conference on Machine Learning}, pages
  639--668. PMLR, 2022.

\bibitem{nlpmsam}
Dara Bahri, Hossein Mobahi, and Yi~Tay.
\newblock Sharpness-aware minimization improves language model generalization.
\newblock {\em arXiv preprint arXiv:2110.08529}, 2021.

\bibitem{sambounce}
Peter~L Bartlett, Philip~M Long, and Olivier Bousquet.
\newblock The dynamics of sharpness-aware minimization: Bouncing across ravines
  and drifting towards wide minima.
\newblock {\em arXiv preprint arXiv:2210.01513}, 2022.

\bibitem{undertrain2}
Mohammad Bavarian, Heewoo Jun, Nikolas Tezak, John Schulman, Christine
  McLeavey, Jerry Tworek, and Mark Chen.
\newblock Efficient training of language models to fill in the middle.
\newblock {\em arXiv preprint arXiv:2207.14255}, 2022.

\bibitem{behdin2022improved}
Kayhan Behdin, Qingquan Song, Aman Gupta, David Durfee, Ayan Acharya, Sathiya
  Keerthi, and Rahul Mazumder.
\newblock Improved deep neural network generalization using m-sharpness-aware
  minimization.
\newblock In {\em OPT 2022: Optimization for Machine Learning (NeurIPS 2022
  Workshop)}, 2022.

\bibitem{kerneldeep4}
Mikhail Belkin, Siyuan Ma, and Soumik Mandal.
\newblock To understand deep learning we need to understand kernel learning.
\newblock In {\em International Conference on Machine Learning}, pages
  541--549. PMLR, 2018.

\bibitem{kerneldeep1}
Lin Chen and Sheng Xu.
\newblock Deep neural tangent kernel and laplace kernel have the same rkhs.
\newblock {\em arXiv preprint arXiv:2009.10683}, 2020.

\bibitem{samvit}
Xiangning Chen, Cho-Jui Hsieh, and Boqing Gong.
\newblock When vision transformers outperform resnets without pre-training or
  strong data augmentations.
\newblock {\em arXiv preprint arXiv:2106.01548}, 2021.

\bibitem{newsde}
Enea~Monzio Compagnoni, Antonio Orvieto, Luca Biggio, Hans Kersting,
  Frank~Norbert Proske, and Aurelien Lucchi.
\newblock An sde for modeling sam: Theory and insights.
\newblock {\em arXiv preprint arXiv:2301.08203}, 2023.

\bibitem{ding2022flat}
Lijun Ding, Dmitriy Drusvyatskiy, and Maryam Fazel.
\newblock Flat minima generalize for low-rank matrix recovery.
\newblock {\em arXiv preprint arXiv:2203.03756}, 2022.

\bibitem{sharpcan}
Laurent Dinh, Razvan Pascanu, Samy Bengio, and Yoshua Bengio.
\newblock Sharp minima can generalize for deep nets.
\newblock In Doina Precup and Yee~Whye Teh, editors, {\em Proceedings of the
  34th International Conference on Machine Learning}, volume~70 of {\em
  Proceedings of Machine Learning Research}, pages 1019--1028. PMLR, 06--11 Aug
  2017.

\bibitem{samforfree}
Jiawei Du, Daquan Zhou, Jiashi Feng, Vincent~YF Tan, and Joey~Tianyi Zhou.
\newblock Sharpness-aware training for free.
\newblock {\em arXiv preprint arXiv:2205.14083}, 2022.

\bibitem{flatness1}
Gintare~Karolina Dziugaite and Daniel~M Roy.
\newblock Computing nonvacuous generalization bounds for deep (stochastic)
  neural networks with many more parameters than training data.
\newblock {\em arXiv preprint arXiv:1703.11008}, 2017.

\bibitem{sampaper}
Pierre Foret, Ariel Kleiner, Hossein Mobahi, and Behnam Neyshabur.
\newblock Sharpness-aware minimization for efficiently improving
  generalization.
\newblock {\em arXiv preprint arXiv:2010.01412}, 2020.

\bibitem{kerneldeep2}
Amnon Geifman, Abhay Yadav, Yoni Kasten, Meirav Galun, David Jacobs, and Basri
  Ronen.
\newblock On the similarity between the laplace and neural tangent kernels.
\newblock {\em Advances in Neural Information Processing Systems},
  33:1451--1461, 2020.

\bibitem{kerneldeep3}
Behrooz Ghorbani, Song Mei, Theodor Misiakiewicz, and Andrea Montanari.
\newblock When do neural networks outperform kernel methods?
\newblock {\em Advances in Neural Information Processing Systems},
  33:14820--14830, 2020.

\bibitem{resnets}
Kaiming He, Xiangyu Zhang, Shaoqing Ren, and Jian Sun.
\newblock Deep residual learning for image recognition.
\newblock In {\em Proceedings of the IEEE conference on computer vision and
  pattern recognition}, pages 770--778, 2016.

\bibitem{undertrain1}
Jordan Hoffmann, Sebastian Borgeaud, Arthur Mensch, Elena Buchatskaya, Trevor
  Cai, Eliza Rutherford, Diego de~Las Casas, Lisa~Anne Hendricks, Johannes
  Welbl, Aidan Clark, et~al.
\newblock Training compute-optimal large language models.
\newblock {\em arXiv preprint arXiv:2203.15556}, 2022.

\bibitem{2008kernel}
Thomas Hofmann, Bernhard Sch{\"o}lkopf, and Alexander~J Smola.
\newblock Kernel methods in machine learning.
\newblock {\em The annals of statistics}, 36(3):1171--1220, 2008.

\bibitem{minimumsharpness}
H.~Ibayashi, T.~Hamaguchi, and M.~Imaizumi.
\newblock Minimum sharpness: Scale-invariant parameter-robustness of neural
  networks.
\newblock {\em arXiv preprint arXiv:2106.12612}, 2021.

\bibitem{kerneldeep5}
Arthur Jacot, Franck Gabriel, and Clement Hongler.
\newblock Neural tangent kernel: Convergence and generalization in neural
  networks.
\newblock {\em Advances in neural information processing systems}, 31, 2018.

\bibitem{kaddour2022when}
Jean Kaddour, Linqing Liu, Ricardo Silva, and Matt Kusner.
\newblock When do flat minima optimizers work?
\newblock In Alice~H. Oh, Alekh Agarwal, Danielle Belgrave, and Kyunghyun Cho,
  editors, {\em Advances in Neural Information Processing Systems}, 2022.

\bibitem{flatness2}
Nitish~Shirish Keskar, Dheevatsa Mudigere, Jorge Nocedal, Mikhail Smelyanskiy,
  and Ping Tak~Peter Tang.
\newblock On large-batch training for deep learning: Generalization gap and
  sharp minima.
\newblock {\em arXiv preprint arXiv:1609.04836}, 2016.

\bibitem{cifar}
Alex Krizhevsky, Geoffrey Hinton, et~al.
\newblock Learning multiple layers of features from tiny images.
\newblock 2009.

\bibitem{kernelgeneral}
Tengyuan Liang and Alexander Rakhlin.
\newblock Just interpolate: Kernel “ridgeless” regression can generalize.
\newblock {\em The Annals of Statistics}, 48(3):1329--1347, 2020.

\bibitem{indefinite1}
Fanghui Liu, Lei Shi, Xiaolin Huang, Jie Yang, and Johan~AK Suykens.
\newblock Analysis of regularized least-squares in reproducing kernel krein
  spaces.
\newblock {\em Machine Learning}, 110:1145--1173, 2021.

\bibitem{indefinite2}
Ronny Luss and Alexandre d'Aspremont.
\newblock Support vector machine classification with indefinite kernels.
\newblock {\em Advances in neural information processing systems}, 20, 2007.

\bibitem{flatness5}
Chao Ma and Lexing Ying.
\newblock On linear stability of sgd and input-smoothness of neural networks.
\newblock In M.~Ranzato, A.~Beygelzimer, Y.~Dauphin, P.S. Liang, and J.~Wortman
  Vaughan, editors, {\em Advances in Neural Information Processing Systems},
  volume~34, pages 16805--16817. Curran Associates, Inc., 2021.

\bibitem{indefinire3}
Dino Oglic and Thomas Gartner.
\newblock Learning in reproducing kernel krein spaces.
\newblock In {\em International conference on machine learning}, pages
  3859--3867. PMLR, 2018.

\bibitem{indefinitenew}
Dino Oglic and Thomas Gartner.
\newblock Scalable learning in reproducing kernel krein spaces.
\newblock In {\em International Conference on Machine Learning}, pages
  4912--4921. PMLR, 2019.

\bibitem{early-deep}
Lutz Prechelt.
\newblock {\em Early Stopping - But When?}, pages 55--69.
\newblock Springer Berlin Heidelberg, Berlin, Heidelberg, 1998.

\bibitem{supercloud}
Albert Reuther, Jeremy Kepner, Chansup Byun, Siddharth Samsi, William Arcand,
  David Bestor, Bill Bergeron, Vijay Gadepally, Michael Houle, Matthew Hubbell,
  Michael Jones, Anna Klein, Lauren Milechin, Julia Mullen, Andrew Prout,
  Antonio Rosa, Charles Yee, and Peter Michaleas.
\newblock Interactive supercomputing on 40,000 cores for machine learning and
  data analysis.
\newblock In {\em 2018 IEEE High Performance extreme Computing Conference
  (HPEC)}, pages 1--6. IEEE, 2018.

\bibitem{smith2021origin}
Samuel~L Smith, Benoit Dherin, David~GT Barrett, and Soham De.
\newblock On the origin of implicit regularization in stochastic gradient
  descent.
\newblock {\em arXiv preprint arXiv:2101.12176}, 2021.

\bibitem{optsam2}
Szilvia Ujv{\'a}ry, Zsigmond Telek, Anna Kerekes, Anna M{\'e}sz{\'a}ros, and
  Ferenc Husz{\'a}r.
\newblock Rethinking sharpness-aware minimization as variational inference.
\newblock {\em arXiv preprint arXiv:2210.10452}, 2022.

\bibitem{cifarn}
Jiaheng Wei, Zhaowei Zhu, Hao Cheng, Tongliang Liu, Gang Niu, and Yang Liu.
\newblock Learning with noisy labels revisited: A study using real-world human
  annotations.
\newblock In {\em International Conference on Learning Representations}, 2022.

\bibitem{optsam1}
Kaiyue Wen, Tengyu Ma, and Zhiyuan Li.
\newblock How sharpness-aware minimization minimizes sharpness?
\newblock In {\em The Eleventh International Conference on Learning
  Representations}, 2023.

\bibitem{sharpnesspaper}
Lei Wu, Chao Ma, and Weinan E.
\newblock How sgd selects the global minima in over-parameterized learning: A
  dynamical stability perspective.
\newblock In {\em Advances in Neural Information Processing Systems},
  volume~31, 2018.

\bibitem{flatness4}
Lei Wu, Mingze Wang, and Weijie~J Su.
\newblock The alignment property of {SGD} noise and how it helps select flat
  minima: A stability analysis.
\newblock In {\em Advances in Neural Information Processing Systems}, 2022.

\bibitem{flatness3}
Zeke Xie, Issei Sato, and Masashi Sugiyama.
\newblock A diffusion theory for deep learning dynamics: Stochastic gradient
  descent exponentially favors flat minima.
\newblock In {\em International Conference on Learning Representations}, 2021.

\bibitem{gsampaper}
Juntang Zhuang, Boqing Gong, Liangzhe Yuan, Yin Cui, Hartwig Adam, Nicha
  Dvornek, Sekhar Tatikonda, James Duncan, and Ting Liu.
\newblock Surrogate gap minimization improves sharpness-aware training.
\newblock {\em arXiv preprint arXiv:2203.08065}, 2022.

\end{thebibliography}

	\appendix
	\numberwithin{equation}{section}
	\numberwithin{lemma}{section}
	\numberwithin{proposition}{section}
	\numberwithin{figure}{section}
	\numberwithin{table}{section}

	

	\newpage
	
\section{Details of example from Section~\ref{sec:intro}}\label{app:toy}
Under the notation used in Section~\ref{sec:mainres}, the noiseless example shows a one-dimensional case with $d_1=1,\eta=0.015, u=1$ and $\rho=1$. The plot shows $\error$ for GD and SAM. In the noisy setting, we set $n=2,\sigma^2=0.2$. The model follows $\eta=0.0045, d_2=-0.0007/\eta$ and $\rho=-1/d_2$. We also set $u_1=u_2=1$. The plot similarly shows the error.

\section{Review of Kernel Gradients}\label{app:kernels}
Note that as discussed in Section~\ref{sec:def-kernel}, $\Hk=\Hkp\bigoplus \Hkn$. Therefore, for $f\in\Hk$, there exists $f_+\in\Hkp,f_-\in\Hkn$ such that $f=f_+-f_-$. Moreover, the inner product in $\Hk$ is defined as
\begin{equation}
	\langle f,g\rangle = \langle f_+,g_+\rangle - \langle f_-,g_-\rangle.
\end{equation}
Note that similar to the RKHS case, for $f\in\Hk$ we have
$$\langle K(\B{x},\cdot),f\rangle = \langle K_+(\B{x},\cdot),f_+\rangle-\langle K_-(\B{x},\cdot),f_-\rangle=f_+(\B{x})-f_-(\B{x})=f(\B{x}).$$
Let $\B{x}\in\R^p,y\in\R$ and $L[h]=(h(\B{x})-y)^2$ for $h\in\Hk$. The gradient of $L[h]$ is a function such as $k\in\Hk$ where $k$ is a good first-order approximation to $L[h]$. In particular, for any bounded $g\in\Hk$,
\begin{align}
	L[h+\epsilon g] & = (h(\B{x})+\epsilon g(\B{x})-y)^2 \nonumber \\
	& = (h(\B{x})-y)^2 + \epsilon^2 g(\B{x})^2 + 2\epsilon g(\B{x})\left(h(\B{x})-y\right) \nonumber \\
	& = L[h] + 2\epsilon g(\B{x})\left(h(\B{x})-y\right) + \mathcal{O}(\epsilon^2) \nonumber \\
	& = L[h] + \epsilon \langle2\left(h(\B{x})-y\right)K(\B{x},\cdot),g\rangle + \mathcal{O}(\epsilon^2).
\end{align}
Therefore, 
\begin{equation}
	\lim_{\epsilon\to 0}\frac{L[h+\epsilon g]-L[h]}{\epsilon}=\langle2\left(h(\B{x})-y\right)K(\B{x},\cdot),g\rangle.
\end{equation}
Hence, we take $\nabla L[h]=2(h(\B{x})-y)K(\B{x},\cdot).$

\section{Discussion on Propositions~\ref{lin-gen-prop} and~\ref{kernel-gen-prop}}\label{app:props}
In this section, we study what conditions~\eqref{kernel-gen-prop-cond} implies on the model. Particularly, we set $d_1=1$. As two examples, we take $d_r\in\{0.8,0.95\}$ and $d_n\in\{-0.6,-1\}$. We also like the bounds of Proposition~\ref{kernel-gen-prop} to be valid for at least $k\geq 20$. Therefore, we take $\varepsilon = \log 2/\log 20 -1$. Next, we sweep $\eta$ and $\rho$ and choose the values that satisfy~\eqref{kernel-gen-prop-cond} for some $c_0>1$. We plot the results in Figure~\ref{fig:app-comp-eta} for different values of $d_r,d_n$, where we highlight every pair of $(\eta,\rho)$ that satisfy the condition in dark blue. As can be seen in this figure, taking $\eta$ to be small and $\rho\gg \eta$ results in~\eqref{kernel-gen-prop-cond} being satisfied. This makes intuitive sense as taking $\eta$ small helps to satisfy $1-\eta d_i-\eta\rho d_i^2< 1$ and taking $\rho\gg \eta$ helps to take advantage of SAM regularization.

\begin{figure}[t!]
	\centering
	\begin{tabular}{ccc}
		& $d_r=0.8$ & $d_r=0.95$ \\
		\rotatebox{90}{~~~~~~~~~~~~~~~~~~~~$d_n=-0.6$}  & \includegraphics[width=0.4\linewidth,trim =0cm 0cm 0cm 0cm, clip = true]{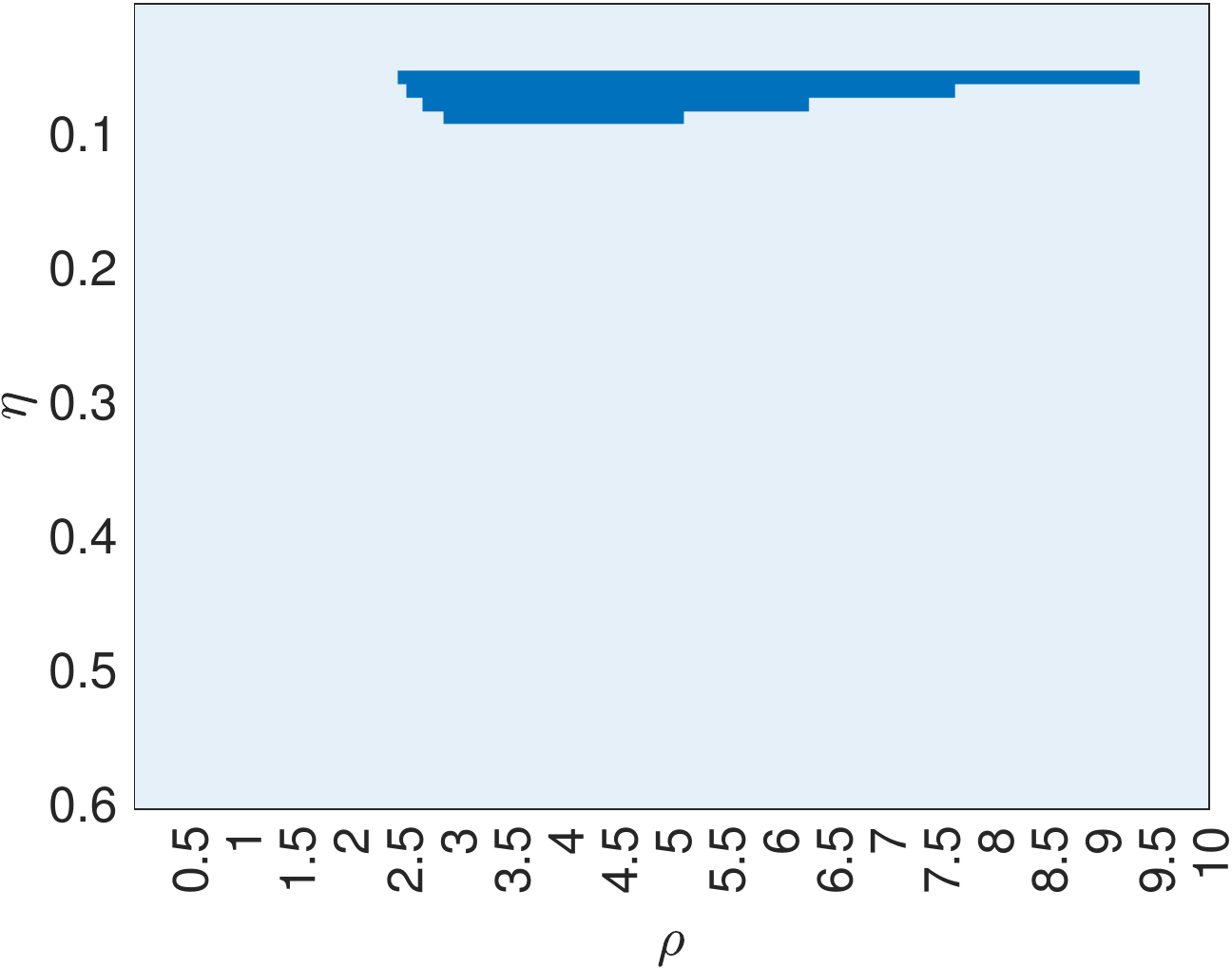}&
		\includegraphics[width=0.4\linewidth,trim =0cm 0cm 0cm 0cm, clip = true]{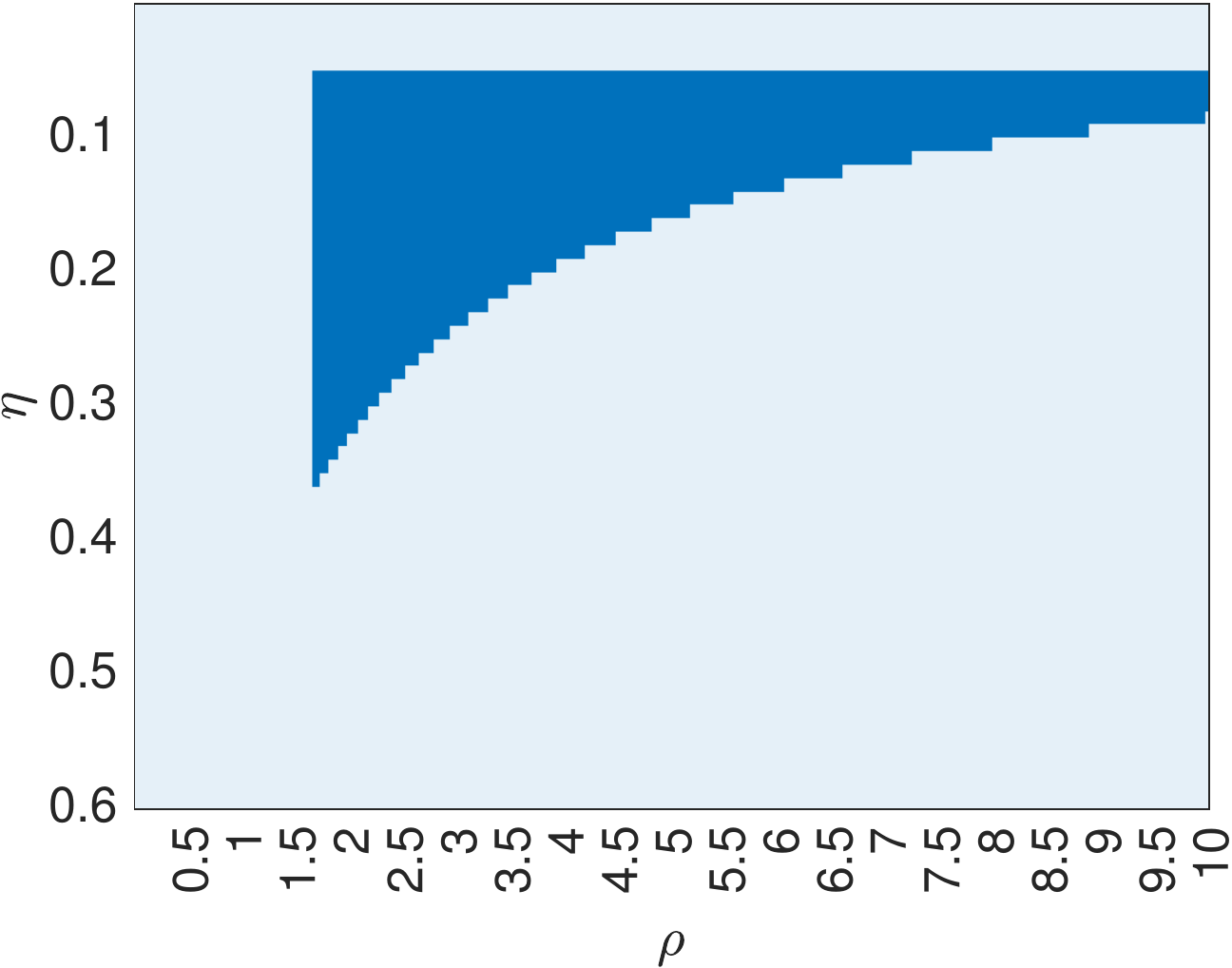}\\
		\rotatebox{90}{~~~~~~~~~~~~~~~~~~~~~~$d_n=-1$} & \includegraphics[width=0.4\linewidth,trim =0cm 0cm 0cm 0cm, clip = true]{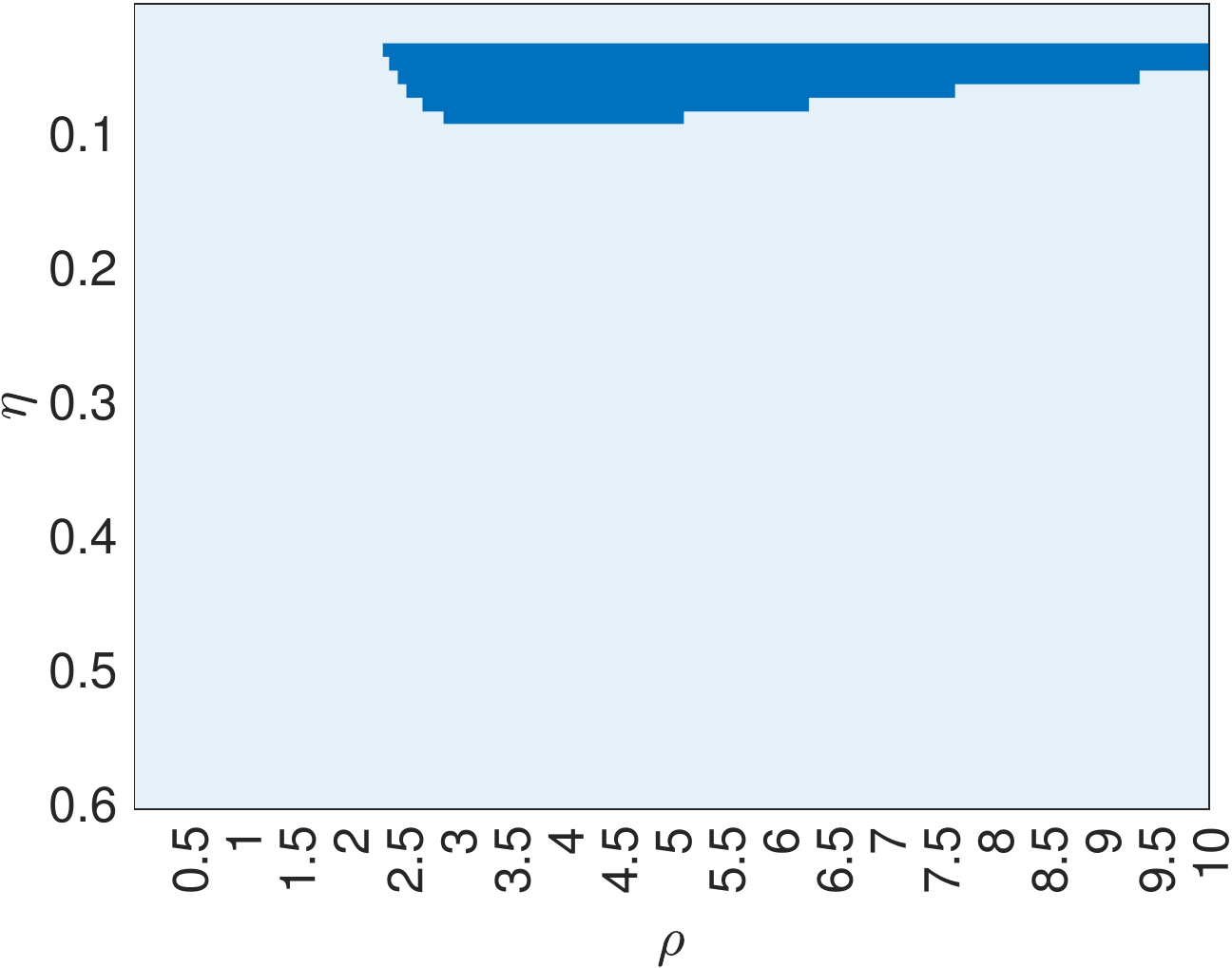}&
		\includegraphics[width=0.4\linewidth,trim =0cm 0cm 0cm 0cm, clip = true]{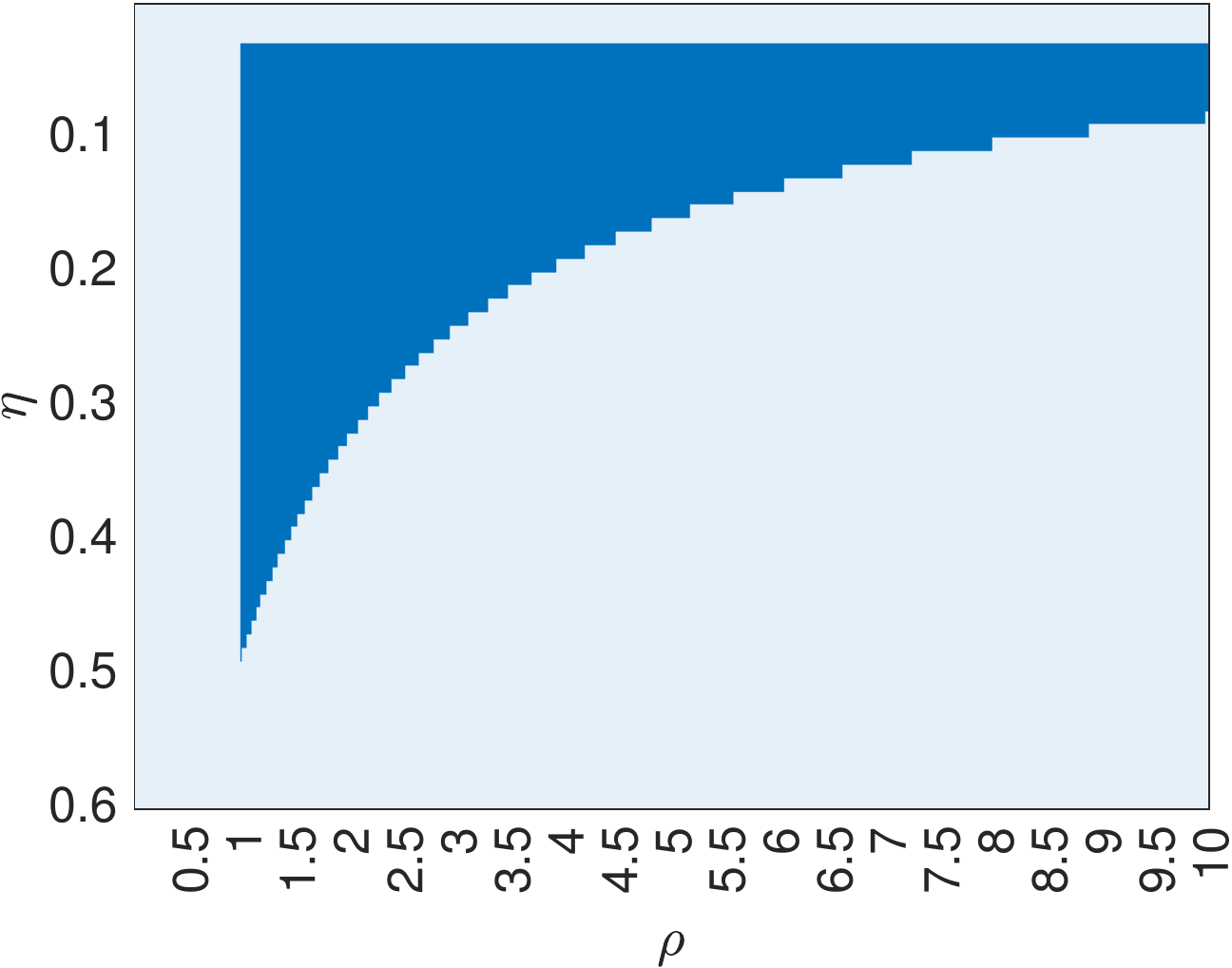} 
	\end{tabular}
	\caption{\small  Comparison of values of $(\eta,\rho)$ that satisfy condition~\eqref{kernel-gen-prop-cond} for different values of $d_r,d_n$. See Appendix~\ref{app:props} for more details. }
	\label{fig:app-comp-eta}
\end{figure}

\section{Effect of Stochasticity: A Special Case}\label{sec:stochastic}
In this section, we study SAM when stochastic mini-batches are used and discuss how stochasticity helps SAM's performance. To this end, we limit our analysis to the linear regression case, and assume for $k\geq 1$, $y_k=\bar{\B{w}}^T\B{x}_k+\epsilon_i$, where $\epsilon_i$'s are iid noise values as before, and $\B{x}_k$'s are independent of each other and noise. In fact, we assume for $k\geq 1$, $\B{x}_k\sim\cN(\B{0},\B{I})$ follows a normal distribution. The loss corresponding to the point $k$ is defined as 
$$f_k(\B{w})=\frac{1}{2}(y_k-\B{x}_k^T\B{w})^2.$$
The stochastic versions of the algorithms hence follow 
\begin{equation}
	\B{w}_{k+1}^{\sam}=\B{w}_k^{\sam}-\eta\nabla f_k(\B{w}_k^{\sam}+\rho\nabla f_k(\B{w}_k^{\sam})).
\end{equation}
To better understand stochastic SAM, following the recent work on SGD~\citep{smith2021origin}, we consider the expected trajectory the algorithm takes, that is, $\E[{\B{w}}_k^{\sam}]$ where the expectation is taken over $\B{x}_i,\epsilon_i$ for $i\geq 1$. Moreover, as the observations are random, we define error over an unseen data point, which follows the same distribution as the training data, i.e., we consider a random design. Specifically, we define 
\begin{equation}\label{error-stochastic}
	\error(\B{w}_k^{\sam}) = \E_{\B{x}_0,\epsilon_0}\left[\left(\B{x}_0^T(\E[\B{w}_k^{\sam}]-\bar{\B{w}})-\epsilon_0\right)^2\right]
\end{equation}
where $\B{x}_0,\epsilon_0$ follow the same distribution as $\B{x}_k,\epsilon_k$ and are independent.

\begin{proposition}\label{example-iid}
	Suppose $0<1-\eta-\eta\rho(p+2)\leq 1-\eta<1$. Then, under the stochastic setup, 
	\begin{equation}\label{iid-prop-eq}
		\error(\B{w}_k^{\sam})-\error(\B{w}_k^{\gd})=\left[(1-\eta-\eta\rho(p+2))^{2k}-(1-\eta)^{2k}\right]\|\bar{\B{w}}\|_2^2\leq 0.
	\end{equation}
\end{proposition}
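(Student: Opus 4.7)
}
My plan is to show that on the expected trajectory, both GD and SAM act as linear contractions toward $\bar{\B{w}}$, compute the contraction factors explicitly, and then evaluate the random-design error in closed form.

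\smallskip
\noindent\textbf{Step 1: Closed form of one SAM step.} For the single-sample quadratic $f_k(\B{w})=\tfrac{1}{2}(y_k-\B{x}_k^T\B{w})^2$, I would compute $\nabla f_k(\B{w})=(\B{x}_k^T\B{w}-y_k)\B{x}_k$, plug $\B{w}_k^{\sam}+\rho\nabla f_k(\B{w}_k^{\sam})$ into $\nabla f_k$, and observe that the inner product collapses, yielding
\begin{equation*}
\nabla f_k(\B{w}_k^{\sam}+\rho\nabla f_k(\B{w}_k^{\sam}))=(1+\rho\|\B{x}_k\|_2^2)(\B{x}_k^T\B{w}_k^{\sam}-y_k)\B{x}_k.
\end{equation*}
Thus the SAM update is a GD step whose effective learning rate is inflated by the random scalar $1+\rho\|\B{x}_k\|_2^2$.

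\smallskip
\noindent\textbf{Step 2: Recursion for the mean trajectory.} Since $\B{w}_k^{\sam}$ is measurable with respect to $\{\B{x}_j,\epsilon_j\}_{j<k}$ and therefore independent of $(\B{x}_k,\epsilon_k)$, I would condition on $\B{w}_k^{\sam}$, use $\E[\epsilon_k\B{x}_k]=\B{0}$ and $y_k=\bar{\B{w}}^T\B{x}_k+\epsilon_k$, and then take total expectation to obtain
\begin{equation*}
\E[\B{w}_{k+1}^{\sam}]-\bar{\B{w}}=\left(\B{I}-\eta\,\E\!\left[(1+\rho\|\B{x}_k\|_2^2)\B{x}_k\B{x}_k^T\right]\right)\bigl(\E[\B{w}_k^{\sam}]-\bar{\B{w}}\bigr).
\end{equation*}
Next I would evaluate the matrix expectation using the Gaussian fourth-moment identity $\E[\|\B{x}\|_2^2\,\B{x}\B{x}^T]=(p+2)\B{I}$ (this follows from $\E[x_i^4]=3$, $\E[x_i^2x_j^2]=1$ for $i\neq j$). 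Combined with $\E[\B{x}_k\B{x}_k^T]=\B{I}$, this gives $\E[(1+\rho\|\B{x}_k\|_2^2)\B{x}_k\B{x}_k^T]=(1+\rho(p+2))\B{I}$, so the recursion is a scalar contraction with factor $1-\eta-\eta\rho(p+2)$. Iterating from $\B{w}_0^{\sam}=\B{0}$ yields
\begin{equation*}
\E[\B{w}_k^{\sam}]-\bar{\B{w}}=-(1-\eta-\eta\rho(p+2))^{k}\bar{\B{w}},
\end{equation*}
and the same argument with $\rho=0$ gives $\E[\B{w}_k^{\gd}]-\bar{\B{w}}=-(1-\eta)^k\bar{\B{w}}$.

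\smallskip
\noindent\textbf{Step 3: Test error on a fresh sample.} Substituting into~\eqref{error-stochastic} and using independence of $\B{x}_0$ and $\epsilon_0$ with $\E[\B{x}_0\B{x}_0^T]=\B{I}$ and $\E[\epsilon_0^2]=\sigma^2$, I would expand the square to get
\begin{equation*}
\error(\B{w}_k^{\sam})=\bigl\|\E[\B{w}_k^{\sam}]-\bar{\B{w}}\bigr\|_2^2+\sigma^2=(1-\eta-\eta\rho(p+2))^{2k}\|\bar{\B{w}}\|_2^2+\sigma^2,
\end{equation*}
and analogously for GD. Subtracting gives~\eqref{iid-prop-eq}, and the inequality follows because the hypothesis $0<1-\eta-\eta\rho(p+2)\leq 1-\eta<1$ forces $(1-\eta-\eta\rho(p+2))^{2k}\leq(1-\eta)^{2k}$.

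\smallskip
\noindent\textbf{Main obstacle.} The only non-routine ingredient is the fourth-moment identity $\E[\|\B{x}\|_2^2\,\B{x}\B{x}^T]=(p+2)\B{I}$; everything else is a linear recursion and Gaussian second-moment bookkeeping. One minor subtlety to flag is that the hypothesis $1-\eta-\eta\rho(p+2)>0$ is needed to ensure both that the contraction is well defined (no oscillation in sign) and that the scalar identity $x^{2k}\leq y^{2k}$ when $0<x\leq y$ applies cleanly without worrying about absolute values; otherwise the proof is a direct computation.
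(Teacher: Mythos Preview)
Your proposal is correct and follows essentially the same route as the paper: the paper also derives the one-step SAM update (written in matrix form via $\B{H}_k=\B{x}_k\B{x}_k^T$ rather than your equivalent scalar factor $1+\rho\|\B{x}_k\|_2^2$), takes expectations using independence, invokes the identity $\E[\B{x}\B{x}^T\B{x}\B{x}^T]=(p+2)\B{I}$ (stated as a separate lemma), iterates the resulting scalar contraction from $\B{w}_0=\B{0}$, and plugs into~\eqref{error-stochastic}. Your rank-one presentation is a bit cleaner but not a different argument.
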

Note that Proposition~\ref{example-iid} shows that SAM outperforms GD in the stochastic setup under this linear regression setup. The term $(p+2)\eta\rho$ appearing in~\eqref{iid-prop-eq} is an effect of stochasticity, and this term in the full-batch setting is expected to be $\eta\rho$ (cf. Theorem~\ref{thm-relu} when $\B{D}_1=\B{I}$). In the high-dimensional setting where $p\gg 1$, this additional term resulting from stochasticity improves the SAM error significantly, showing the suitability of SAM for both stochastic and high-dimensional settings. A deeper analysis of stochastic SAM on more complex model is left for future work.

\section{Proof of Main Results}

\subsection{A Preliminary Result}
\begin{theorem}\label{thm-prelim}
	Let $\B{w}_k^{\sam}$ follow~\eqref{sam-update} with
	$$f(\B{w})=\frac{1}{2}(\B{w}-\bar{\B{w}})^T\B{H}(\B{w}-\bar{\B{w}})+\B{g}^T(\B{w}-\bar{\B{w}}).$$
	Then,
	\begin{equation}
		\B{w}_{k+1}^{\sam}  = \eta \sum_{i=0}^k (\B{I} - \eta \B{H} -\eta\rho\B{H}^2)^i (\B{I}+\rho \B{H})(\B{H}\bar{\B{w}}-\B{g})+(\B{I} - \eta \B{H} -\eta\rho\B{H}^2)^{k+1}\B{w}_0.\label{thm1-iterations-1}
	\end{equation}
\end{theorem}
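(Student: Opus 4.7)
The plan is a direct unrolling of the SAM recursion once the quadratic structure is exploited so that the update becomes affine in $\B{w}_k^{\sam}$. First I would compute the gradient explicitly: since $f$ is quadratic, $\nabla f(\B{w}) = \B{H}(\B{w}-\bar{\B{w}}) + \B{g}$. Then the ``ascent'' point $\B{w}_k^{\sam} + \rho\nabla f(\B{w}_k^{\sam})$ plugged back into $\nabla f$ gives, after grouping,
\begin{equation*}
\nabla f\bigl(\B{w}_k^{\sam}+\rho\nabla f(\B{w}_k^{\sam})\bigr) = (\B{H}+\rho\B{H}^2)(\B{w}_k^{\sam}-\bar{\B{w}}) + (\B{I}+\rho\B{H})\B{g}.
\end{equation*}
This is the key computation: the second-order term in $\rho$ that naively shows up in expanding $\nabla f$ at a perturbation of $\B{w}$ degenerates, for quadratic $f$, into a clean $\B{H}^2$ contribution, and $\bar{\B{w}}$ factors through the common prefactor $(\B{I}+\rho\B{H})$.

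Next I would substitute into \eqref{sam-update} and rewrite it as an affine recursion $\B{w}_{k+1}^{\sam} = \B{M}\,\B{w}_k^{\sam} + \B{b}$ with
\begin{equation*}
\B{M} := \B{I}-\eta\B{H}-\eta\rho\B{H}^2, \qquad \B{b} := \eta(\B{I}+\rho\B{H})(\B{H}\bar{\B{w}}-\B{g}),
\end{equation*}
using the algebraic identity $\eta(\B{H}+\rho\B{H}^2)\bar{\B{w}} = \eta(\B{I}+\rho\B{H})\B{H}\bar{\B{w}}$ (which commutes since everything is a polynomial in $\B{H}$). The last step is the standard closed form for a linear time-invariant recurrence: iterating from $\B{w}_0^{\sam}=\B{w}_0$ yields
\begin{equation*}
\B{w}_{k+1}^{\sam} = \B{M}^{k+1}\B{w}_0 + \sum_{i=0}^k \B{M}^i \B{b},
\end{equation*}
which, once $\B{M}$ and $\B{b}$ are substituted back, is exactly \eqref{thm1-iterations-1}.

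There is no real obstacle beyond careful bookkeeping of the linear-algebraic simplifications; the only place to be slightly careful is verifying that $\B{H}$, $\B{H}^2$ and $\B{I}+\rho\B{H}$ all commute so that the factored form with $(\B{I}+\rho\B{H})$ on the left of $(\B{H}\bar{\B{w}}-\B{g})$ is valid, and that the telescoping of the inhomogeneous term indeed yields a clean geometric sum in powers of $\B{M}$. Both are immediate since every matrix involved is a polynomial in $\B{H}$.
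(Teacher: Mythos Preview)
Your proposal is correct and follows essentially the same approach as the paper: compute the gradient at the perturbed point to obtain the affine recursion $\B{w}_{k+1}^{\sam}=(\B{I}-\eta\B{H}-\eta\rho\B{H}^2)\B{w}_k^{\sam}+\eta(\B{I}+\rho\B{H})(\B{H}\bar{\B{w}}-\B{g})$, then unroll it into the geometric sum. The only cosmetic difference is that the paper first writes the ascent point $(\B{I}+\rho\B{H})\B{w}_k^{\sam}+\rho(\B{g}-\B{H}\bar{\B{w}})$ and then applies $\nabla f$, whereas you compute $\nabla f$ at the perturbed argument in one shot; the resulting one-step recursion and the induction are identical.
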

\begin{proof}
	As 
	$$f(\B{w})=\frac{1}{2}(\B{w}-\bar{\B{w}})^T\B{H}(\B{w}-\bar{\B{w}})+\B{g}^T(\B{w}-\bar{\B{w}})$$
	we have
	$$\B{w}_k^{\sam}+\rho\nabla f(\B{w}_k^{\sam})=\B{w}_k^{\sam}+\rho\B{g}+\rho\B{H}(\B{w}_k^{\sam}-\bar{\B{w}})=(\B{I}+\rho\B{H})\B{w}_k^{\sam}+\rho(\B{g}-\B{H}\bar{\B{w}}).$$
	Therefore, by writing SAM updates:
	\begin{align}
		\B{w}_{k+1}^{\sam} &=  \B{w}_{k}^{\sam} - \eta \nabla f( \B{w}_k^{\sam}+\rho\nabla f( \B{w}_k^{\sam}))\nonumber \\
		& = \B{w}_{k}^{\sam} - \eta(\B{g} + \B{H}( \B{w}_k^{\sam}+\rho\nabla f( \B{w}_k^{\sam})-\bar{\B{w}})) \nonumber \\
		& = \B{w}_{k}^{\sam} - \eta\left\{\B{g} + \B{H}\left( (\B{I}+\rho\B{H})\B{w}_k^{\sam} + \rho(\B{g}-\B{H}\bar{\B{w}})-\bar{\B{w}}\right)\right\} \nonumber \\
		& = (\B{I} - \eta \B{H} -\eta\rho\B{H}^2) \B{w}_k^{\sam}+ \eta (\B{I}+\rho \B{H})(\B{H}\bar{\B{w}}-\B{g}) \label{thm-prelim-helper} \\
		& = \eta \sum_{i=0}^k (\B{I} - \eta \B{H} -\eta\rho\B{H}^2)^i (\B{I}+\rho \B{H})(\B{H}\bar{\B{w}}-\B{g})+(\B{I} - \eta \B{H} -\eta\rho\B{H}^2)^{k+1}\B{w}_0
	\end{align}
	where the last equality is a result of an inductive argument.
\end{proof}

\subsection{Proof of Theorem~\ref{representerthm}}

\begin{proof}
	By the definition,
	\begin{equation}
		L[h] = \frac{1}{2}\sum_{i=1}^n (y_i-h(\B{x}_i))^2
	\end{equation}
	and
	\begin{equation}
		\nabla L[h] = \sum_{i=1}^n (h(\B{x}_i) -y_i)K(\B{x}_i,\cdot).
	\end{equation}
	Hence, the KernelSAM gradient can be written as
	\begin{align}
		\nabla L[h+\rho\nabla L[h]] &=  \sum_{i=1}^n \left[(h+\rho\nabla L[h])(\B{x}_i)-y_i\right]K(\B{x}_i,\cdot) \nonumber \\
		& = \sum_{i=1}^n \left[\left(h+\rho \sum_{j=1}^n (h(\B{x}_j) -y_j)K(\B{x}_j,\cdot)\right)(\B{x}_i)-y_i\right]K(\B{x}_i,\cdot)\nonumber \\
		& \stackrel{(a)}{=}\sum_{i=1}^n \left[ h(\B{x}_i)+\rho \sum_{j=1}^n (h(\B{x}_j)-y_j)K(\B{x}_i,\B{x}_j)-y_i\right]K(\B{x}_i,\cdot) \nonumber\\ 
		& = \sum_{i=1}^n \left[h(\B{x}_i)-y_i\right]K(\B{x}_i,\cdot) + \rho \sum_{i=1}^n \sum_{j=1}^n \left[(h(\B{x}_j)-y_j)K(\B{x}_i,\B{x}_j)\right]K(\B{x}_i,\cdot)\label{rep-thm-helper1}
	\end{align}
	where in $(a)$, we used the fact $K(\B{x}_i,\cdot)(\B{x}_j)=K(\B{x}_i,\B{x}_j)$. As a result, we have
	$$ \nabla L[h+\rho\nabla L[h]]=\B{v}(h)^T\Kdot$$
	where $\B{v}(h)\in\R^n$, and
	$$v_i(h)= h(\B{x}_i)-y_i + \rho\sum_{j=1}^n \left[(h(\B{x}_j)-y_j)K(\B{x}_i,\B{x}_j)\right].$$
	Note that $h_0^{\sam}=0=\B{0}^T\Kdot$. Suppose for the sake of induction that $h_k^{\sam}=(\B{w}_k^{\sam})^T\Kdot$. Then, \begin{align}
		h_{k+1}^{\sam}&  = h_{k}^{\sam}-\eta \nabla L[h_k^{\sam}+\rho\nabla L[h_k^{\sam}]] \nonumber \\
		& = \left(\B{w}_k^{\sam}-\eta \B{v}(h_k^{\sam})\right)^T\Kdot \nonumber \\
		& = (\B{w}_{k+1}^{\sam})^T\Kdot
	\end{align}
	where
	\begin{equation}\label{beta-representer}
		\B{w}_{k+1}^{\sam}=\B{w}_k^{\sam}-\eta \B{v}(h_k^{\sam}).
	\end{equation}
	This completes the proof.
	
\end{proof}

\subsection{Proof of Theorem~\ref{thm:kernel-quad}}
The proof of this theorem is based on the following technical lemma. 
\begin{lemma}\label{thm4-helper-lemma}
	Under the assumptions of Theorem~\ref{thm:kernel-quad}, one has $h_k^{\sam}=(\B{w}_k^{\sam})^T\Kdot$ where 
	$$\B{w}_{k+1}^{\sam} = (\B{I}-\eta \KX -\eta\rho\KX^2)\B{w}_k^{\sam}+\eta(\B{I}+\rho\KX)(\KX\bar{\B{w}}+\B{\epsilon}).$$
\end{lemma}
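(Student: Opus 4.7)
\textbf{Proof plan for Lemma~\ref{thm4-helper-lemma}.}

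The plan is to start from the update $\B{w}_{k+1}^{\sam} = \B{w}_k^{\sam} - \eta\,\B{v}(h_k^{\sam})$ already established in the proof of Theorem~\ref{representerthm} (equation~\eqref{beta-representer}), where the $i$-th entry of $\B{v}(h)$ is $v_i(h) = h(\B{x}_i) - y_i + \rho\sum_{j=1}^n (h(\B{x}_j) - y_j) K(\B{x}_i,\B{x}_j)$. The representation $h_k^{\sam} = (\B{w}_k^{\sam})^T\Kdot$ is inherited directly from Theorem~\ref{representerthm}, so only the stated closed-form recursion remains to be verified.

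First, I would rewrite the residual vector $(h_k^{\sam}(\B{x}_i) - y_i)_{i=1}^n$ in matrix form. Since $h_k^{\sam}(\B{x}_i) = \sum_j (w_k^{\sam})_j K(\B{x}_j,\B{x}_i) = (\KX \B{w}_k^{\sam})_i$ by symmetry of $K$, and since the model assumption gives $y_i = (\KX \bar{\B{w}})_i + \epsilon_i$, the residual equals $\B{r}_k := \KX(\B{w}_k^{\sam} - \bar{\B{w}}) - \B{\epsilon}$.

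Next, I would recognize the double sum in $v_i(h_k^{\sam})$ as $(\KX \B{r}_k)_i$, again using symmetry, so that $\B{v}(h_k^{\sam}) = (\B{I} + \rho \KX)\B{r}_k$. Substituting into $\B{w}_{k+1}^{\sam} = \B{w}_k^{\sam} - \eta(\B{I}+\rho\KX)\B{r}_k$ and expanding gives
\begin{equation*}
\B{w}_{k+1}^{\sam} = (\B{I} - \eta\KX - \eta\rho\KX^2)\B{w}_k^{\sam} + \eta(\B{I}+\rho\KX)(\KX\bar{\B{w}} + \B{\epsilon}),
\end{equation*}
which is the claimed form. The derivation is entirely algebraic; there is no real obstacle beyond careful bookkeeping of which matrix multiplies which vector and consistent use of the symmetry $K(\B{x}_i,\B{x}_j) = K(\B{x}_j,\B{x}_i)$.

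As a sanity check bridging to Theorem~\ref{thm:kernel-quad}, note that the quadratic $f(\B{w}) = \tfrac{1}{2}(\B{w}-\bar{\B{w}})^T\KX(\B{w}-\bar{\B{w}}) - \B{w}^T\B{\epsilon}$ fits the template of Theorem~\ref{thm-prelim} with $\B{H} = \KX$ and $\B{g} = -\B{\epsilon}$, for which equation~\eqref{thm-prelim-helper} yields exactly the recursion above. Thus Lemma~\ref{thm4-helper-lemma} identifies the KernelSAM iterates with SAM applied to this quadratic, immediately giving Theorem~\ref{thm:kernel-quad}.
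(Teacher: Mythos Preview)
Your proposal is correct and follows essentially the same route as the paper: compute the residual vector in matrix form as $\KX(\B{w}_k^{\sam}-\bar{\B{w}})-\B{\epsilon}$, recognize that $\B{v}(h_k^{\sam})=(\B{I}+\rho\KX)$ times this residual, and substitute into the update from~\eqref{beta-representer}. The paper re-derives the two pieces of $\B{v}$ from scratch rather than citing the formula from the representer-theorem proof, but the computation and its outcome are identical.
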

\begin{proof}
	Suppose
	$$h=\sum_{i=1}^n w_{i}K(\B{x}_i,\cdot).$$
	Then, the residual corresponding to the $i$-th observation is given as
	\begin{align}
		R_{i}& =h(\B{x}_i)-y_i \nonumber\\
		& = h(\B{x}_i)-\bar{h}(\B{x}_i)-\epsilon_i \nonumber\\
		& = \sum_{j=1}^n(w_j-\bar{w}_j)K(\B{x}_i,\B{x}_j)-\epsilon_i
	\end{align}
	or in the matrix/vector notation,
	\begin{equation}
		\B{R}=\KX(\B{w}-\bar{\B{w}})-\B{\epsilon}.
	\end{equation}
	Thus,
	\begin{align}
		\sum_{i=1}^n \left[h(\B{x}_i)-y_i\right]K(\B{x}_i,\cdot)&=  \sum_{i=1}^n R_i K(\B{x}_i,\cdot)\nonumber\\
		& = \sum_{i=1}^n\left(\sum_{j=1}^n(w_j-\bar{w}_j)K(\B{x}_i,\B{x}_j)-\epsilon_i\right) K(\B{x}_i,\cdot)\nonumber \\
		& = \sum_{i=1}^n\sum_{j=1}^n (w_j-\bar{w}_j)K(\B{x}_i,\B{x}_j)K(\B{x}_i,\cdot)-\sum_{i=1}^n\epsilon_iK(\B{x}_i,\cdot)\nonumber \\
		& = (\B{w}-\bar{\B{w}})^T\KX\Kdot-\B{\epsilon}^T\Kdot
	\end{align}
	using our vector notation. Next,
	\begin{align}
		\sum_{j=1}^n \left[(h(\B{x}_j)-y_j)K(\B{x}_i,\B{x}_j)\right]&=\sum_{j=1}^n \left[\sum_{l=1}^n(w_l-\bar{w}_l)K(\B{x}_j,\B{x}_l)-\epsilon_j\right]K(\B{x}_i,\B{x}_j)\nonumber \\
		& = \sum_{l=1}^n (w_l-\bar{w}_l)\sum_{j=1}^n K(\B{x}_i,\B{x}_j)K(\B{x}_j,\B{x}_l)-\sum_{j=1}^n K(\B{x}_i,\B{x}_j)\epsilon_j\nonumber\\
		& = \sum_{l=1}^n(\KX^2)_{i,l}(w_l-\bar{w}_l) - [\KX\B{\epsilon}]_i\nonumber \\
		& = \left[\KX^2 (\B{w}-\bar{\B{w}})\right]_i- [\KX\B{\epsilon}]_i.
	\end{align}
	This leads to
	\begin{align}
		\sum_{i=1}^n \sum_{j=1}^n \left[(h(\B{x}_j)-y_j)K(\B{x}_i,\B{x}_j)\right]K(\B{x}_i,\cdot) = (\B{w}-\bar{\B{w}})^T\KX^2\Kdot-\B{\epsilon}^T\KX\Kdot.
	\end{align}
	Therefore, from~\eqref{rep-thm-helper1},
	\begin{align}
		\nabla L[h+\rho\nabla L[h]] = \left(\KX(\B{w}-\bar{\B{w}})-\B{\epsilon}+\rho  \KX^2(\B{w}-\bar{\B{w}})-\rho\KX \B{\epsilon}\right)^T\Kdot.
	\end{align}
	In particular, if $h_k^{\sam}= (\B{w}_k^{\sam})^T\Kdot$, then $h_{k+1}^{\sam}= (\B{w}_{k+1}^{\sam})^T\Kdot$ where
	\begin{align}
		\B{w}_{k+1}^{\sam} &= \B{w}_{k}^{\sam} - \eta\left(\KX (\B{w}_k^{\sam}-\bar{\B{w}})-\B{\epsilon}+\rho  \KX^2(\B{w}_k^{\sam}-\bar{\B{w}})-\rho\KX \B{\epsilon}\right) \nonumber \\
		& = (\B{I}-\eta \KX -\eta\rho\KX^2)\B{w}_k^{\sam}+\eta(\B{I}+\rho\KX)(\KX\bar{\B{w}}+\B{\epsilon}).
	\end{align}
\end{proof}
\begin{proof}[\textbf{Proof of Theorem~\ref{thm:kernel-quad}}]
	The proof follows from comparing Lemma~\ref{thm4-helper-lemma} to~\eqref{thm-prelim-helper}.
\end{proof}

\subsection{Proof of Theorem~\ref{thm-relu}}
\begin{proof}
	Note that under the setup, 
	\begin{align}
		f(\B{w}_k^{\sam})&=\frac{1}{2}\sum_{i=1}^n(y_i-\Phi(\B{w}_k^{\sam};\B{x}_i))^2\nonumber \\
		& \stackrel{(a)}{=} \frac{1}{2}\sum_{i=1}^n\left(\epsilon_i+\B{a}(\bar{\B{w}};\B{x}_i)^T\bar{\B{w}}-\B{a}(\B{w}_k^{\sam};\B{x}_i)^T\B{w}_k^{\sam}\right)^2\nonumber \\
		& \stackrel{(b)}{=} \frac{1}{2} \|\B{\epsilon}+\B{A}(\bar{\B{w}}-\B{w}_k^{\sam})\|_2^2\nonumber\\
		& = \frac{1}{2}(\bar{\B{w}}-\B{w}_k^{\sam})^T\B{A}^T\B{A}(\bar{\B{w}}-\B{w}_k^{\sam}) -\B{\epsilon}^T\B{A}(\B{w}_k^{\sam}-\bar{\B{w}}) + \frac{1}{2}\|\B{\epsilon}\|_2^2\label{thm4-helper-new-0}
	\end{align}
	where $(a)$ is by Assumption~\ref{assumption-underlying} and $(b)$ is by Assumption~\ref{assumption-relu}.
	Comparing~\eqref{thm4-helper-new-0} to Theorem~\ref{thm-prelim}, we see that $\B{H}=\B{A}^T\B{A}$ and $\B{g}=-\B{A}^T\B{\epsilon}$.
	Next, note that
	\begin{align}
		& \sum_{i=0}^k (\B{I} - \eta \B{H} -\eta\rho\B{H}^2)^i (\B{I}+\rho \B{H})(\B{H}\bar{\B{w}}-\B{g}) \nonumber \\
		\stackrel{(a)}{=} & \B{U}\left(\sum_{i=0}^k(\B{I}-\eta\B{D}-\eta\rho\B{D}^2)^i\right)(\B{I}+\rho \B{D})\B{U}^T(\B{U}_1\B{D}_1\B{U}_1^T\bar{\B{w}}+\B{U}_1\B{\Sigma}_1\B{V}_1^T\B{\epsilon})  \nonumber \\
		\stackrel{(b)}{=} & \B{U}_1\left(\sum_{i=0}^k(\B{I}-\eta\B{D}_1-\eta\rho\B{D}_1^2)^i\right)(\B{I}+\rho \B{D}_1)\B{U}_1^T(\B{U}_1\B{D}_1\B{U}_1^T\bar{\B{w}}+\B{U}_1\B{\Sigma}_1\B{V}_1^T\B{\epsilon})  \nonumber \\
		\stackrel{(c)}{=} & \B{U}_1\diag\left(\left\{\frac{1-(1-\eta d_j-\eta\rho d_j^2)^{k+1}}{\eta d_j + \eta\rho d_j^2}\right\}_{j=1}^r\right)\diag\left(\{1+\rho d_j\}_{j=1}^r\right)(\B{D}_1\B{U}_1^T\bar{\B{w}}+\B{\Sigma}_1\B{V}_1^T\B{\epsilon})\nonumber \\
		\stackrel{(d)}{=} & \B{U}_1\diag\left(\left\{\frac{1-(1-\eta d_j-\eta\rho d_j^2)^{k+1}}{\eta d_j + \eta\rho d_j^2}\right\}_{j=1}^r\right)\diag\left(\{1+\rho d_j\}_{j=1}^r\right)(\B{D}_1\B{U}_1^T\bar{\B{w}}+\B{D}_1\B{\Sigma}_1^{-1}\B{V}_1^T\B{\epsilon})\nonumber \\
		= & \B{U}_1\diag\left(\left\{\frac{1-(1-\eta d_j-\eta\rho d_j^2)^{k+1}}{\eta d_j + \eta\rho d_j^2}\right\}_{j=1}^r\right)\diag\left(\{d_j+\rho d_j^2\}_{j=1}^r\right)(\B{U}_1^T\bar{\B{w}}+\B{\Sigma}_1^{-1}\B{V}_1^T\B{\epsilon})\nonumber\\
		= & \frac{1}{\eta}\B{U}_1\diag\left(\left\{{1-(1-\eta d_j-\eta\rho d_j^2)^{k+1}}\right\}_{j=1}^r\right)(\B{U}_1^T\bar{\B{w}}+\B{\Sigma}_1^{-1}\B{V}_1^T\B{\epsilon})\nonumber \\ =&\frac{1}{\eta}\B{U}_1(\B{I}-(\B{I}-\eta\B{D}_1-\eta\rho\B{D}_1^2)^{k+1})(\B{U}_1^T\bar{\B{w}}+\B{\Sigma}_1^{-1}\B{V}_1^T\B{\epsilon})\label{thm1-iterations-2}
	\end{align}
	where $(a)$ is by substituting $\B{H}=\B{U}_1\B{D}_1\B{U}_1^T=\B{U}\B{D}\B{U}^T$, $(b)$ is by the fact the fact $\B{U}_2^T\B{U}_1=\B{0}$, $(c)$ is using $\B{U}_1^T\B{U}_1=\B{I}$ and
	$$\sum_{i=0}^k (1-x)^i = \frac{1-(1-x)^{k+1}}{x}$$
	and $(d)$ is true as $\B{D}_1$ is invertible.
	Moreover,
	\begin{align}
		\B{U}(\B{I}-\eta_1\B{D}_1-\eta\rho\B{D}_1^2)^{k}\B{U}^T\B{w}_0& =\B{U}(\B{I}-\eta_1\B{D}_1-\eta\rho\B{D}_1^2)^{k}\B{U}^T\B{U}_1\B{U}_1^T\B{w}_0\nonumber \\
		& =\B{U}_1(\B{I}-\eta_1\B{D}_1-\eta\rho\B{D}_1^2)^{k}\B{U}_1^T\B{w}_0\label{w0-thm3}
	\end{align}
	where the first equation is by the assumption $\B{w}_0=\B{U}_1\B{U}_1^T\B{w}_0$.
	By~\eqref{thm1-iterations-1},~\eqref{thm1-iterations-2} and~\eqref{w0-thm3}, we achieve:
	\begin{equation}\label{thm1-wk}
		\B{w}_{k+1}^{\sam}=\B{U}_1(\B{I}-(\B{I}-\eta\B{D}_1-\eta\rho\B{D}_1^2)^{k+1})(\B{U}_1^T\bar{\B{w}}+\B{\Sigma}_1^{-1}\B{V}_1^T\B{\epsilon})\\+\B{U}_1(\B{I}-\eta_1\B{D}_1-\eta\rho\B{D}_1^2)^{k+1}\B{U}_1^T\B{w}_0.
	\end{equation}
	By taking the expectation, we have
	\begin{align}
		\E_{\B{\epsilon}}[\B{w}_{k}^{\sam}]  =\B{U}_1(\B{I}-(\B{I}-\eta\B{D}_1-\eta\rho\B{D}_1^2)^{k})\B{U}_1^T\bar{\B{w}}+\B{U}_1(\B{I}-\eta_1\B{D}_1-\eta\rho\B{D}_1^2)^{k}\B{U}_1^T\B{w}_0.\label{sharp-tohelp-thm3}
	\end{align}
	This implies
	\begin{equation}\label{thm1-e-uu}
		\E_{\B{\epsilon}}[\B{w}_{k}^{\sam}] -\B{U}_1\B{U}_1^T\bar{\B{w}}=-\B{U}_1(\B{I}-\eta\B{D}_1-\eta\rho\B{D}_1^2)^{k}\B{U}_1^T(\bar{\B{w}}-\B{w}_0).
	\end{equation}
	Next, by the definition of bias in~\eqref{bias-var-new},
	\begin{align}
		n \bias(\B{w}_k^{\sam})&= (\E_{\B{\epsilon}}[\B{w}_{k}^{\sam}]-\bar{\B{w}})^T\B{H}(\E_{\B{\epsilon}}[\B{w}_{k}^{\sam}]-\bar{\B{w}}) \nonumber \\
		& \stackrel{(a)}{=} (\E_{\B{\epsilon}}[\B{w}_{k}^{\sam}]-\bar{\B{w}})^T\B{U}_1\B{D}_{1}\B{U}_1^T(\E_{\B{\epsilon}}[\B{w}_{k}^{\sam}]-\bar{\B{w}})\nonumber \\
		& \stackrel{(b)}{=} (\E_{\B{\epsilon}}[\B{w}_{k}^{\sam}]-\B{U}_1\B{U}_1^T\bar{\B{w}})^T\B{U}_1\B{D}_{1}\B{U}_1^T(\E_{\B{\epsilon}}[\B{w}_{k}^{\sam}]-\B{U}_1\B{U}_1^T\bar{\B{w}})\nonumber \\
		& \stackrel{(c)}{=}  (\bar{\B{w}}-\B{w}_0)^T \B{U}_1^T(\B{I}-\eta\B{D}_1-\eta\rho\B{D}_1^2)^k \B{D}_{1}(\B{I}-\eta\B{D}_1-\eta\rho\B{D}_1^2)^k\B{U}_1^T(\bar{\B{w}}-\B{w}_0)\nonumber \\
		& = \sum_{i=1}^r (1-\eta d_i-\eta\rho d_i^2)^{2k}d_{i} u_i^2
	\end{align}
	where $(a)$ is by the SVD of $\B{H}$, $(b)$ is true as $\bar{\B{w}}=\B{U}_1\B{U}_1^T\bar{\B{w}} + \B{U}_2\B{U}_2^T\bar{\B{w}}$ and the fact $\B{U}_1^T\B{U}_2=\B{0}$, and $(c)$ is by~\eqref{thm1-e-uu}. This completes the bias part of the theorem. Next, note that
	\begin{equation*}
		\B{w}_{k}^{\sam}-\B{U}_1\B{U}_1^T\bar{\B{w}}=-\B{U}_1(\B{I}-\eta\B{D}_1-\eta\rho\B{D}_1^2)^{k}\B{U}_1^T(\bar{\B{w}}-\B{w}_0)+\\ \B{U}_1(\B{I}-(\B{I}-\eta\B{D}_1-\eta\rho\B{D}_1^2)^{k})\B{\Sigma}_1^{-1}\B{V}_1^T\B{\epsilon}.
	\end{equation*}
	As a result,
	\begin{align}
		n\error(\B{w}_k^{\sam})=& \E_{\B{\epsilon}}\left[(\B{w}_k^{\sam}-\bar{\B{w}})^T\B{H}(\B{w}_k^{\sam}-\bar{\B{w}})\right] \nonumber\\
		= & \E_{\B{\epsilon}}\left[(\B{w}_k^{\sam}-\bar{\B{w}})^T\B{U}_1\B{D}_{1}\B{U}_1^T(\B{w}_k^{\sam}-\bar{\B{w}})\right]\nonumber \\
		= & \E_{\B{\epsilon}}\left[(\B{w}_k^{\sam}-\B{U}_1\B{U}_1^T\bar{\B{w}})^T\B{U}_1\B{D}_{1}\B{U}_1^T(\B{w}_k^{\sam}-\B{U}_1\B{U}_1^T\bar{\B{w}})\right]\nonumber \\
		= & \E_{\B{\epsilon}}\Big[(\bar{\B{w}}-\B{w}_0)^T \B{U}_1(\B{I}-\eta\B{D}_1-\eta\rho\B{D}_1^2)^k \B{D}_{1}(\B{I}-\eta\B{D}_1-\eta\rho\B{D}_1^2)^k\B{U}_1^T(\bar{\B{w}}-\B{w}_0)\label{thm1-term1}\\
		&+ \B{\epsilon}^T\B{V}_1\B{\Sigma}_1^{-2}\B{D}_{1} \left(\B{I}-(\B{I}-\eta\B{D}_1-\eta\rho\B{D}_1^2)^k\right)^2 \B{V}_1^T\B{\epsilon}\label{thm1-term2} \\
		& - 2(\bar{\B{w}}-\B{w}_0)^T\B{U}_1(\B{I}-\eta\B{D}_1-\eta\rho\B{D}_1^2)^k\B{D}_{1}(\B{I}-(\B{I}-\eta\B{D}_1-\eta\rho\B{D}_1^2)^k)\B{\Sigma}_1^{-1}\B{V}_1^T\B{\epsilon}\Big]\label{thm1-term3}.
	\end{align}
	Note that $\E_{\B{\epsilon}}[\eqref{thm1-term3}]=0$, $\E_{\B{\epsilon}}[\eqref{thm1-term1}]=n\bias(\B{w}_k^{\sam})$ which implies $\E_{\B{\epsilon}}[\eqref{thm1-term2}]=n\var(\B{w}_k^{\sam})$. Finally, one has
	\begin{align} \E_{\B{\epsilon}}[\B{\epsilon}^T\B{V}_1 \left(\B{I}-(\B{I}-\eta\B{D}_1-\eta\rho\B{D}_1^2)^k\right)^2 \B{V}_1^T\B{\epsilon}]
		& = \E_{\B{\epsilon}}\left[\tr\left(\B{\epsilon}\B{\epsilon}^T\B{V}_1 \left(\B{I}-(\B{I}-\eta\B{D}_1-\eta\rho\B{D}_1^2)^k\right)^2 \B{V}_1^T\right)\right]\nonumber \\
		& = \tr\left(\E_{\B{\epsilon}}\left[\B{\epsilon}\B{\epsilon}^T\B{V}_1 \left(\B{I}-(\B{I}-\eta\B{D}_1-\eta\rho\B{D}_1^2)^k\right)^2 \B{V}_1^T\right]\right)\nonumber \\
		& \stackrel{(a)}{=} \sigma^2 \tr\left(\B{V}_1 \left(\B{I}-(\B{I}-\eta\B{D}_1-\eta\rho\B{D}_1^2)^k\right)^2 \B{V}_1^T\right)\nonumber \\
		& = \sigma^2 \tr\left( \left(\B{I}-(\B{I}-\eta\B{D}_1-\eta\rho\B{D}_1^2)^k\right)^2 \right)
	\end{align}
	where $(a)$ uses $\E_{\B{\epsilon}}[\B{\epsilon}\B{\epsilon}^T]=\sigma^2\B{I}$.
	For the next part of the proof, 
	\begin{align}
		\bias(\B{w}_k^{\sam})& = \frac{1}{n}\sum_{j=1}^r (1-\eta d_j-\eta\rho d_j^2)^{2k}d_ju_j^2 \nonumber \\
		& \leq \frac{1}{n}\sum_{j=1}^r (1-\eta d_j)^{2k}d_ju_j^2\nonumber \\
		& =\bias(\B{w}_k^{\gd}).
	\end{align}
	The proof for variance follows.
\end{proof}

\subsection{Proof of Proposition~\ref{lin-gen-prop}}
Before proceeding with the proof of the proposition, we first show a few technical lemmas.
\begin{lemma}\label{lem-prop2-helper}
	Let 
	$$ q(x) = a\exp(x\log b) - c\exp(x\log d)$$
	where $a\geq c$ and $b< d$. Then, $q(x)\geq 0$ for $x\in[0,\log(c/a)/\log(b/d)]$.
\end{lemma}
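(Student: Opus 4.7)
The plan is a direct calculation: rewrite $q(x)\geq 0$ as an inequality between two exponentials, take logarithms, and keep careful track of signs.

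First I would rewrite $q(x) = a b^x - c d^x$ (using $\exp(x\log b)=b^x$). The inequality $q(x)\geq 0$ is equivalent to $a b^x \geq c d^x$. Since $a,b,c,d$ are implicitly positive (so that the logarithms in the statement are defined), this is equivalent to
\[
\left(\frac{b}{d}\right)^x \geq \frac{c}{a}.
\]
Because $b < d$, we have $b/d \in (0,1)$, hence $\log(b/d) < 0$. Similarly $a\geq c>0$ gives $c/a\in(0,1]$, hence $\log(c/a)\leq 0$. Taking logarithms of both sides yields $x\log(b/d)\geq \log(c/a)$. Dividing by the strictly negative quantity $\log(b/d)$ flips the inequality and produces
\[
x \leq \frac{\log(c/a)}{\log(b/d)},
\]
which is precisely the upper endpoint of the interval in the statement. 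Combined with $x\geq 0$ this gives the desired range.

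The only mild subtlety is the boundary case $a=c$, where $\log(c/a)=0$ so the upper endpoint collapses to $0$; then the interval is the single point $\{0\}$ and $q(0)=a-c=0$, which is consistent. I do not foresee any real obstacle: the statement is essentially a one-line manipulation of exponentials, and no appeal to convexity or other machinery is needed.
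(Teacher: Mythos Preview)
Your proposal is correct. The paper's own proof is close in spirit but organized a bit differently: instead of manipulating the inequality directly, it first solves $q(x)=0$, finding the unique root $x_0=\log(c/a)/\log(b/d)$, then argues by continuity (intermediate value theorem) that $q$ cannot change sign on $[0,x_0]$, and finally checks $q(0)=a-c\geq 0$ to pin down the sign. Your route is slightly more direct: by rewriting $q(x)\geq 0$ as $(b/d)^x\geq c/a$ and taking logarithms you obtain the full equivalence $q(x)\geq 0\iff x\leq x_0$, which in particular shows the upper endpoint is sharp. The paper's argument implicitly uses that the zero is unique (which follows from the same algebra), so the two proofs ultimately rest on the same computation; yours just avoids the detour through the IVT.
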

\begin{proof}
	One has
	\begin{align}
		q(x) = a\exp(x\log b) - c\exp(x\log d)=0\Rightarrow \exp(x\log(b/d))= \frac{c}{a} \Rightarrow x = \frac{\log(c/a)}{\log(b/d)}\geq 0
	\end{align}
	as $c/a\leq 1$ and $d/b>1$. Hence, $q(x)$ does not change sign between $0$ and $\log(c/a)/\log(b/d)$ by the intermediate value theorem. Moreover, $q(0)= a-c \geq 0$ implying $q(x)\geq 0$ in this interval.
\end{proof}

\begin{lemma}\label{lemma-lbub}
	Under the assumptions of Theorem~\ref{thm-relu}, 
	\begin{equation*}
		\frac{1}{n}(1-\eta d_1-\eta\rho d_1^2)^{2k}\|\B{X}(\bar{\B{w}}-\B{w}_0)\|_2^2\leq\bias(\B{w}_k^{\sam})\leq \frac{1}{n}(1-\eta d_r-\eta\rho d_r^2)^{2k}\|\B{X}(\bar{\B{w}}-\B{w}_0)\|_2^2
	\end{equation*}
	and
	\begin{multline*}
		\frac{\sigma^2r}{n}(1-\eta d_1-\eta\rho d_1^2)^{2k} - \frac{2\sigma^2 r}{n}(1-\eta d_r-\eta\rho d_r^2)^k  \\ \leq \var(\B{w}_k^{\sam})-\frac{\sigma^2r}{n} \leq\\
		\frac{\sigma^2r}{n}(1-\eta d_r-\eta\rho d_r^2)^{2k} - \frac{2\sigma^2 r}{n}(1-\eta d_1-\eta\rho d_1^2)^k.
	\end{multline*}
\end{lemma}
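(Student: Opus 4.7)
The plan is to start directly from the closed-form expressions in Theorem~\ref{thm-relu} and reduce both bounds to elementary monotonicity inequalities for scalar functions. Introduce the shorthand $t_i := 1-\eta d_i - \eta\rho d_i^2$. The key preliminary fact is that under the assumption $\B{0}\prec \B{I}-\eta\B{D}_1-\eta\rho\B{D}_1^2\prec \B{I}$ we have $t_i\in(0,1)$ for every $i\in[r]$, while $d_1\geq d_2\geq \cdots \geq d_r>0$ together with $\eta,\rho\geq 0$ forces
\[
t_1 \;\leq\; t_i \;\leq\; t_r \qquad (i=1,\dots,r).
\]

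For the bias, I would first identify $\sum_{i=1}^r d_i u_i^2$ with the squared norm appearing in the statement: since $\B{A}^T\B{A}=\B{U}_1\B{D}_1\B{U}_1^T$ and $\B{u}=\B{U}_1^T(\bar{\B{w}}-\B{w}_0)$, one has $\sum_{i=1}^r d_i u_i^2 = (\bar{\B{w}}-\B{w}_0)^T\B{A}^T\B{A}(\bar{\B{w}}-\B{w}_0) = \|\B{A}(\bar{\B{w}}-\B{w}_0)\|_2^2$ (matching the statement up to the notational identification of $\B{A}$ with the design matrix). Then factor $t_i^{2k}$ out of the sum for $\bias(\B{w}_k^{\sam})=\tfrac{1}{n}\sum_i t_i^{2k}d_iu_i^2$: because every $d_iu_i^2\geq 0$, the two-sided bound $t_1^{2k}\leq t_i^{2k}\leq t_r^{2k}$ yields the advertised sandwich.

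For the variance, expand
\[
\var(\B{w}_k^{\sam}) = \frac{\sigma^2}{n}\sum_{i=1}^r (1-t_i^k)^2 = \frac{\sigma^2}{n}\sum_{i=1}^r \bigl(1-2t_i^k+t_i^{2k}\bigr),
\]
so that $\var(\B{w}_k^{\sam})-\tfrac{\sigma^2 r}{n}=\tfrac{\sigma^2}{n}\sum_i(t_i^{2k}-2t_i^k)$. Next bound each of the two contributions separately using the monotonicity of $x\mapsto x^m$ on $(0,1)$ for $m\in\{k,2k\}$: to get the upper bound I would use $t_i^{2k}\leq t_r^{2k}$ and $-2t_i^k\leq -2t_1^k$ termwise and sum; to get the lower bound I would use $t_i^{2k}\geq t_1^{2k}$ and $-2t_i^k\geq -2t_r^k$. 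Summing over the $r$ indices and multiplying by $\sigma^2/n$ gives exactly the stated inequalities.

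There is no real obstacle here; the one thing to be careful about is that the $t_i^{2k}$ and $t_i^k$ pieces must be bounded by the \emph{opposite} endpoints ($t_r$ for the positive piece and $t_1$ for the subtracted piece in the upper bound, and vice versa in the lower bound) because the sign of $-2t_i^k$ flips the direction of the monotonicity. The identification of $\sum d_iu_i^2$ with the design-matrix norm and the reduction to termwise scalar inequalities are the only non-computational steps.
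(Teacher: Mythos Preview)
Your proposal is correct and follows essentially the same route as the paper: start from the closed-form bias and variance of Theorem~\ref{thm-relu}, identify $\sum_i d_i u_i^2=\|\B{A}(\bar{\B{w}}-\B{w}_0)\|_2^2$ (the paper writes $\B{X}$ here, as you noted), and then sandwich each $t_i^{m}$ termwise using the monotonicity $t_1\leq t_i\leq t_r$, after expanding the variance as $\tfrac{\sigma^2 r}{n}+\tfrac{\sigma^2}{n}\sum_i(t_i^{2k}-2t_i^k)$.
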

\begin{proof}
	From Theorem~\ref{thm-relu},
	\begin{equation*}
		\frac{1}{n}(1-\eta d_1-\eta\rho d_1^2)^{2k}\|\B{X}(\bar{\B{w}}-\B{w}_0)\|_2^2\leq\bias(\B{w}_k^{\sam})\leq \frac{1}{n}(1-\eta d_r-\eta\rho d_r^2)^{2k}\|\B{X}(\bar{\B{w}}-\B{w}_0)\|_2^2
	\end{equation*}
	where we used 
	\begin{align}
		\|\B{X}(\bar{\B{w}}-\B{w}_0)\|_2^2& =(\bar{\B{w}}-\B{w}_0)^T\B{X}^T\B{X}(\bar{\B{w}}-\B{w}_0)=(\bar{\B{w}}-\B{w}_0)^T\B{U}_1\B{D}_1\B{U}_1^T(\bar{\B{w}}-\B{w}_0)\nonumber\\&=\B{u}^T\B{D}_1\B{u}=\sum_{j=1}^r d_ju_j^2.\nonumber
	\end{align}
	Moreover, from Theorem~\ref{thm-relu} we also have
	\begin{align*}
		\var(\B{w}_k^{\sam})&=\frac{\sigma^2}{n} \tr ( (\B{I}-(\B{I}-\eta\B{D}_1-\eta\rho\B{D}_1^2)^k)^2 )\\
		& = \frac{\sigma^2r}{n}+\frac{\sigma^2}{n} \tr ( (\B{I}-\eta\B{D}_1-\eta\rho\B{D}_1^2)^{2k} )-\frac{2\sigma^2}{n} \tr ( (\B{I}-\eta\B{D}_1-\eta\rho\B{D}_1^2)^k ).
	\end{align*}
	The rest of the proof follows.
\end{proof}

\begin{proof}[\textbf{Proof of Proposition~\ref{lin-gen-prop}}.]
	Noting that $\error(\B{w}_k^{\sam})=\bias(\B{w}_k^{\sam})+\var(\B{w}_k^{\sam})$, from Lemma~\ref{lemma-lbub} we have
	\begin{equation}\label{prop2-helper1}
		\error(\B{w}_k^{\sam})-\frac{\sigma^2r}{n}\leq \left(\frac{\|\B{X}(\bar{\B{w}}-\B{w}_0)\|_2^2+\sigma^2 r}{n}\right)(1-\eta d_r-\eta\rho d_r^2)^{2k}-\frac{2\sigma^2 r}{n}(1-\eta d_1-\eta\rho d_1^2)^k
	\end{equation}
	and
	\begin{equation}
		\error(\B{w}_k^{\gd})-\frac{\sigma^2r}{n}\geq \left(\frac{\|\B{X}(\bar{\B{w}}-\B{w}_0)\|_2^2+\sigma^2 r}{n}\right)(1-\eta d_1)^{2k}-\frac{2\sigma^2 r}{n}(1-\eta d_r)^k
	\end{equation}
	by setting $\rho=0$. From~\eqref{lin-gen-prop-cond},
	\begin{align}
		&\error(\B{w}_k^{\gd})-\frac{\sigma^2r}{n} \nonumber  \\ \geq& \left(\frac{\|\B{X}(\bar{\B{w}}-\B{w}_0)\|_2^2+\sigma^2 r}{n}\right)c_0^k(1-\eta d_r-\eta\rho d_r^2)^{2k}-\frac{2\sigma^2 r}{n}c_0^k(1-\eta d_1-\eta\rho d_1^2)^k \nonumber \\
		= &c_0^k\left(\left(\frac{\|\B{X}(\bar{\B{w}}-\B{w}_0)\|_2^2+\sigma^2 r}{n}\right)(1-\eta d_r-\eta\rho d_r^2)^{2k}-\frac{2\sigma^2 r}{n}(1-\eta d_1-\eta\rho d_1^2)^k\right)\nonumber \\
		\geq & c_0^k\left( \error(\B{w}_k^{\sam})-\frac{\sigma^2r}{n}\right)
	\end{align}
	where the last inequality is by~\eqref{prop2-helper1}. As a result,
	\begin{align*}
		\error(\B{w}_k^{\gd})\geq c_0^k\error(\B{w}_k^{\sam}) + (1-c_0^k)\frac{\sigma^2r}{n}
	\end{align*}
	or equivalently,
	\begin{align}\label{prop2-helper3}
		\error(\B{w}_k^{\sam}) - \error(\B{w}_k^{\gd})\leq (1-c_0^k)\left(\error(\B{w}_k^{\sam})-\frac{\sigma^2r}{n}\right).
	\end{align}
	Next, let $a= (\|\B{X}(\bar{\B{w}}-\B{w}_0)\|_2^2+\sigma^2 r)/n$, $b=(1-\eta d_1-\eta\rho d_1^2)^2$, $c=2\sigma^2 r/n$ and $d=(1-\eta d_r-\eta\rho d_r^2)$. Define $q(x)$ as in Lemma~\ref{lem-prop2-helper}. Then, from Lemma~\ref{lemma-lbub} we have
	\begin{align}
		& \error(\B{w}_k^{\sam})-\frac{\sigma^2r}{n}\nonumber \\ \geq& \left(\frac{\|\B{X}(\bar{\B{w}}-\B{w}_0)\|_2^2+\sigma^2 r}{n}\right)(1-\eta d_1-\eta\rho d_1^2)^{2k}-\frac{2\sigma^2 r}{n}(1-\eta d_r-\eta\rho d_r^2)^k\nonumber \\
		=& q(k).
	\end{align}
	From Lemma~\ref{lem-prop2-helper}, if 
	$$k\leq \frac{\log(c/a)}{\log(b/d)}=\frac{\log(2\sigma^2r/(\|\B{X}(\bar{\B{w}}-\B{w}_0)\|_2^2+\sigma^2r))}{\log((1-\eta d_1-\eta\rho d_1^2)^2/(1-\eta d_r-\eta\rho d_r^2)}$$
	then $q(k)\geq 0$ or equivalently, $\error(\B{w}_k^{\sam})-{\sigma^2r}/{n}\geq 0$. Therefore, from~\eqref{prop2-helper3}, 
	\begin{align}
		\error(\B{w}_k^{\sam}) - \error(\B{w}_k^{\gd})\leq \underbrace{(1-c_0^k)}_{\leq 0}\underbrace{\left(\error(\B{w}_k^{\sam})-\frac{\sigma^2r}{n}\right)}_{\geq 0}\leq 0
	\end{align}
	which completes the proof. 
\end{proof}

\subsection{Proof of Theorem~\ref{kernel-bias-var-thm}}

\begin{proof}
	First, suppose $h=\B{w}^T\Kdot$ for some $\B{w}\in\R^n$. Then, 
	\begin{align}
		\sum_{i=1}^n\left(h(\B{x}_i)-\bar{h}(\B{x}_i)\right)^2 & =      \sum_{i=1}^n\left(\sum_{j=1}^n(w_j-\bar{w}_j)K(\B{x}_i,\B{x}_j)\right)^2 \nonumber \\
		&= \|\KX(\B{w}-\bar{\B{w}})\|_2^2 \nonumber \\
		&= (\B{w}-\bar{\B{w}})^T\KX^2(\B{w}-\bar{\B{w}}).\label{thm4-helper1}
	\end{align}
	Moreover, 
	\begin{align}
		\bias(h) & = \frac{1}{n}\sum_{i=1}^n \left(\E[h(\B{x}_i)]-\bar{h}(\B{x}_i)\right)^2 \\
		& = \frac{1}{n}\sum_{i=1}^n\left(\E[\B{w}^T[\KX]_i]-\bar{\B{w}}^T[\KX]_i\right)^2\nonumber \\
		& = \frac{1}{n}\sum_{i=1}^n\left([\KX]_i^T(\E[\B{w}]-\bar{\B{w}})\right)^2\nonumber \\
		& = \frac{1}{n}(\E[\B{w}]-\bar{\B{w}})^T\KX^2(\E[\B{w}]-\bar{\B{w}})\label{thm4-helper20}
	\end{align}
	where $[\KX]_i$ denotes the $i$-th column of $\KX$. From Lemma~\ref{thm4-helper-lemma}, one has 
	\begin{align}
		\B{w}_{k+1}^{\sam}&=\eta \sum_{i=0}^k (\B{I}-\eta \KX -\eta\rho\KX^2)(\B{I}+\rho\KX)(\KX\bar{\B{w}}+\B{\epsilon})\nonumber\\&=\eta \sum_{i=0}^k (\B{I}-\eta \KX -\eta\rho\KX^2)^i(\KX+\rho\KX^2)(\bar{\B{w}}+\KX^{-1}\B{\epsilon}) \nonumber \\
		& = \B{U}\left(\B{I}-\left(\B{I}-\eta\B{D}-\eta\rho\B{D}^2\right)^{k+1}\right)(\B{U}^T\bar{\B{w}}+\B{D}^{-1}\B{U}^T\B{\epsilon}).
	\end{align}
	As a result,
	\begin{align}
		\E[ \B{w}_{k+1}^{\sam}] = \B{U}\left(\B{I}-\left(\B{I}-\eta\B{D}-\eta\rho\B{D}^2\right)^{k+1}\right)\B{U}^T\bar{\B{w}}\label{thm4-helper21}
	\end{align}
	and
	\begin{align}\label{thm4-helper1.5}
		\B{w}_{k+1}^{\sam}-\bar{\B{w}} = -\B{U}\left(\B{I}-\eta\B{D}-\eta\rho\B{D}^2\right)^{k+1}\B{U}^T\bar{\B{w}}+\B{U}\left(\B{I}-\left(\B{I}-\eta\B{D}-\eta\rho\B{D}^2\right)^{k+1}\right)\B{D}^{-1}\B{U}^T\B{\epsilon}.
	\end{align}
	From~\eqref{thm4-helper1},
	\begin{align}
		\sum_{i=1}^n\left(h_k^{\sam}(\B{x}_i)-\bar{h}(\B{x}_i)\right)^2  = &  (\B{w}_k^{\sam}-\bar{\B{w}})^T\KX^2(\B{w}_k^{\sam}-\bar{\B{w}})\label{thm4-helper-p3}\\
		= & (\B{w}_k^{\sam}-\bar{\B{w}})^T\B{U}\B{D}^2\B{U}^T(\B{w}_k^{\sam}-\bar{\B{w}})\nonumber \\
		\stackrel{(a)}{=} & \bar{\B{w}}^T\B{U}\left(\B{I}-\eta\B{D}-\eta\rho\B{D}^2\right)^{2k}\B{D}^2\B{U}^T\bar{\B{w}} \label{thm4-helper-p1}\\
		& + \B{\epsilon}^T\B{U}\left(\B{I}-\left(\B{I}-\eta\B{D}-\eta\rho\B{D}^2\right)^{k}\right)^2\B{U}^T\B{\epsilon}\label{thm4-helper-p4}\\
		& -2\bar{\B{w}}^T\B{U}\left(\B{I}-\eta\B{D}-\eta\rho\B{D}^2\right)^{k}\B{D}\left(\B{I}-\left(\B{I}-\eta\B{D}-\eta\rho\B{D}^2\right)^{k}\right)\B{U}^T\B{\epsilon}\label{thm4-helper-p2}
	\end{align}
	where $(a)$ is by~\eqref{thm4-helper1.5}. On the other hand, from~\eqref{thm4-helper20} and~\eqref{thm4-helper21}, we have 
	\begin{align}
		\bias(h) & =  \frac{1}{n}(\E[\B{w}]-\bar{\B{w}})^T\KX^2(\E[\B{w}]-\bar{\B{w}}) \nonumber\\
		& =\bar{\B{w}}^T\B{U}\left(\B{I}-\eta\B{D}-\eta\rho\B{D}^2\right)^{2k}\B{D}^2\B{U}^T\bar{\B{w}}.
	\end{align}
	As a result, $\E[\eqref{thm4-helper-p1}]=n\bias(h)$, and also $\E[\eqref{thm4-helper-p2}]=0$, $\E[\eqref{thm4-helper-p3}]=n\error(h)$, showing $\E[\eqref{thm4-helper-p4}]=n\var(h)$. The rest of the proof follows from the proof of Theorem~\ref{thm-relu}.

\end{proof}

\subsection{Proof of Proposition~\ref{kernel-gen-prop}}
\begin{proof}
	Let $\error^+(\B{w}_k^{\sam})=\bias_+(\B{w}_k^{\sam})+\var^+(\B{w}_k^{\sam})$, $\error^-(\B{w}_k^{\sam})=\bias_-(\B{w}_k^{\sam})+\var^-(\B{w}_k^{\sam})$ where 
	\begin{equation}
		\begin{aligned}
			\bias_+(\B{w}_k^{\sam})&=\frac{1}{n}\sum_{i=1}^r (1-\eta d_i-\eta\rho d_i^2)^{2k}d_i^2 u_i^2\\
			\bias_-(\B{w}_k^{\sam})&=\frac{1}{n}\sum_{i=r+1}^n (1-\eta d_i-\eta\rho d_i^2)^{2k}d_i^2 u_i^2.
		\end{aligned}
	\end{equation}
	Note that by following the same steps of the proof of Proposition~\ref{lin-gen-prop}, we have under the assumptions of the proposition, 
	$$\error^+(\B{w}_k^{\sam})\leq \error^+(\B{w}_k^{\gd}). $$
	Moreover, note that $\bias_-(\B{w}_k^{\sam})\leq \bias_-(\B{w}_k^{\gd})$ similar to bias corresponding to the convex part. Finally, 
	\begin{align}
		\var^-(\B{w}_k^{\gd}) - \var^-(\B{w}_k^{\sam}) & = \frac{\sigma^2}{n}\sum_{i=r+1}^n \left\{\left((1-\eta d_i)^k-1\right)^2-\left(1-(1-\eta d_i-\eta\rho d_i^2)^k\right)^2 \right\} \nonumber \\
		& \geq \frac{\sigma^2}{n}\sum_{i=r+1}^n \left\{\left((1+\varepsilon)^k-1\right)^2-1 \right\} \nonumber \\
		& = \frac{(n-r)\sigma^2}{n}(1+\varepsilon)^k\left((1+\varepsilon)^k-2\right)\geq 0
	\end{align}
	where the last inequality follows the lower bound on $k$ from the proposition.
\end{proof}

\subsection{Proof of Proposition~\ref{sharpness-prop}}

\begin{proof}
	Let $f(\B{w})$ be defined as in~\eqref{quadratic-loss} for the ReLU case and in~\eqref{kernel-loss} for the kernel case. Then, 
	$$\E_{\B\epsilon}[f(\B{w})]=\frac{1}{2}(\B{w}-\bar{\B{w}})^T\B{H}(\B{w}-\bar{\B{w}})$$
	where $\B{H}=\B{A}^T\B{A}$ or $\B{H}=\KX$ for these two cases. Consider the eigenvalue decomposition $\B{H}=\B{U}\B{D}\B{U}^T=\B{U}_1\B{D}_1\B{U}_1^T+\B{U}_2\B{D}_2\B{U}_2^T$ where $\B{D}_1\succ 0\succ \B{D}_2$ for the kernel case and $\B{D}_1\succ\B{D}_2=\B{0}$ for the ReLU case. Then,
	\begin{align}
		\E_{\B{\epsilon}}[f(\B{w}+\B\varepsilon)]-\E_{\B{\epsilon}}[f(\B{w})] & = \frac{1}{2}(\B{w}+\B\varepsilon-\bar{\B{w}})^T\B{H}(\B{w}+\B\varepsilon-\bar{\B{w}}) - \frac{1}{2}(\B{w}-\bar{\B{w}})^T\B{H}(\B{w}-\bar{\B{w}}) \nonumber \\
		& = \frac{1}{2}\B\varepsilon^T\B{H}\B\varepsilon + \B\varepsilon^T\B{H}(\B{w}-\bar{\B{w}}) \nonumber \\
		& = \frac{1}{2}\B\varepsilon^T(\B{U}_1\B{D}_1\B{U}_1^T+\B{U}_2\B{D}_2\B{U}_2^T)\B\varepsilon +\B\varepsilon^T\B{U}\B{D}\B{U}^T(\B{w}-\bar{\B{w}})  \nonumber \\
		& = \frac{1}{2}(\B{U}_1^T\B\varepsilon)^T\B{D}_1(\B{U}_1^T\B\varepsilon)+\frac{1}{2}(\B{U}_2^T\B\varepsilon)^T\B{D}_2(\B{U}_2^T\B\varepsilon)+\B\varepsilon^T\B{U}\B{D}\B{U}^T(\B{w}-\bar{\B{w}}) .\label{e48}
	\end{align}
	\textbf{Part 1:} First, consider the case with $\B{D}_2=\B{0}$. Then,
	\begin{align}
		& \max_{\|\B\varepsilon\|_2\leq \rho_0} \E_{\B\epsilon}[f(\B{w}+\B\varepsilon)]-\E_{\B\epsilon}[f(\B{w})] \nonumber \\
		\geq& \max_{\|\B\varepsilon\|_2= \rho_0} \E_{\B\epsilon}[f(\B{w}+\B\varepsilon)]-\E_{\B\epsilon}[f(\B{w})]\nonumber \\
		\stackrel{(a)}{\geq} &\max_{\|\B\varepsilon\|_2= \rho_0}\left[ \frac{1}{2}\lambda_{\min}(\B{D}_1)\|\B{U}_1^T\B\varepsilon\|_2^2+  (\B{U}_1^T\B\varepsilon)^T\B{D}_1\B{U}_1^T(\B{w}-\bar{\B{w}})\right]\nonumber \\
		\geq &\max_{\substack{\|\B\varepsilon\|_2= \rho_0 \\ \B\varepsilon=\B{U}_1\B{v} }}\left[ \frac{1}{2}\lambda_{\min}(\B{D}_1)\|\B{U}_1^T\B\varepsilon\|_2^2+  (\B{U}_1^T\B\varepsilon)^T\B{D}_1\B{U}_1^T(\B{w}-\bar{\B{w}})\right]\nonumber \\
		\stackrel{(b)}{\geq} &\max_{\|\B{v}\|_2=\rho_0}\left[ \frac{1}{2}\lambda_{\min}(\B{D}_1)\|\B{v}\|_2^2+  \B{v}^T\B{D}_1\B{U}_1^T(\B{w}-\bar{\B{w}})\right]\nonumber \\
		=& \frac{1}{2}\lambda_{\min}(\B{D}_1)\rho_0^2 +\max_{\|\B{v}\|_2=\rho_0}\B{v}^T\B{D}_1\B{U}_1^T(\B{w}-\bar{\B{w}})
		\nonumber \\
		\stackrel{(c)}{=}& \frac{1}{2}\lambda_{\min}(\B{D}_1)\rho_0^2 +\rho_0 \|\B{D}_1\B{U}_1^T(\B{w}-\bar{\B{w}})\|_2
		\label{sharp-helper1}
	\end{align}
	where $\lambda_{\min}$ denotes the smallest eigenvalue of the matrix, $(a)$ is by~\eqref{e48}, $(b)$ is true as if $\B\varepsilon=\B{U}_1\B{v}$,
	$$\rho_0^2=\|\B{\varepsilon}\|_2^2=\B{v}^T\B{U}_1^T\B{U}_1\B{v}=\|\B{v}\|_2^2$$
	and $(c)$ follows $\B{v}^T\B{D}_1\B{U}_1^T(\B{w}-\bar{\B{w}})\leq \|\B{v}\|_2\|\B{D}_1\B{U}_1^T(\B{w}-\bar{\B{w}})\|_2 $. Similarly,
	\begin{align}
		&  \max_{\|\B\varepsilon\|_2\leq \rho_0} \E_{\B\epsilon}[f(\B{w}+\B\varepsilon)]-\E_{\B\epsilon}[f(\B{w})] \nonumber \\ \leq& \max_{\|\B\varepsilon\|_2\leq \rho_0}\left[ \frac{1}{2}\lambda_{\max}(\B{D}_1)\|\B{U}_1^T\B\varepsilon\|_2^2+  (\B{U}_1^T\B\varepsilon)^T\B{D}_1\B{U}_1^T(\B{w}-\bar{\B{w}})\right]\nonumber \\
		\stackrel{(a)}{\leq}& \frac{1}{2}\lambda_{\max}(\B{D}_1)\rho_0^2+ \max_{\|\B\varepsilon\|_2\leq \rho_0} (\B{U}_1^T\B\varepsilon)^T\B{D}_1\B{U}_1^T(\B{w}-\bar{\B{w}})\nonumber \\
		\leq&\frac{1}{2}\lambda_{\max}(\B{D}_1)\rho_0^2 + \rho_0\|\B{D}_1\B{U}_1^T(\B{w}-\bar{\B{w}})\|_2.\label{sharp-helper2}
	\end{align}    
	where $\lambda_{\max}$ denotes the largest eigenvalue and $(a)$ is true as $\|\B{U}_1^T\B\varepsilon\|_2\leq \|\B\varepsilon\|_2$.
	Note that from~\eqref{sharp-tohelp-thm3}
	\begin{align}
		\B{U}_1^T(\E_{\B\epsilon}[\B{w}_k^{\sam}]-\bar{\B{w}}) = -(\B{I}-\eta\B{D}_1-\eta\rho\B{D}_1^2)^k\B{U}_1^T(\bar{\B{w}}-\B{w}_0)
	\end{align}
	which implies
	\begin{align}
		\|\B{D}_1\B{U}_1^T(\E_{\B\epsilon}[\B{w}_k^{\sam}]-\bar{\B{w}})\|_2 &= \sqrt{\|\B{D}_1\B{U}_1^T(\E_{\B\epsilon}[\B{w}_k^{\sam}]-\bar{\B{w}})\|_2^2} \nonumber \\
		& = \sqrt{(\bar{\B{w}}-\B{w}_0)^T\B{U}_1\B{D}_1^2(\B{I}-\eta\B{D}_1-\eta\rho\B{D}_1^2)^{2k}\B{U}_1^T(\bar{\B{w}}-\B{w}_0)}\nonumber \\
		& = \sqrt{\sum_{i=1}^r (1-\eta d_i-\eta\rho d_i^2)^{2k}d_i^2u_i^2}.\label{sharp-helper3}
	\end{align}
	The proof follows from~\eqref{sharp-helper1},~\eqref{sharp-helper2} and~\eqref{sharp-helper3}.\\
	
	\textbf{Part 2}:
	From~\eqref{e48},
	\begin{align}
		& \max_{\|\B\varepsilon\|_2\leq \rho_0} \E_{\B\epsilon}[f(\B{w}+\B\varepsilon)]-\E_{\B\epsilon}[f(\B{w})] \nonumber \\
		\geq& \max_{\|\B\varepsilon\|_2= \rho_0} \E_{\B\epsilon}[f(\B{w}+\B\varepsilon)]-\E_{\B\epsilon}[f(\B{w})]\nonumber \\
		\geq &\max_{\|\B\varepsilon\|_2= \rho_0}\left[ \frac{1}{2}\lambda_{\min}(\B{D})\|\B{U}^T\B\varepsilon\|_2^2+  (\B{U}^T\B\varepsilon)^T\B{D}\B{U}^T(\B{w}-\bar{\B{w}})\right]\nonumber \\
		\geq & \frac{1}{2}\lambda_{\min}(\B{D})\rho_0^2 +\max_{\|\B{U}^T\B{\varepsilon}\|_2=\rho_0}(\B{U}^T\B{\B\varepsilon})^T\B{D}\B{U}^T(\B{w}-\bar{\B{w}})
		\nonumber \\
		\stackrel{}{=}& \frac{1}{2}\lambda_{\min}(\B{D})\rho_0^2 +\rho_0 \|\B{D}\B{U}^T(\B{w}-\bar{\B{w}})\|_2
		\label{sharp-helper1-kernel}
	\end{align}
	as $\|\B{U}^T\B{\varepsilon}\|_2=\|\B\varepsilon\|_2$. Next,
	\begin{align}
		&  \max_{\|\B\varepsilon\|_2\leq \rho_0} \E_{\B\epsilon}[f(\B{w}+\B\varepsilon)]-\E_{\B\epsilon}[f(\B{w})] \nonumber \\ \leq& \max_{\|\B\varepsilon\|_2\leq \rho_0}\left[ \frac{1}{2}\lambda_{\max}(\B{D})\|\B\varepsilon\|_2^2+  \B\varepsilon^T\B{U}\B{D}\B{U}^T(\B{w}-\bar{\B{w}})\right]\nonumber \\
		\leq&\frac{1}{2}\lambda_{\max}(\B{D})\rho_0^2 + \rho_0\|\B{D}\B{U}^T(\B{w}-\bar{\B{w}})\|_2.\label{sharp-helper2-kernel}
	\end{align}    
	
	From~\eqref{thm4-helper21},
	\begin{align}
		\B{U}^T(\E_{\B\epsilon}[\B{w}_k^{\sam}]-\bar{\B{w}}) & = (\B{I}-\eta\B{D}-\eta\rho\B{D}^2)^k\B{U}^T\bar{\B{w}}
	\end{align}
	implying
	\begin{align}
		\|\B{D}\B{U}^T(\E_{\B\epsilon}[\B{w}_k^{\sam}]-\bar{\B{w}})\|_2 &= \sqrt{\|\B{D}\B{U}^T(\E_{\B\epsilon}[\B{w}_k^{\sam}]-\bar{\B{w}})\|_2^2} \nonumber \\ 
		& = \sqrt{\bar{\B{w}}^T\B{U}\B{D}^2(\B{I}-\eta\B{D}-\eta\rho\B{D}^2)^{2k}\B{U}^T\bar{\B{w}}}\nonumber \\
		& = \sqrt{\sum_{i=1}^n (1-\eta d_i-\eta\rho d_i^2)^{2k}d_i^2u_i^2}.\label{sharp=kernel-helper3}
	\end{align}
	The proof follows from~\eqref{sharp-helper1-kernel}, ~\eqref{sharp-helper2-kernel} and~\eqref{sharp=kernel-helper3} as $1-\eta d_i-\eta\rho d_i^2\leq 1$ for $i\in[n]$ and $1<1-\eta d_i$ for any $i\geq r+1$.
\end{proof}

\subsection{Proof of Proposition~\ref{example-iid}}
\begin{lemma}\label{prop-iid-helper-lemma}
	Let $\B{x}\sim\cN(\B{0},\B{I})$. Then, $\E[\B{x}\B{x}^T\B{x}\B{x}^T]=(p+2)\B{I}$.
\end{lemma}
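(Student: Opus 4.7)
The plan is to reduce the matrix moment to a scalar-weighted outer product and then compute entrywise using the known moments of a standard normal. First I would observe that since $\B{x}^T\B{x} = \|\B{x}\|_2^2$ is a scalar, one can associate the middle product and write
$$\B{x}\B{x}^T\B{x}\B{x}^T = (\B{x}^T\B{x})\,\B{x}\B{x}^T = \|\B{x}\|_2^2\,\B{x}\B{x}^T.$$
Hence the $(i,j)$-entry of $\E[\B{x}\B{x}^T\B{x}\B{x}^T]$ equals $\sum_{k=1}^p \E[x_k^2 x_i x_j]$, where $x_1,\ldots,x_p$ are iid $\cN(0,1)$.

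Next I would split on the position of $(i,j)$. For $i\ne j$, independence and $\E[x_i] = \E[x_i^3] = 0$ kill every term: if $k\notin\{i,j\}$ then $\E[x_k^2]\E[x_i]\E[x_j] = 0$, and if $k\in\{i,j\}$ then the summand reduces to $\E[x_i^3]\E[x_j] = 0$ (or symmetric). So the off-diagonal entries of $\E[\B{x}\B{x}^T\B{x}\B{x}^T]$ vanish. For $i = j$, the sum becomes $\sum_{k=1}^p \E[x_k^2 x_i^2]$; the term $k = i$ contributes $\E[x_i^4] = 3$ (the fourth moment of a standard normal), while each of the $p-1$ terms with $k\ne i$ contributes $\E[x_k^2]\E[x_i^2] = 1$. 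The total is $3 + (p-1) = p+2$, giving $\E[\B{x}\B{x}^T\B{x}\B{x}^T] = (p+2)\B{I}$ as claimed.

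There is essentially no obstacle here; the whole calculation is a bookkeeping exercise once one recognizes the scalar factorization $\B{x}\B{x}^T\B{x}\B{x}^T = \|\B{x}\|_2^2\B{x}\B{x}^T$ and recalls the Gaussian moments $\E[x_i^2] = 1$, $\E[x_i^4] = 3$. An alternative one-line argument uses Isserlis's (Wick's) theorem on the fourth-order Gaussian moment $\E[x_a x_b x_c x_d] = \delta_{ab}\delta_{cd} + \delta_{ac}\delta_{bd} + \delta_{ad}\delta_{bc}$, applied with $(a,b,c,d) = (i,k,k,j)$ and summed over $k$, which yields $\delta_{ij}\cdot p + \delta_{ij} + \delta_{ij} = (p+2)\delta_{ij}$.
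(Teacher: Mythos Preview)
Your proof is correct and follows essentially the same route as the paper: compute the $(i,j)$-entry $\sum_{l}\E[x_i x_j x_l^2]$ and split into the cases $i\neq j$ (all terms vanish) and $i=j$ (giving $3+(p-1)=p+2$). The scalar factorization $\B{x}\B{x}^T\B{x}\B{x}^T=\|\B{x}\|_2^2\,\B{x}\B{x}^T$ and the Isserlis alternative are nice additions but not needed beyond what the paper does.
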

\begin{proof}
	Let $\hat{\B{H}}=\B{x}\B{x}^T$, $\hat{\B{\Theta}}=\hat{\B{H}}^2$ and $\B{\Theta}=\E[\hat{\B{\Theta}}]$.
	Note that
	\begin{align}
		\Theta_{i,j} & =\E[\hat{\Theta}_{i,j}] \nonumber \\
		& = \E[(\hat{\B{H}}^2)_{i,j}] \nonumber \\
		& = \E\left[\sum_{l=1}^p \hat{H}_{i,l}\hat{H}_{l,j}\right]\nonumber \\
		& = \sum_{l=1}^p\E[x_ix_jx_l^2].\label{iid-helper1}
	\end{align}
	Next, we consider the following cases in~\eqref{iid-helper1}:
	\begin{enumerate}
		\item $i\neq j$: In this case,
		\begin{align*}
			\sum_{l=1}^p\E[x_ix_jx_l^2]=\underbrace{\E[x_ix_l^3]}_{=0}+\underbrace{\E[x_jx_l^3]}_{=0}+\sum_{l\neq i,j}\underbrace{\E[x_ix_jx_l^2]}_{=0}=0. 
		\end{align*}
		\item $i=j$: Then,
		\begin{align*}
			\sum_{l=1}^p\E[x_ix_jx_l^2]=\underbrace{\E[x_i^4]}_{=3}+\sum_{l\neq i}\underbrace{\E[x_i^2x_l^2]}_{=1}=p+2. 
		\end{align*}
	\end{enumerate}
	Therefore, $\B{\Theta}=(p+2)\B{I}$.
\end{proof}
\begin{proof}[\textbf{Proof of Proposition~\ref{example-iid}}]
	Note that 
	\begin{align}
		f_k = \frac{1}{2}(\B{x}_k^T(\bar{\B{w}}-\B{w})+\epsilon_k)^2=\frac{1}{2}(\B{w}-\bar{\B{w}})^T\B{x}_k\B{x}_k^T(\B{w}-\bar{\B{w}})-\epsilon_k\B{w}^T\B{x}_k+\cdots.\label{f-k-def}
	\end{align}
	We start by writing SAM updates for the stochastic case. The intermediate solution of SAM is given as
	\begin{align}
		{\B{w}}_k^{\sam} + \rho \nabla f_k({\B{w}}_k^{\sam}) & =
		{\B{w}}_k^{\sam} + \rho\left(\B{g}_k +\B{H}_k({\B{w}}_{k}^{\sam}-\bar{\B{w}})\right)\nonumber \\
		& = (\B{I}+\rho\B{H}_k){\B{w}}_{k}^{\sam} + \rho (\B{g}_k - \B{H}_k\bar{\B{w}}).
	\end{align}
	where from~\eqref{f-k-def}, we have $\B{H}_k=\B{x}_k\B{x}_k^T$ and $\B{g}_k=-\epsilon_k\B{x}_k$. Therefore, the SAM update direction is given as
	\begin{align}
		\B{\nu}_k & = \nabla f_k({\B{w}}_k^{\sam} + \rho \nabla f_k({\B{w}}_k^{\sam})) \nonumber \\
		& = \B{g}_k + \B{H}_k((\B{I}+\rho\B{H}_k){\B{w}}_{k}^{\sam} + \rho (\B{g}_k - \B{H}_k\bar{\B{w}})-\bar{\B{w}}) \nonumber \\
		& = (\B{H}_k+\rho\B{H}_k^2){\B{w}}_k^{\sam} + (\B{I}+\rho\B{H}_k)(\B{g}_k-\B{H}_k\bar{\B{w}}).
	\end{align}
	Hence,
	\begin{align}
		{\B{w}}_{k+1}^{\sam} & = {\B{w}}_k^{\sam} - \eta\B{\nu}_k \nonumber \\
		& = (\B{I}-\eta\B{H}_k-\eta\rho\B{H}_k^2){\B{w}}_k^{\sam} -\eta (\B{I}+\rho\B{H}_k)(\B{g}_k-\B{H}_k\bar{\B{w}}).\label{thm-st-1}
	\end{align}
	In the next step, we take expectation: 
	\begin{align}
		\E[{\B{w}}_{k+1}^{\sam}] & = \E_{\B{x}_k,\epsilon_k}\left[\E_{\B{x}_1,\epsilon_1,\cdots,\B{x}_{k-1},\epsilon_{k-1}}\left[\left.{\B{w}}_{k+1}^{\sam}\right\vert \B{x}_k,\epsilon_k\right]\right]\nonumber \\
		&= \E_{\B{x}_k,\epsilon_k}\left[\E_{\B{x}_1,\epsilon_1,\cdots,\B{x}_{k-1},\epsilon_{k-1}}\left[\left.(\B{I}-\eta\B{H}_k-\eta\rho\B{H}_k^2){\B{w}}_k^{\sam} -\eta (\B{I}+\rho\B{H}_k)(\B{g}_k-\B{H}_k\bar{\B{w}})\right\vert \B{x}_k,\epsilon_k\right]\right]\nonumber \\
		& = \E_{\B{x}_k,\epsilon_k}\left[(\B{I}-\eta\B{H}_k-\eta\rho\B{H}_k^2)\E_{\B{x}_1,\epsilon_1,\cdots,\B{x}_{k-1},\epsilon_{k-1}}\left[\left.{\B{w}}_k^{\sam} \right\vert \B{x}_k,\epsilon_k\right]-\eta (\B{I}+\rho\B{H}_k)(\B{g}_k-\B{H}_k\bar{\B{w}})\right] \nonumber \\
		&\stackrel{(a)}{=} \E_{\B{x}_k,\epsilon_k}\left[(\B{I}-\eta\B{H}_k-\eta\rho\B{H}_k^2)\E\left[{\B{w}}_k^{\sam} \right]-\eta (\B{I}+\rho\B{H}_k)(\B{g}_k-\B{H}_k\bar{\B{w}})\right]\nonumber \\
		& \stackrel{(b)}{=} (\B{I}-\eta\E[\B{H}_k]-\eta\rho\E[\B{H}_k^2])\E[{\B{w}}_{k}^{\sam}]+\eta(\E[\B{H}_k]+\rho\E[\B{H}_k^2])\bar{\B{w}} \nonumber \\
		& = (\B{I}-\eta\B{H}-\eta\rho\B{\Theta})\E[{\B{w}^{\sam}_k}]+\eta(\B{H}+\rho\B{\Theta})\bar{\B{w}}.\label{thm2-helper1}
	\end{align}
	with $\B{\Theta}=\E[\B{H}_k^2]$, where in $(a)$ we used the fact that ${\B{w}}_k^{\sam}$ is independent of $\B{x}_k,\epsilon_k,\B{x}_{k+1},\epsilon_{k+1,\cdots}$, and $(b)$ is true as $\E[\B{g}_k]=\E[\B{H}_k\B{g}_k]=\B{0}$ due to the independence of $\epsilon_k$ and $\B{x}_k$. Thus, from~\eqref{thm2-helper1}
	\begin{align}
		\E[{\B{w}}_{k+1}^{\sam}] &= \eta \sum_{i=0}^k (\B{I}-\eta\B{H}-\eta\rho\B{\Theta})^i(\B{H}+\rho\B{\Theta})\bar{\B{w}}\nonumber \\
		& = \eta \sum_{i=1}^k(1-\eta-\eta\rho(p+2))^i(1+\rho(p+2))\bar{\B{w}}\nonumber \\
		&= \left(1-(1-\eta-\eta\rho(p+2))^{k+1}\right)\bar{\B{w}}
	\end{align}
	where the second equality is by Lemma~\ref{prop-iid-helper-lemma}.
	Finally, we calculate error from~\eqref{error-stochastic} as 
	\begin{align}
		\E_{\B{x}_0,\epsilon_0}\left[\left(\B{x}_0^T(\E[\B{w}_k^{\sam}]-\bar{\B{w}})-\epsilon_0\right)^2\right] & = (\E[\B{w}_k^{\sam}]-\bar{\B{w}})^T\E[\B{x}_0\B{x}_0^T](\E[\B{w}_k^{\sam}]-\bar{\B{w}}) + \E[\epsilon_0^2] \nonumber\\
		& = (\E[\B{w}_k^{\sam}]-\bar{\B{w}})^T(\E[\B{w}_k^{\sam}]-\bar{\B{w}})+\sigma^2 \nonumber \\
		& = (1-\eta-\eta\rho(p+2))^{2k}\|\bar{\B{w}}\|_2^2
	\end{align}
	which completes the proof.
\end{proof}

\section{Additional Numerical Experiments}\label{app:numerical}

\subsection{Linear Regression Experiments}\label{app:linear-exp}
First, we examine the theory we developed on the linear regression problem. To this end, we set $p=100,n=60$ corresponding to an over-parameterized regime, with observations as $y_i=\B{x}_i^T\bar{\B{w}}+\epsilon_i$. We assume $(\B{x}_i,\epsilon_i)$ are independent, and $\epsilon_i\sim\cN(0,\sigma^2)$ and $\B{x}_i\sim(\B{0},\B{\Sigma})$ where $\B{\Sigma}$ is an exponential covariance matrix, $\Sigma_{i,j}=0.5^{|i-j|}$. Each coordinate of $\bar{\B{w}}$ is independently chosen from $\text{Unif}[0,1]$ and then $\bar{\B{w}}$ is normalized to have norm one. We run GD and SAM with $\eta=1/2\sigma_{\max}(\B{X})^2$ and $\rho=\eta/6$ for 500 iterations on the least squares loss. We draw 600 noiseless validation samples, denoted as $\B{y}_{\text{test}},\B{X}_{\text{test}}$ from the same model. We record the validation error defined as $\errortest(\B{w})=\|\B{y}_{\text{test}}-\B{X}_{\text{test}}\B{w}_{k}^{\gd}\|_2^2/600$ for GD (and SAM, similarly). We run the whole process described for 100 independent repetitions and report the average and standard deviation of results. We let $\errortest^{\gd},\errortest^{\sam}$ to denote the best validation error achieved for GD and SAM over all iterations, respectively. We also let $k^{\gd},k^{\sam}$ denote the number of iterations with the best error. In Figure~\ref{fig:app-lin}, we compare the best error from GD and SAM, and when they are achieved compared to the noise standard deviation. From Figure~\ref{fig:app-lin} [Left Panel], we see that if noise is small, GD and SAM perform similarly, although Figure~\ref{fig:app-lin} [Middle Panel] shows that SAM achieves the best error marginally earlier than GD. When noise is higher but not too high, SAM has a smaller error compared to GD, and this error is achieved earlier. If we continue to increase noise, SAM performs worse and early stopping does not seem to help SAM much. This result is consistent with the theory we developed. Particularly, our theory in Section~\ref{sec:mainres}, shows that in the noiseless case or small noise regime, SAM has a lower error compared to GD in earlier iterations. In our experiment, this is confirmed that in small noise regimes, SAM both achieves its best error earlier, and has a smaller error compared to GD. However, as we increase noise, variance increases and as SAM has higher variance, it performs worse. We have also shown the error of SAM in Figure~\ref{fig:app-lin} [Right Panel], showing that increasing noise leads to larger error, as expected.

\begin{figure*}[t!]
	\centering
	\begin{tabular}{ccc}
		$\errortest^{\gd}/\errortest^{\sam}$  & $k^{\gd}-k^{\sam}$ & $\errortest^{\sam}$ \\
		\includegraphics[width=0.3\linewidth,trim =0.4cm 0cm .8cm 0cm, clip = true]{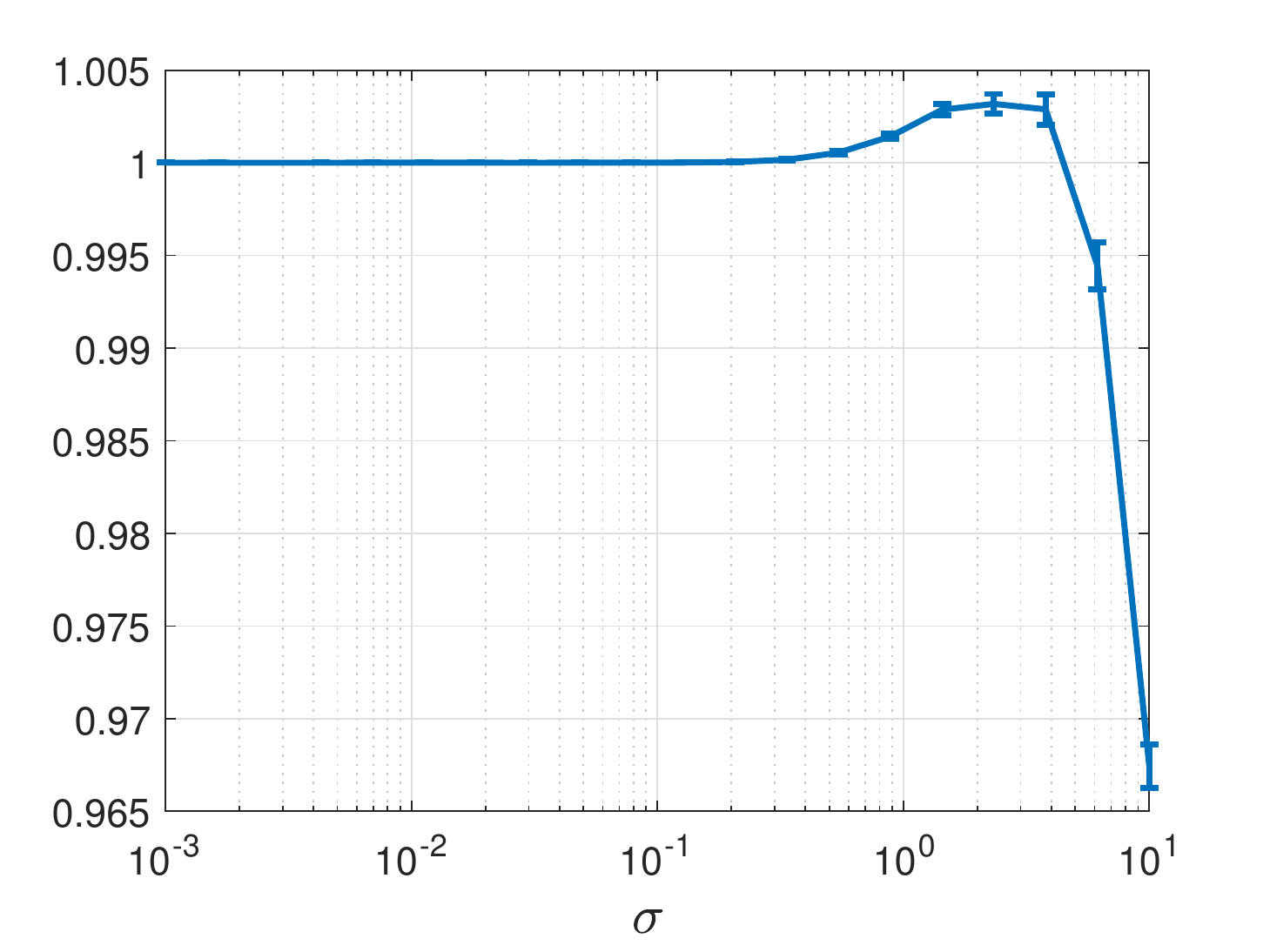}&   
		\includegraphics[width=0.3\linewidth,trim =.4cm 0cm .8cm 0cm, clip = true]{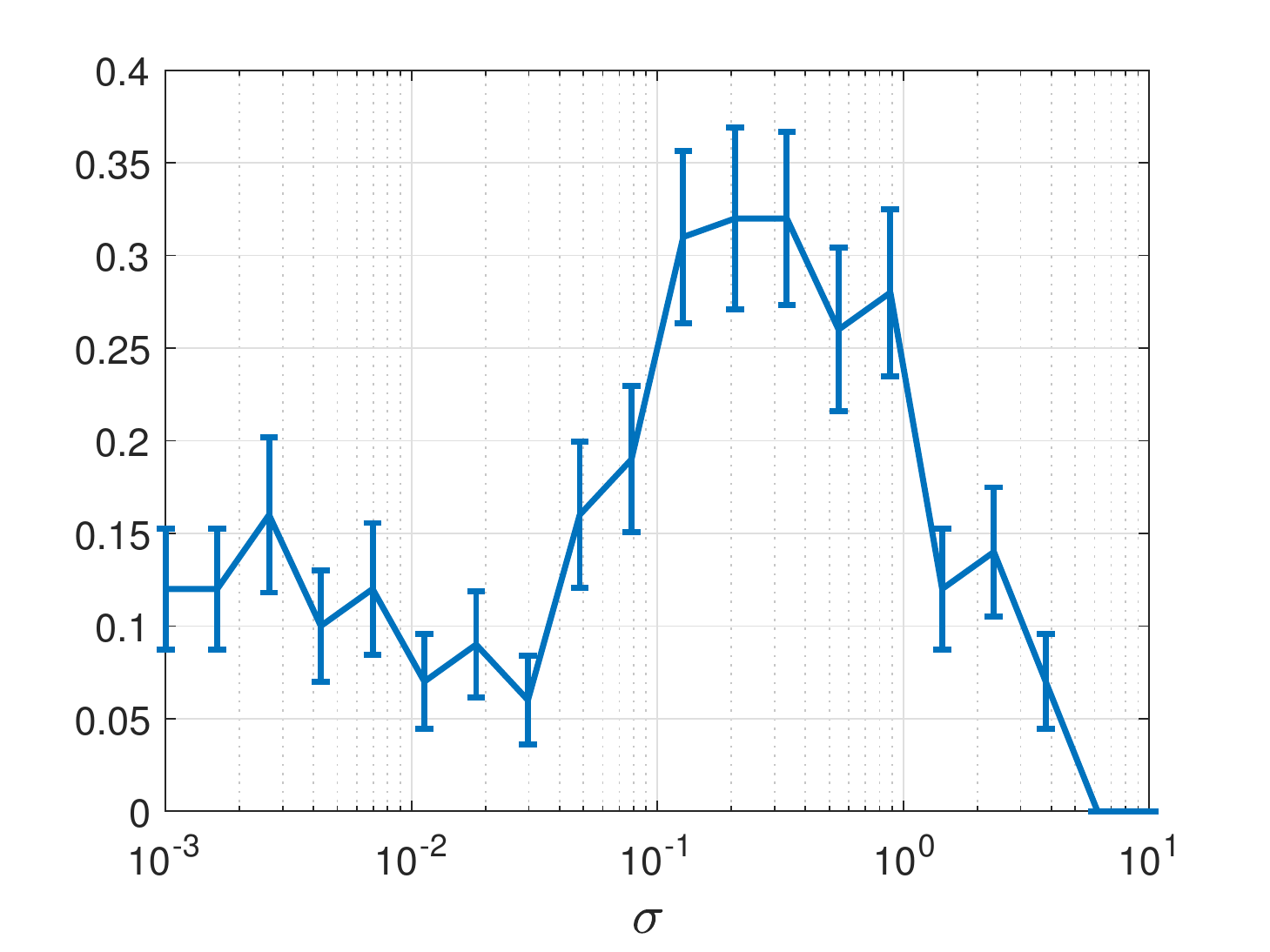}    &
		\includegraphics[width=0.3\linewidth,trim =.4cm 0cm .8cm 0cm, clip = true]{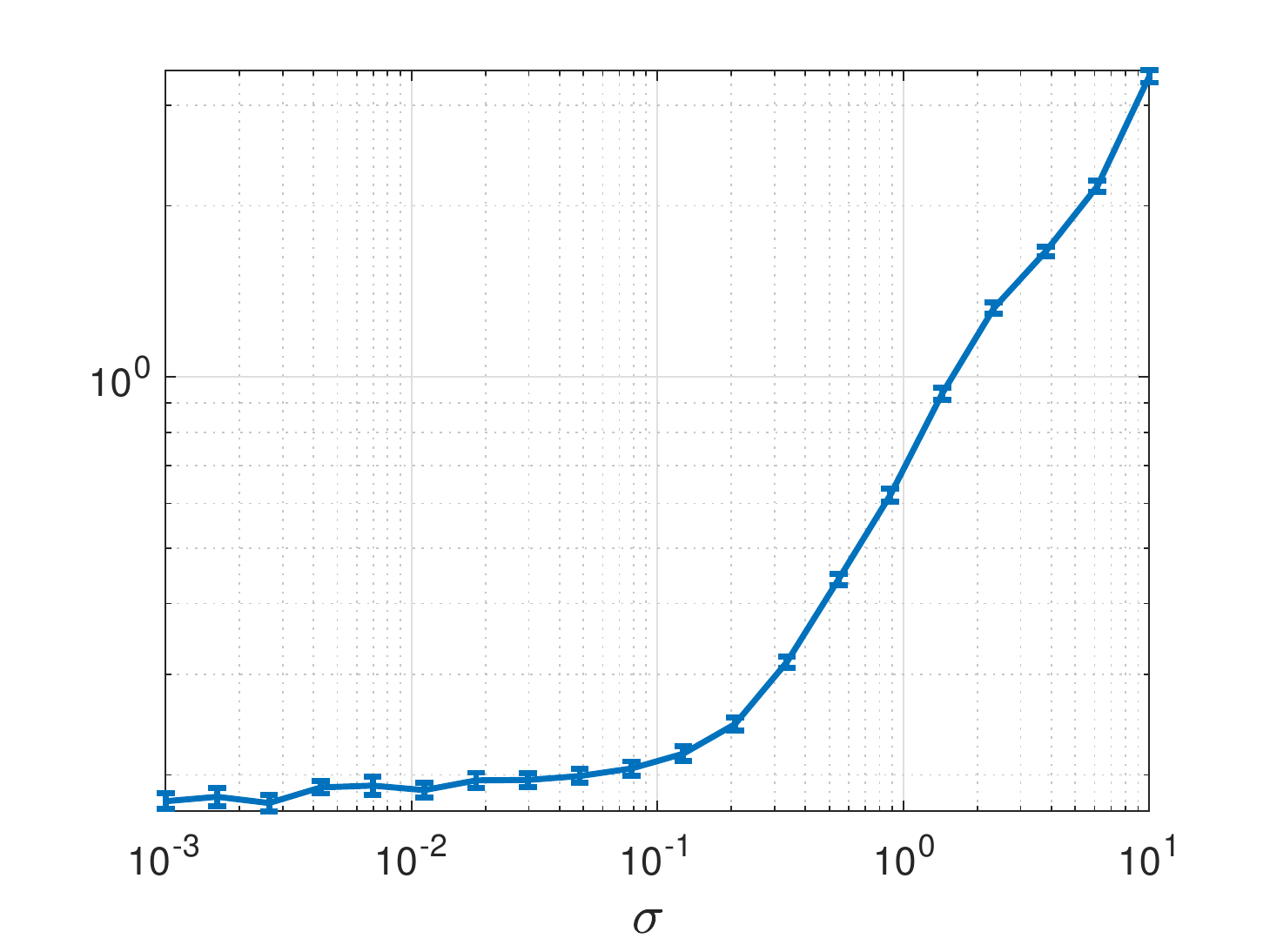} \\ 
	\end{tabular}
	\caption{\small Results for the full-batch linear regression. Left: Ratio of the best error achieved by GD and SAM. Middle: The difference between the number of iterations leading to the best error Right: Best SAM error }
	\label{fig:app-lin}
\end{figure*}

To further examine our theory, we run the model described above for two values of $\sigma=0.05,1$ and plot the ratio of GD error to SAM error for all iteration in Figure~\ref{fig:app-lin-traj}. We see that when noise is smaller, SAM performs better than GD in all iterations. On the other hand, when the noise is high, SAM performs better in early iterations, as SAM's bias is lower, while as variance becomes dominant, GD starts to performs better than SAM in later iterations. Again, this is in agreement with our theory.

\begin{figure*}[t!]
	\centering
	\begin{tabular}{ccc}
		&$\sigma=0.05$ & $\sigma=1$ \\
		\rotatebox{90}{~~~~~$\errortest^{\gd}/\errortest^{\sam}$} &   \includegraphics[width=0.3\linewidth,trim =0.4cm 0cm .8cm 0cm, clip = true]{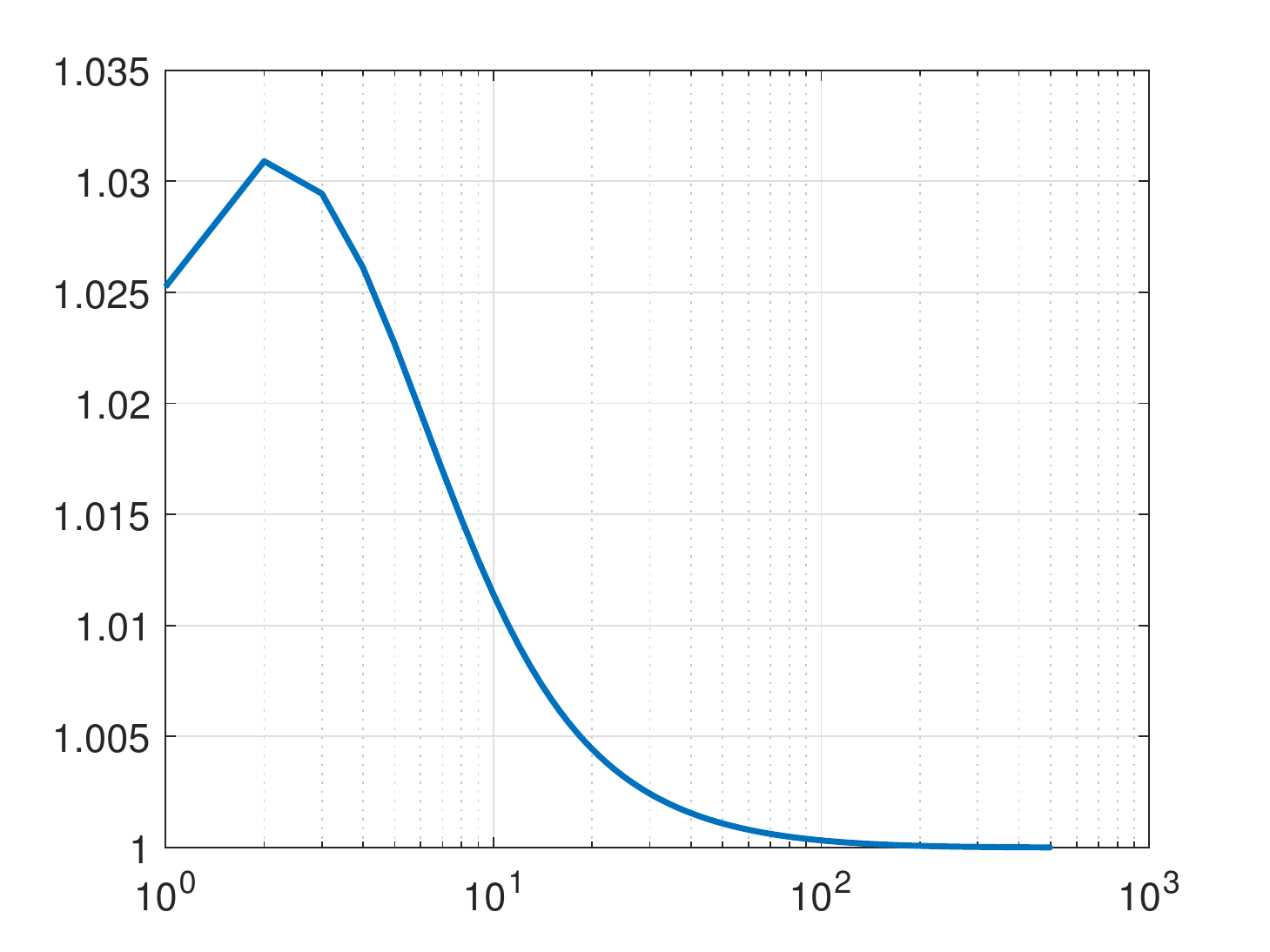}&   
		\includegraphics[width=0.3\linewidth,trim =.4cm 0cm .8cm 0cm, clip = true]{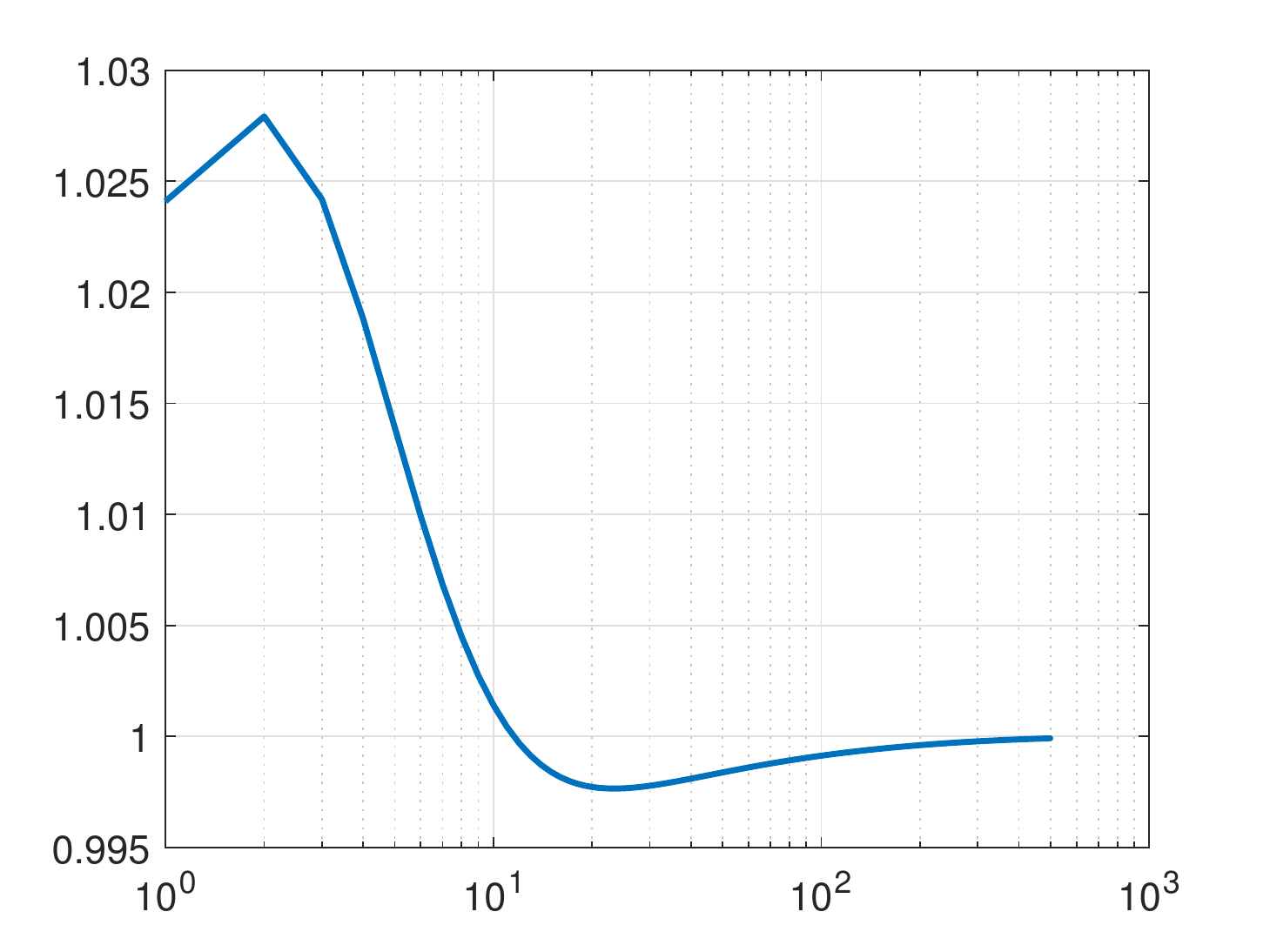}  \\
		& \# of iterations & \# of iterations
	\end{tabular}
	\caption{\small Comparison of ratio of error for GD and SAM over iterations, $\errortest(\B{w}_k^{\gd})/\errortest(\B{w}_k^{\sam})$ for the full-batch linear regression}
	\label{fig:app-lin-traj}
\end{figure*}
\subsubsection{Effect of stochasticity}
Next, we explore the effect of stochasticity in the algorithm. We use the same model from the previous section. However, we set $n=200$ and draw 2000 validation samples. We run the algorithm for an epoch, with batch size of 1. The results for this case are shown in Figure~\ref{fig:app-lin-stoc}. We see from the left panel that in this case, even for large noise variance values, stochastic SAM performs better than SGD. As we discussed in Appendix~\ref{sec:stochastic}, stochastic SAM can have stronger regularization over SGD due to the way SAM is implemented in practice.
\begin{figure*}[t!]
	\centering
	\begin{tabular}{cc}
		$\errortest^{\gd}/\errortest^{\sam}$  &  $\errortest^{\sam}$ \\
		\includegraphics[width=0.3\linewidth,trim =0.4cm 0cm .8cm 0cm, clip = true]{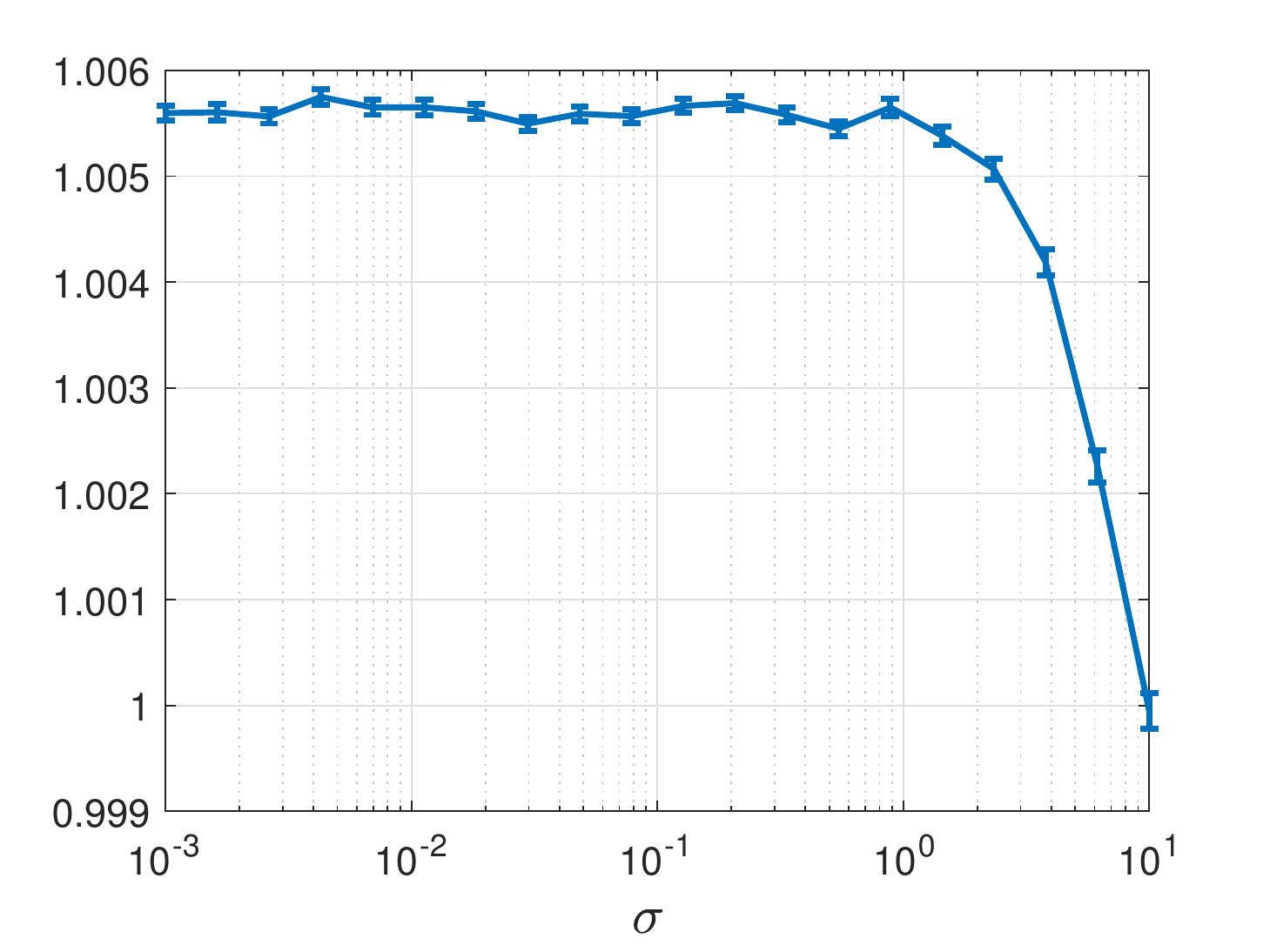}&   
		\includegraphics[width=0.3\linewidth,trim =.4cm 0cm .8cm 0cm, clip = true]{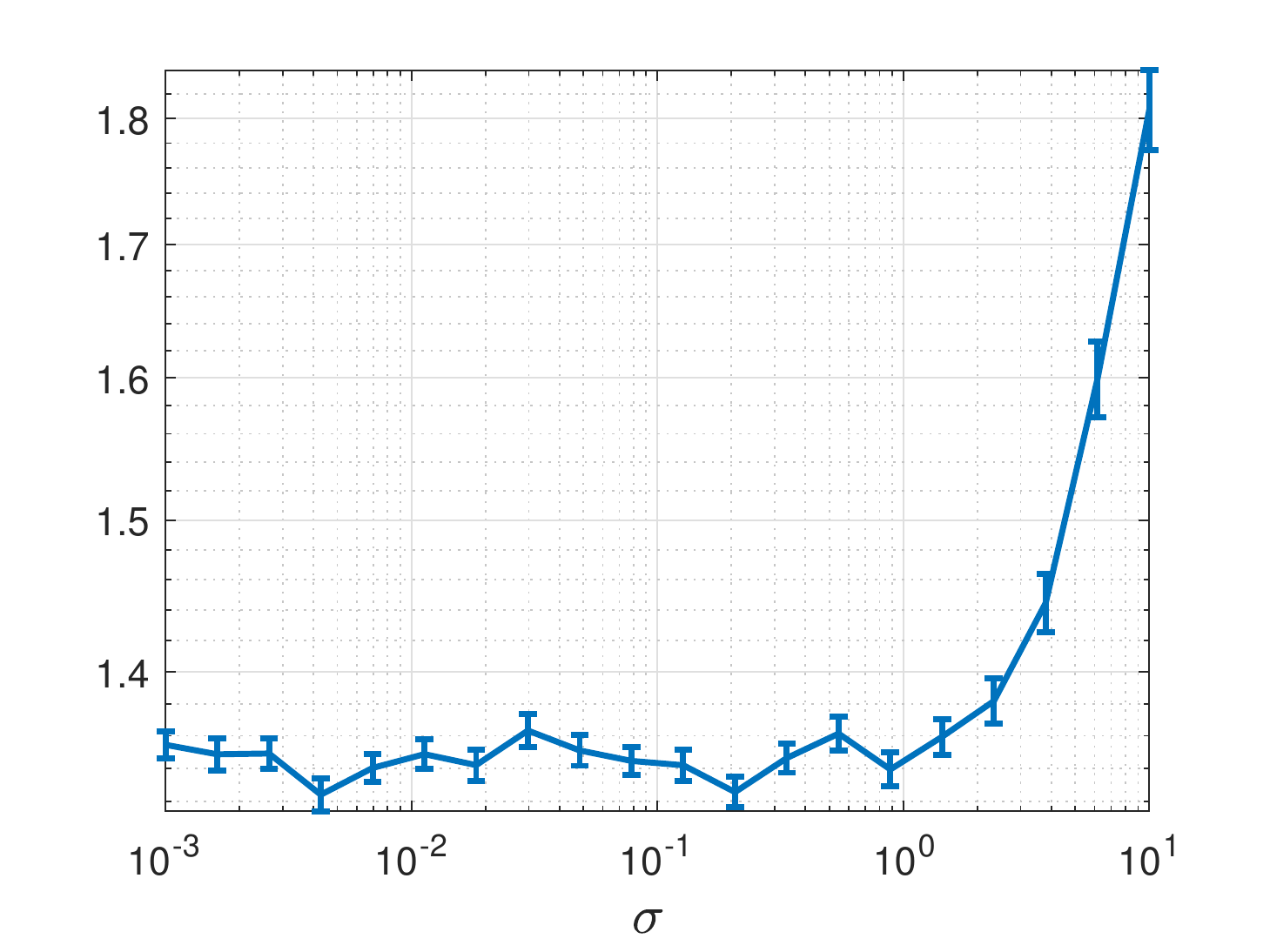} \\ 
	\end{tabular}
	\caption{\small Results for the stochastic linear regression. Left: Ratio of the best error achieved by SGD and stochastic SAM. Right: Best SAM error}
	\label{fig:app-lin-stoc}
\end{figure*}

\subsection{Kernel Regression Experiments}
Next, we investigate the effect on SAM for the kernel regression case. To this end, we set $p=200,n=100$ and draw $\B{X}$ as described in Section~\ref{app:linear-exp}. Next, we expand the set of features by adding terms of the form $x_i^2,x_i^3$ and $x_ix_j$ for 400 pairs of randomly chosen $(i,j)$, overall increasing the dimension of the model to 1000. Then, the observation are drawn as $y=\tilde{\B{x}}^T\B{\bar{w}}+\epsilon$ where $\tilde{\B{x}}$ is the vector of expanded features and $\bar{\B{w}}$ has iid uniform coordinates and then is normalized to have norm 1. The rest of the setup is similar to Section~\ref{app:linear-exp}. We use an indefinite kernel, defined as the difference of a Gaussian kernel with the variance of 100 and an exponential kernel,
$$K(\B{x},\B{y})=\exp\left(-\frac{\|\B{x}-\B{y}\|_2^2}{200}\right)-0.8\exp(-\|\B{x}-\B{y}\|_2)$$
and run the training for 1500 iterations. The whole process is repeated 100 times and then averaged. We use 200 validation points and record the error, defined as 
$$\errortest(h)=\frac{1}{200}\sum_{i=1}^{200}(y_{\text{test},i}-h(\B{x}_{\text{test},i}))^2.$$
Similar to the case of linear regression, we report the best error over the course of the algorithm and the iteration leading to the best error. The results for this case are shown in Figure~\ref{fig:app-kernel}. Overall, we see that GD and SAM perform closely in terms of error. Similar to the case of linear regression, unless noise is too large, SAM performs better than GD, although the improvements are marginal. However, we see that the best error for SAM is achieved earlier than GD, which overall agrees with the insight from our bias-variance analysis, as SAM's bias is smaller than GD and in noiseless cases, bias is dominant. We also show the SAM performance in Figure~\ref{fig:app-kernel} [Right Panel], showing increasing noise variance leads to worse performance.

\begin{figure*}[t!]
	\centering
	\begin{tabular}{ccc}
		$\errortest^{\gd}/\errortest^{\sam}$  & $k^{\gd}-k^{\sam}$ & $\errortest^{\sam}$ \\
		\includegraphics[width=0.3\linewidth,trim =0.4cm 0cm .8cm 0cm, clip = true]{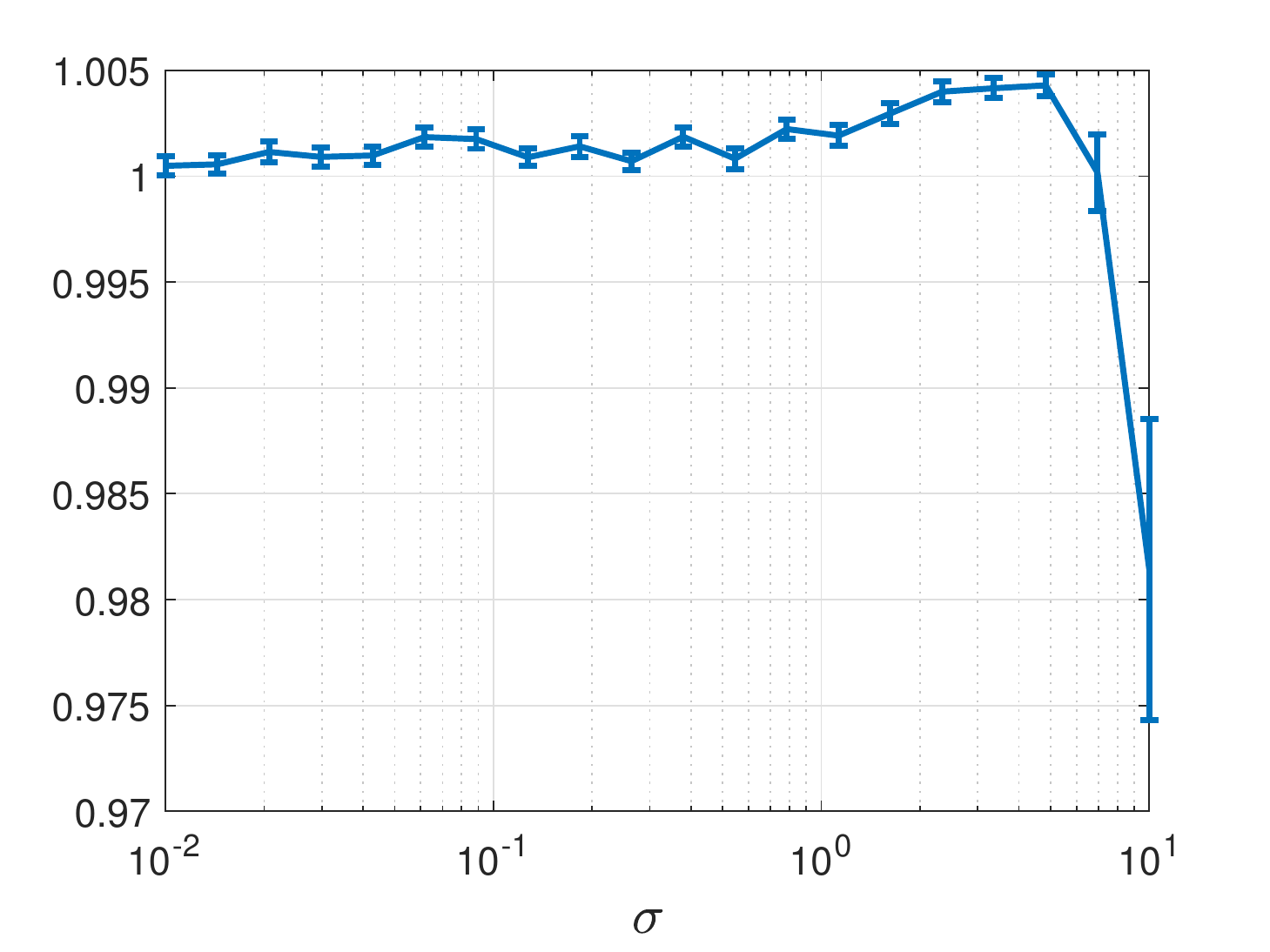}&   
		\includegraphics[width=0.3\linewidth,trim =.4cm 0cm .8cm 0cm, clip = true]{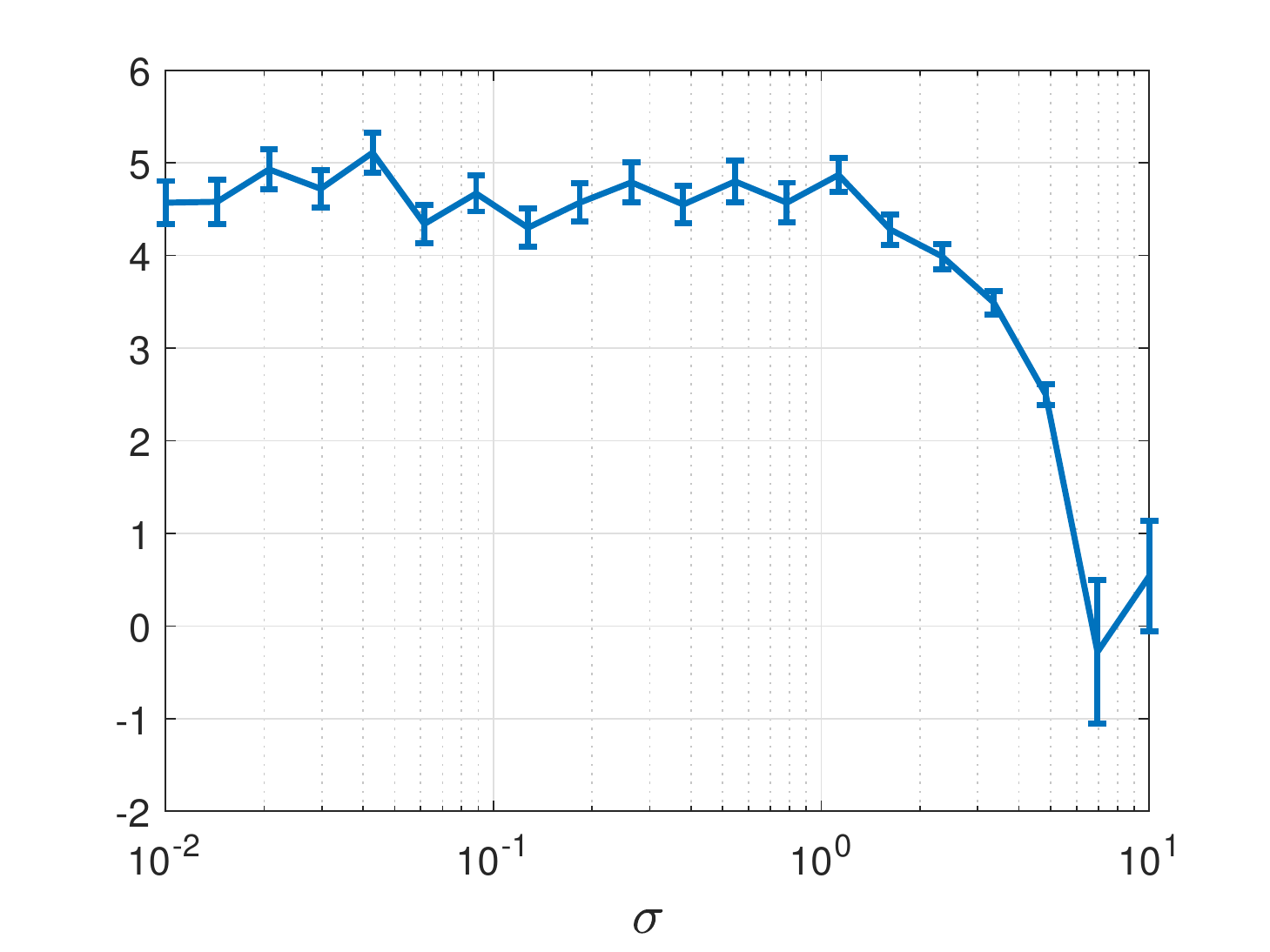}    &
		\includegraphics[width=0.3\linewidth,trim =.4cm 0cm .8cm 0cm, clip = true]{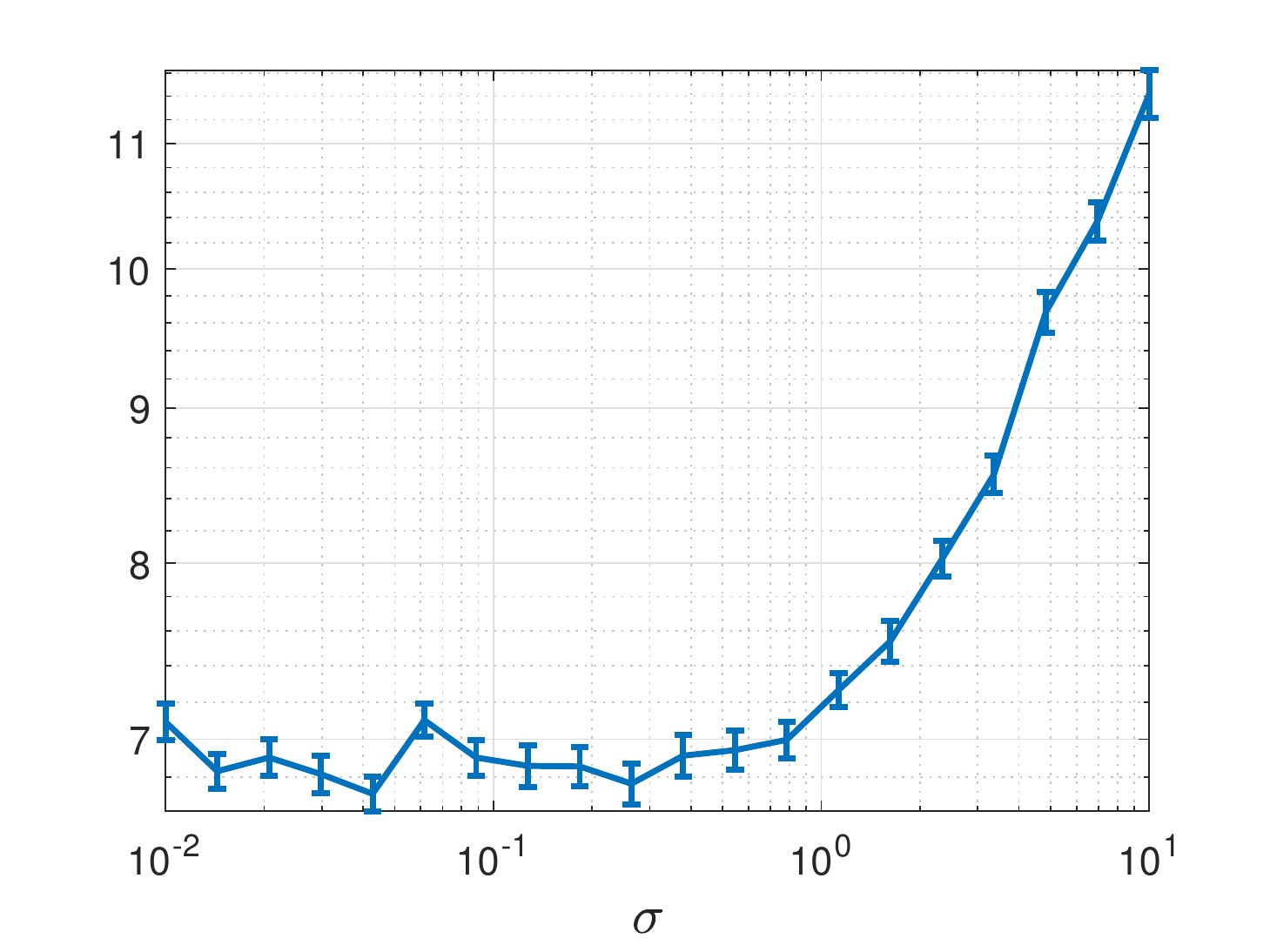} \\ 
	\end{tabular}
	\caption{\small Results for the kernel regression. Left: Ratio of the best error achieved by GD and SAM. Middle: The difference between the number of iterations leading to the best error Right: Best SAM error}
	\label{fig:app-kernel}
\end{figure*}
Finally, we compare the error trajectory of SAM/GD for two values of noise in Figure~\ref{fig:app-kernel-traj}. We see that in almost all iterations, SAM performs better than GD, which agrees with Proposition~\ref{kernel-gen-prop}. Specially, we see that in later iterations, GD performs significantly worse which agrees with our analysis that GD has unbounded error in the non-convex case.

\begin{figure*}[t!]
	\centering
	\begin{tabular}{ccc}
		&$\sigma=0.3$ & $\sigma=3$ \\
		\rotatebox{90}{~~~~~$\errortest^{\gd}/\errortest^{\sam}$} & \includegraphics[width=0.3\linewidth,trim =0.4cm 0cm .8cm 0cm, clip = true]{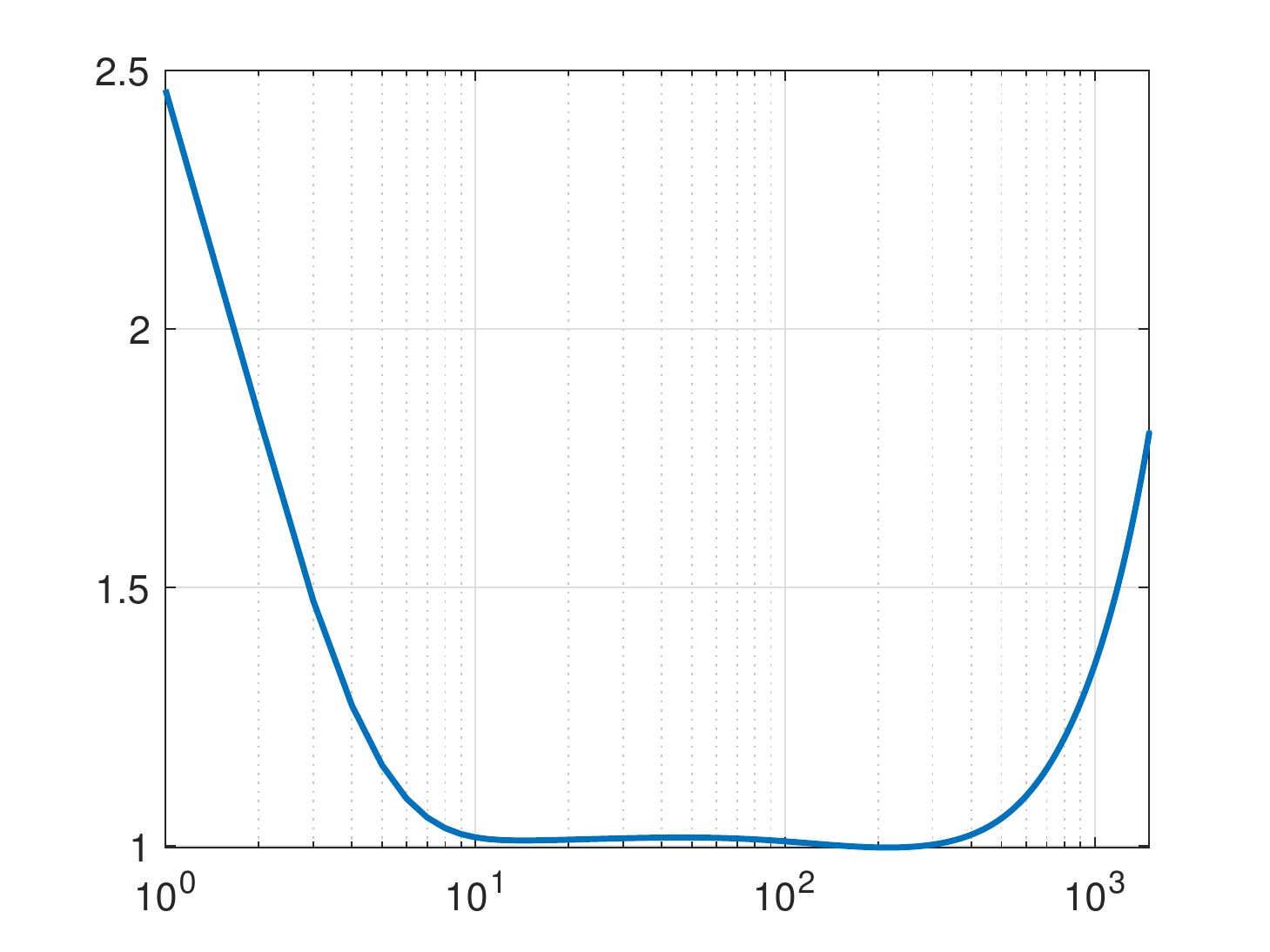}&   
		\includegraphics[width=0.3\linewidth,trim =.4cm 0cm .8cm 0cm, clip = true]{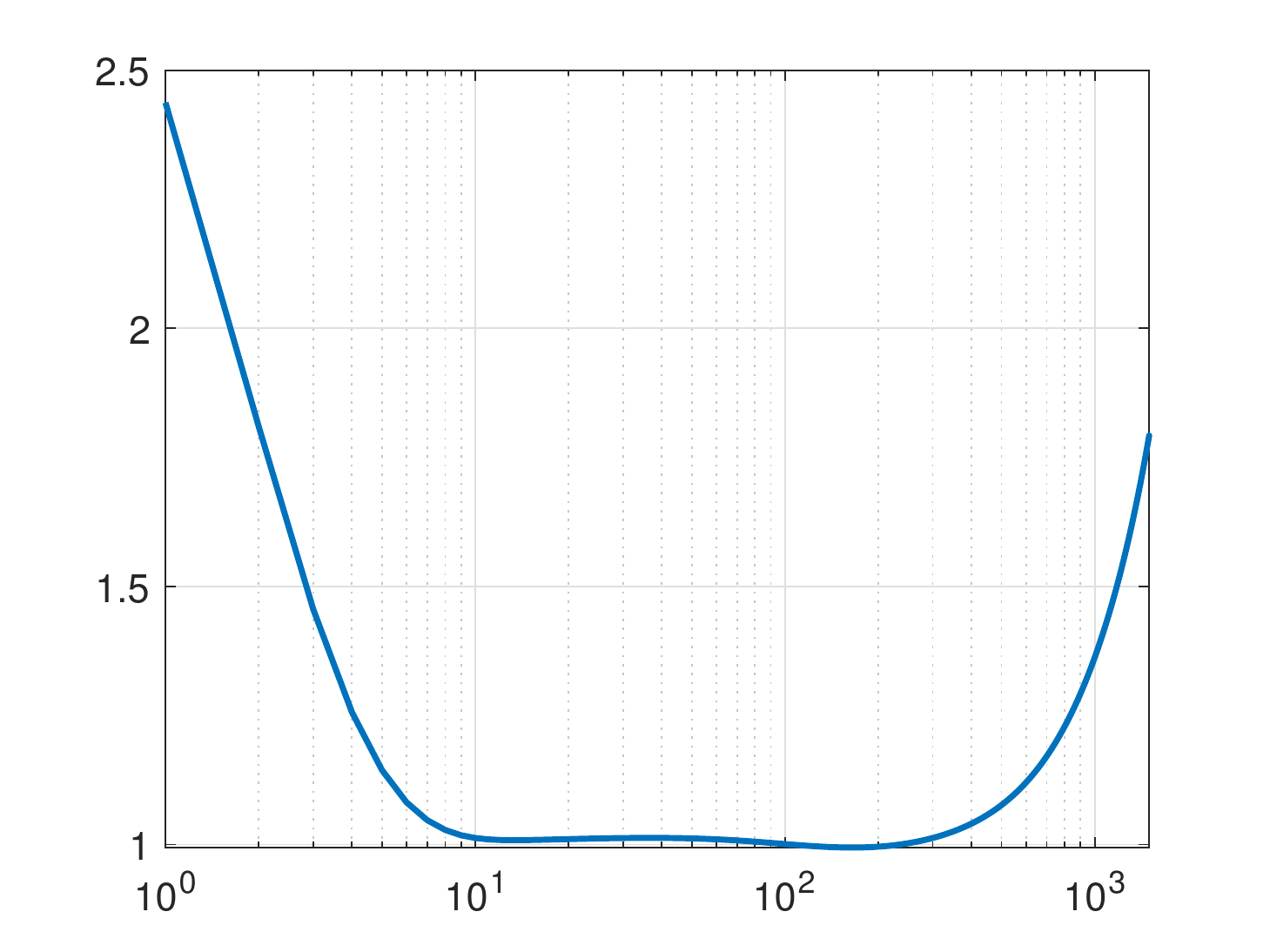}  \\
		& \# of iterations & \# of iterations
	\end{tabular}
	\caption{\small Comparison of ratio of error for GD and SAM over iterations, $\errortest(\B{w}_k^{\gd})/\errortest(\B{w}_k^{\sam})$ for the kernel regression}
	\label{fig:app-kernel-traj}
\end{figure*}

\subsection{Deep Learning Experiments}
\subsubsection{Experimental setup}
Our deep learning experiments are done on MIT Supercloud cluster~\citep{supercloud} using a single NVIDIA V100 GPU. In all experiments, we use batch size of 256, with the peak starting learning rate of 0.35 scheduled to decay linearly. We train networks for 200 epochs, unless stated otherwise. We use momentum coefficient of 0.9 and weight decay coefficient of 0.0005 in all experiments. We run all experiments for three repetitions and report the average and standard deviation.

\subsubsection{Comparison of SGD and SAM}
First, we repeat our experiments from Section~\ref{sec:Numerical} on ResNet18. In Figure~\ref{fig:varyrho18} we explore the effect of $\rho$ on the accuracy. We see that large or small values of $\rho$ lead to worse performance, as expected. However, we see that as we discussed in Section~\ref{sec:Numerical}, the loss of accuracy from large values of  $\rho$ is less than loss of accuracy when taking $\rho=0$, i.e. GD. Specially, in CIFAR100, the loss of accuracy is negligible by going from $\rho=0.3$ to $\rho=0.5$. A similar situation can be observed from Figure~\ref{fig:varyrho18} for the ResNet50 network. As we discussed, this aligns with our theory for the noiseless case.

Next, we take a look at error trajectory for noiseless and noisy setups for ResNet18 in Figure~\ref{fig:vs18}. As we see, the profiles of error are similar to the ones from Section~\ref{sec:Numerical}. Particularly, we see error decreases over epochs for the noiseless case, while start to increase in the noisy setup. Therefore, our insight regarding early stopping in the noisy case holds true here as well. We also see the increasing performance gap between SAM and SGD for the noisy setup, which as we discussed, can be explained by the increase of variance of GD in the non-convex setup.

\begin{figure}[t!]
	\centering
	\begin{tabular}{cccc}
		CIFAR10 (ResNet50) & CIFAR100 (ResNet50) & CIFAR10 (ResNet18) & CIFAR100 (ResNet18)\\
		\includegraphics[width=0.2\linewidth,trim =0.8cm 0cm .8cm 0cm, clip = true]{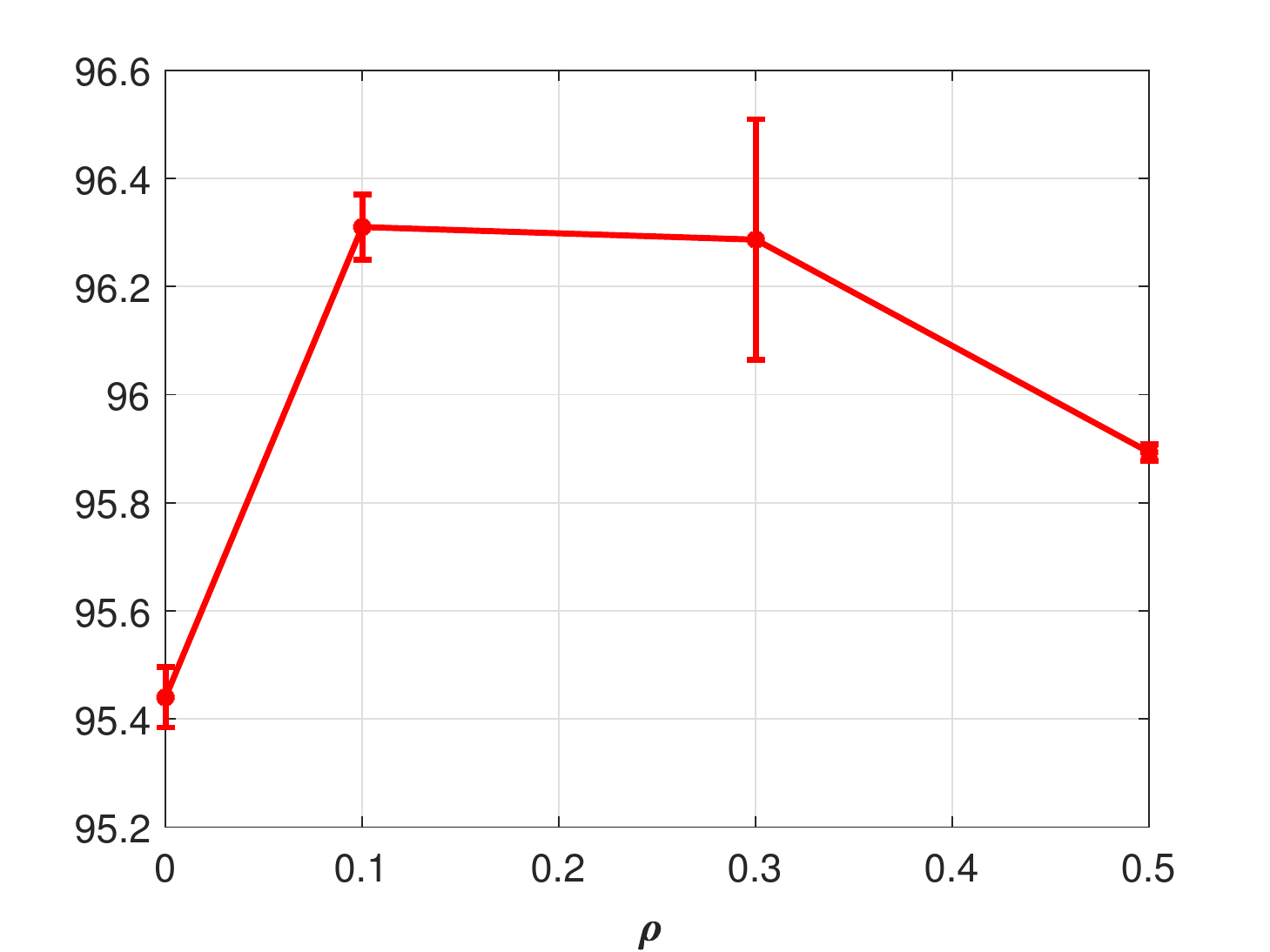}&   
		\includegraphics[width=0.2\linewidth,trim =.8cm 0cm .8cm 0cm, clip = true]{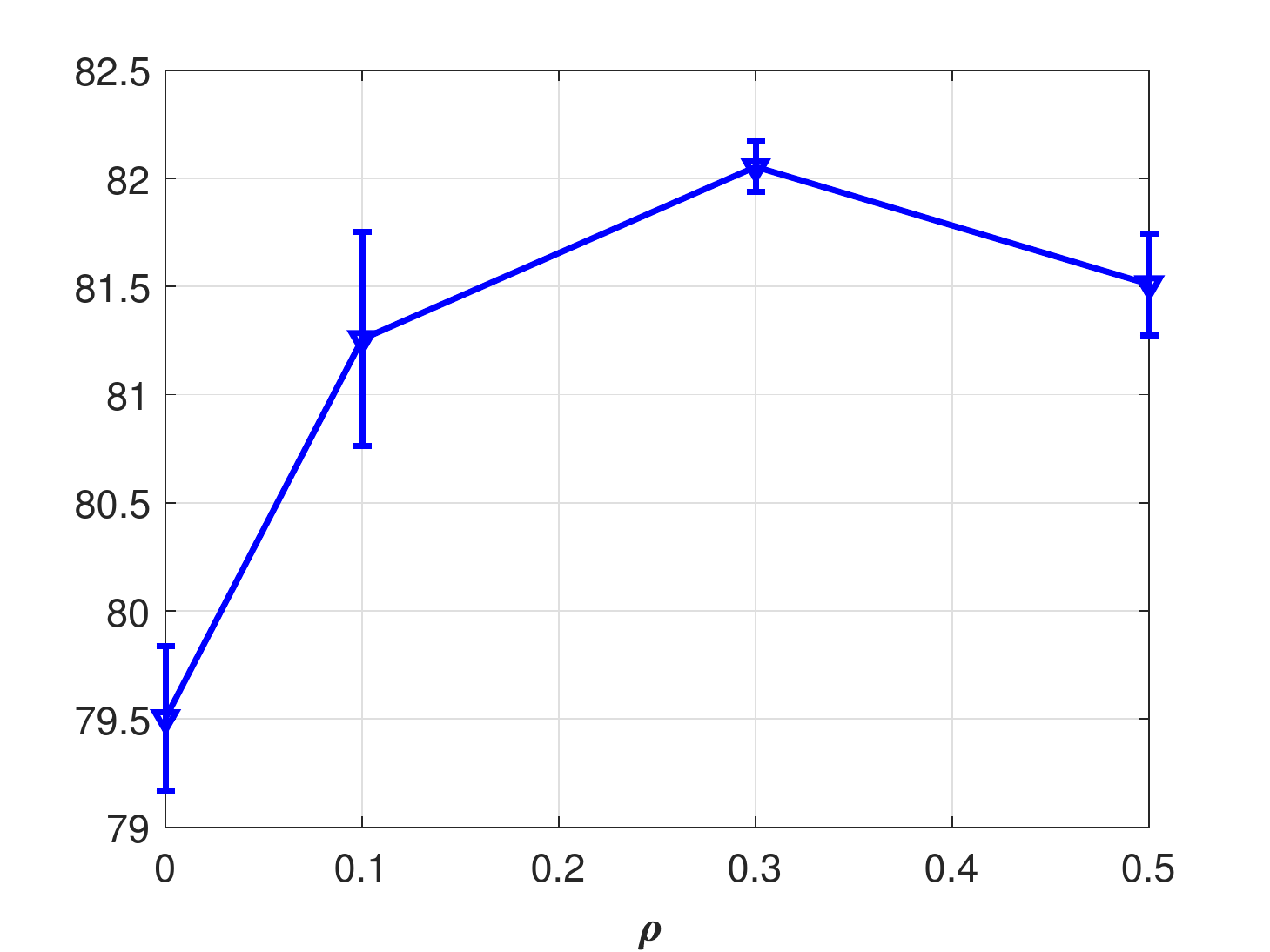}    &
		\includegraphics[width=0.2\linewidth,trim =0.8cm 0cm .8cm 0cm, clip = true]{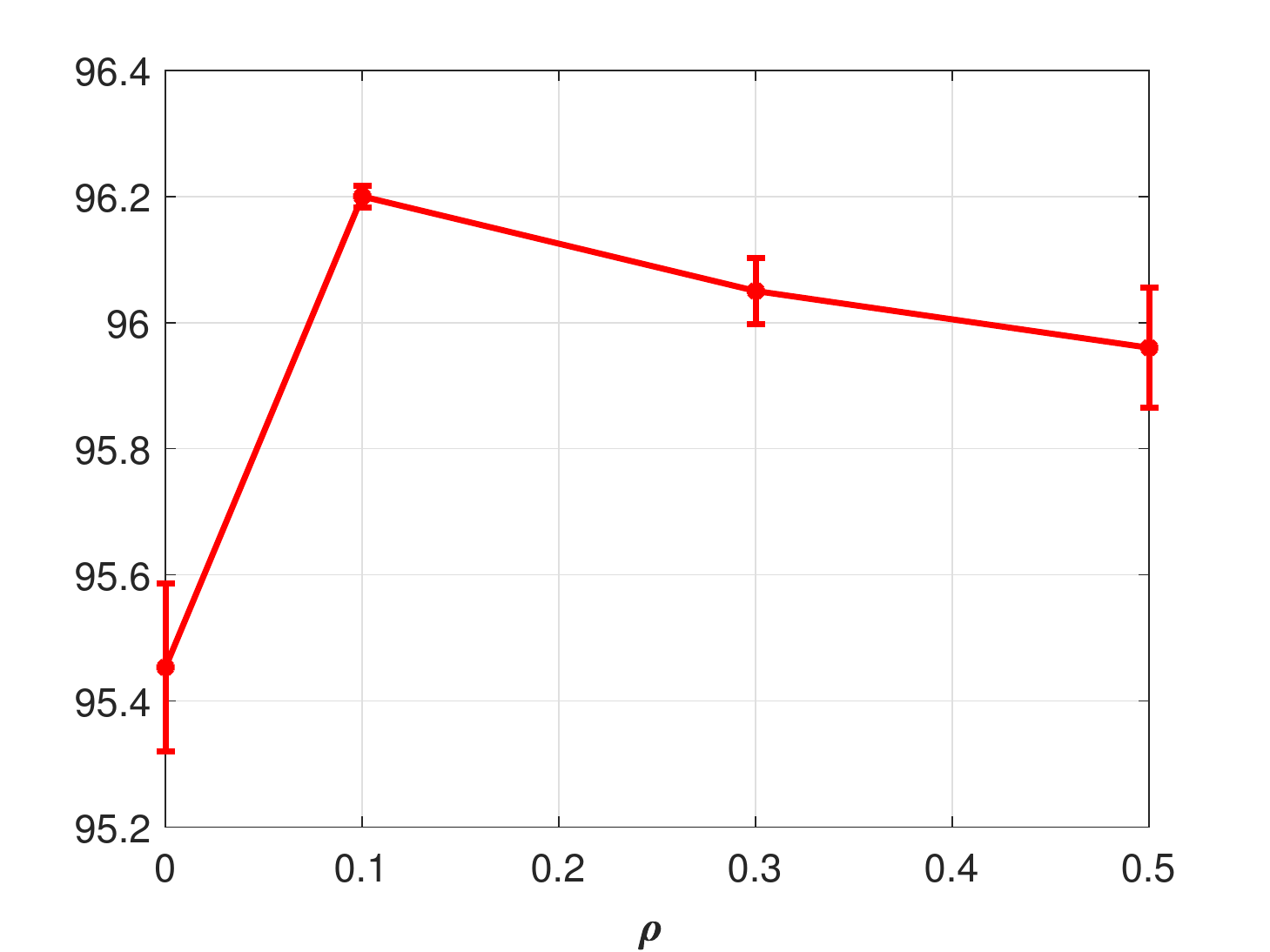}&   
		\includegraphics[width=0.2\linewidth,trim =.8cm 0cm .8cm 0cm, clip = true]{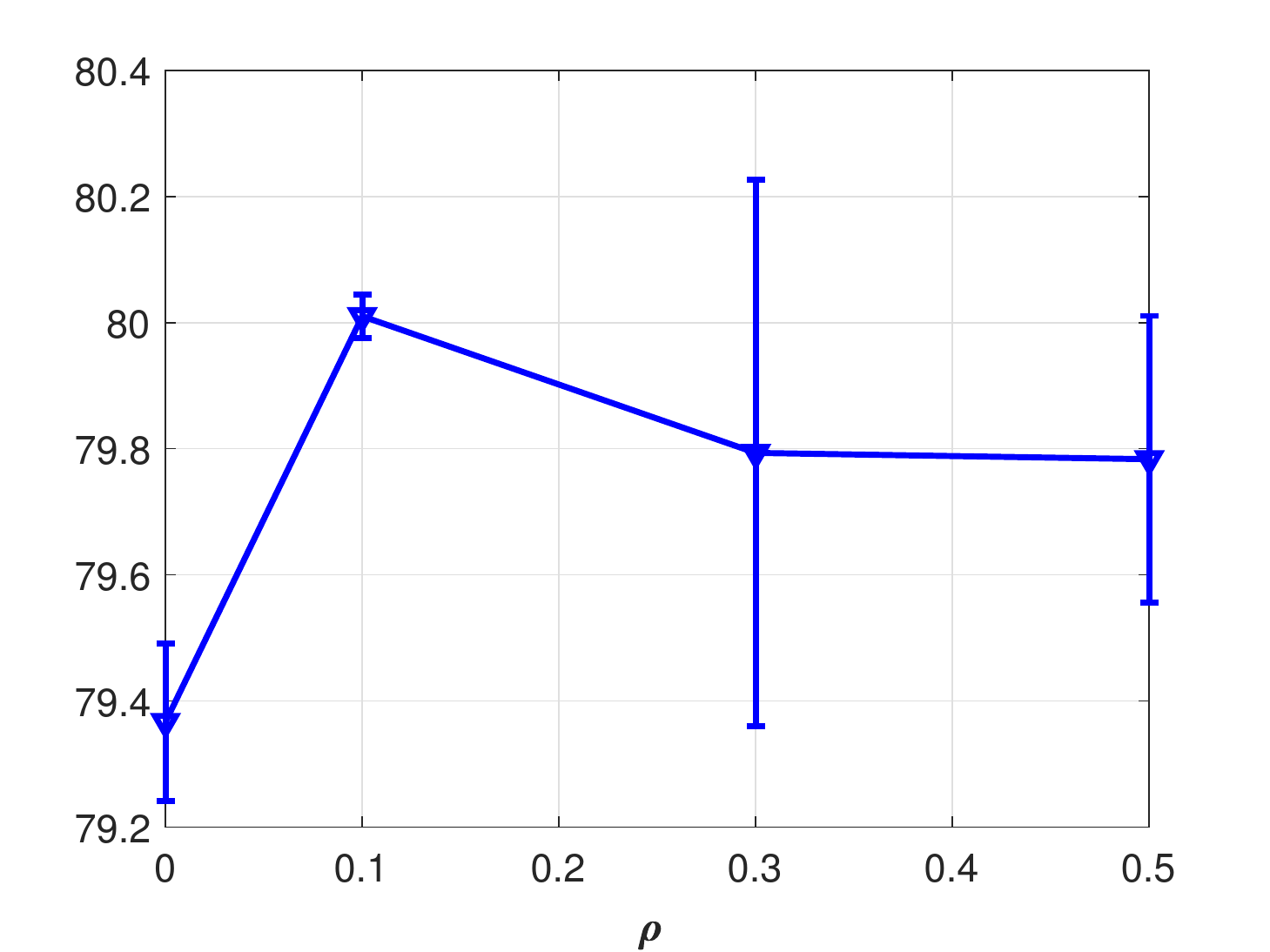} 
	\end{tabular}
	\caption{\small Effect of varying $\rho$ on accuracy on ResNet50 and ResNet18.   }
	\label{fig:varyrho18}
\end{figure}

\begin{figure}[t!]
	\centering
	\begin{tabular}{cccc}
		CIFAR10 & CIFAR100 & CIFAR10-random & CIFAR10-worse\\
		\includegraphics[width=0.2\linewidth,trim =0.8cm 0cm .8cm 0cm, clip = true]{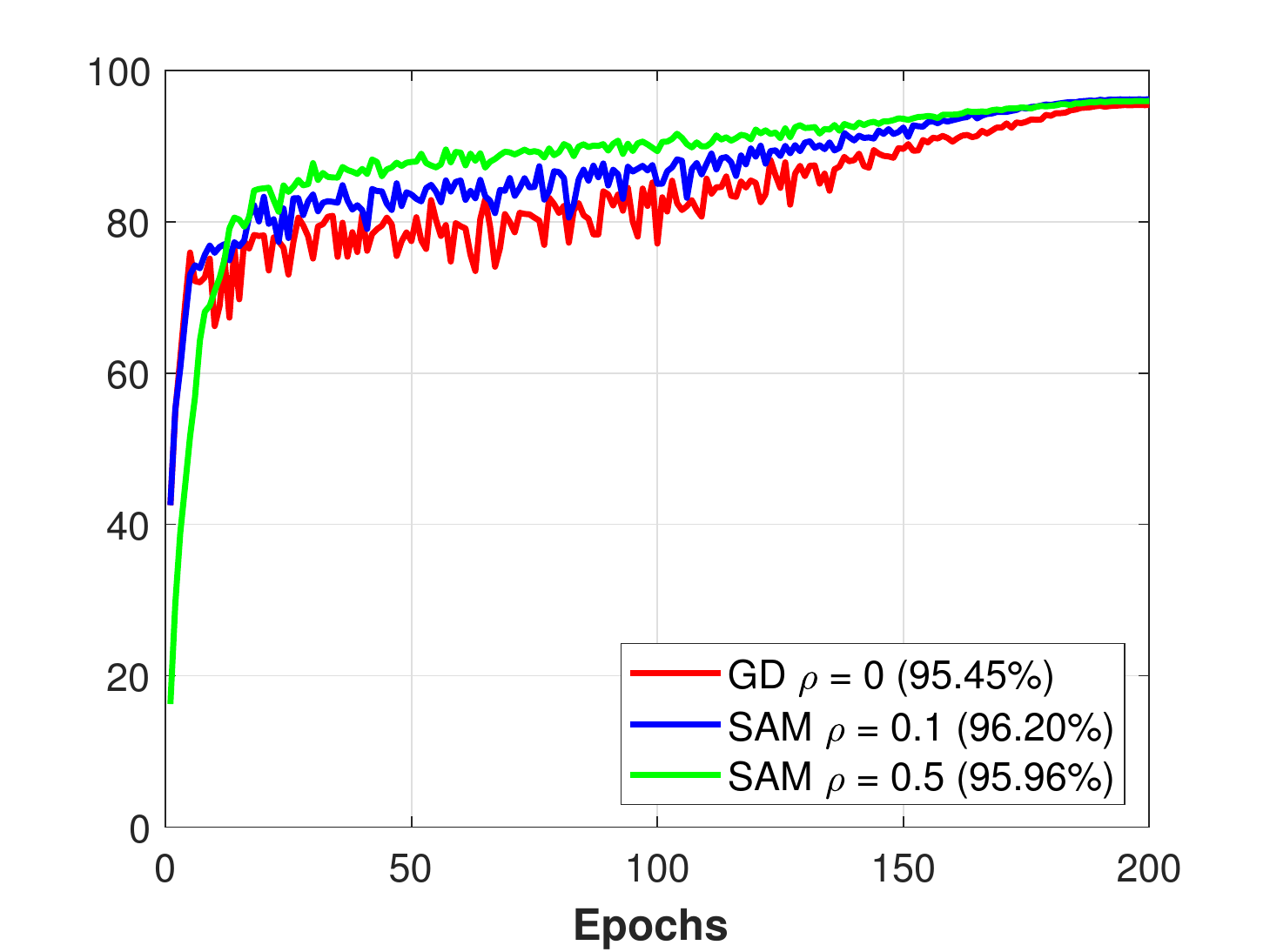}&   
		\includegraphics[width=0.2\linewidth,trim =.8cm 0cm .8cm 0cm, clip = true]{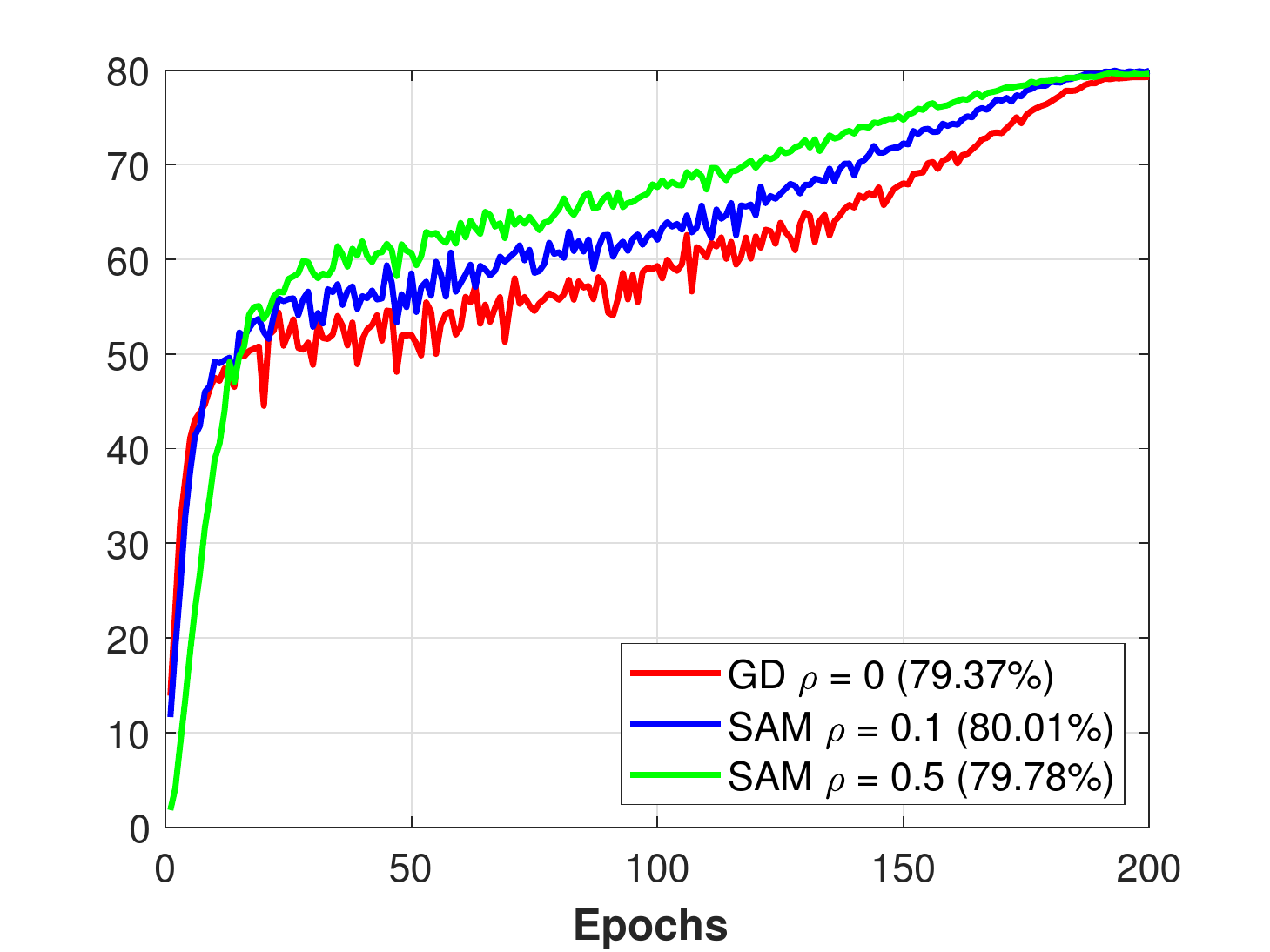}   &
		\includegraphics[width=0.2\linewidth,trim =0.8cm 0cm .8cm 0cm, clip = true]{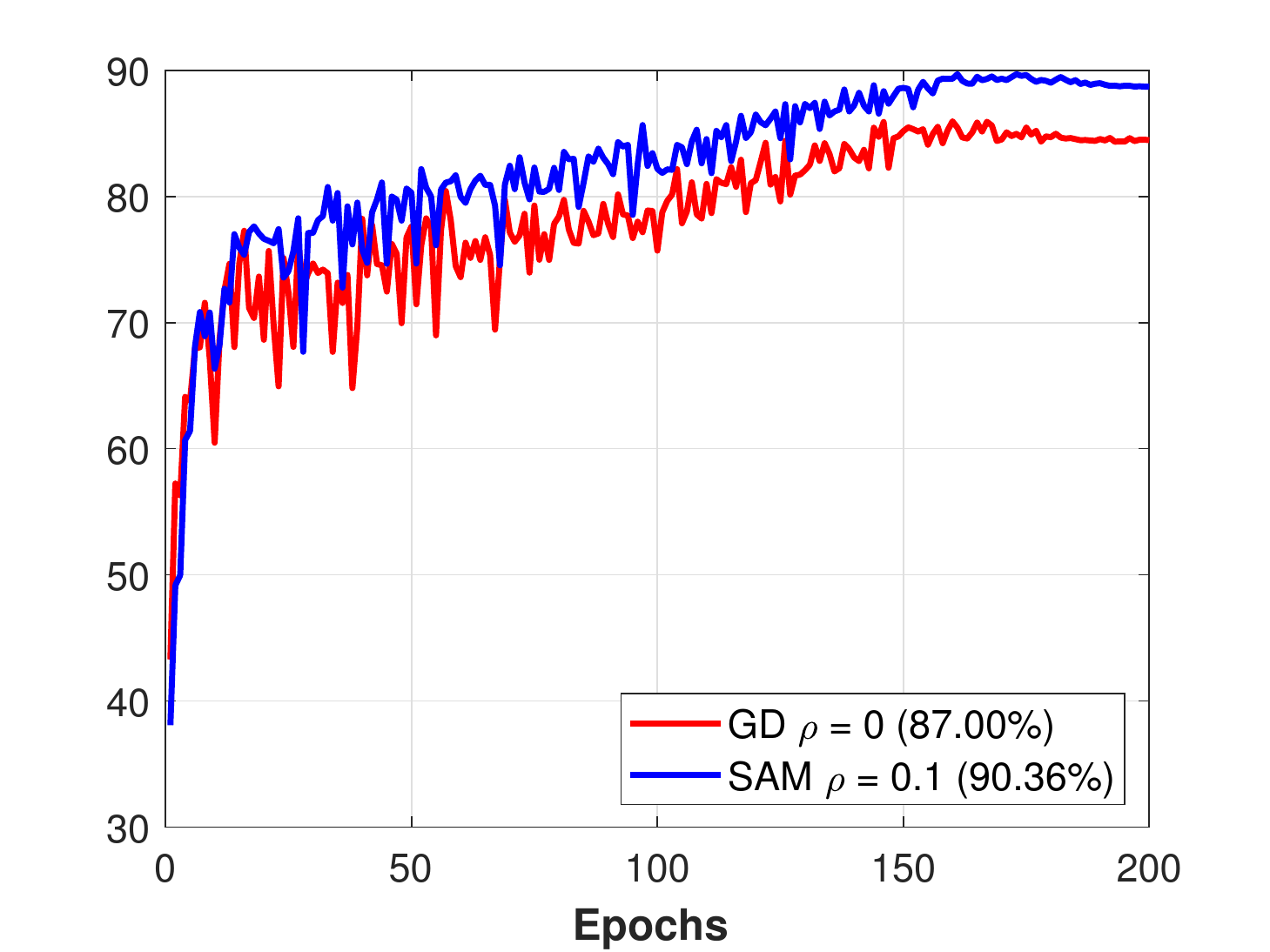}&   
		\includegraphics[width=0.2\linewidth,trim =.8cm 0cm .8cm 0cm, clip = true]{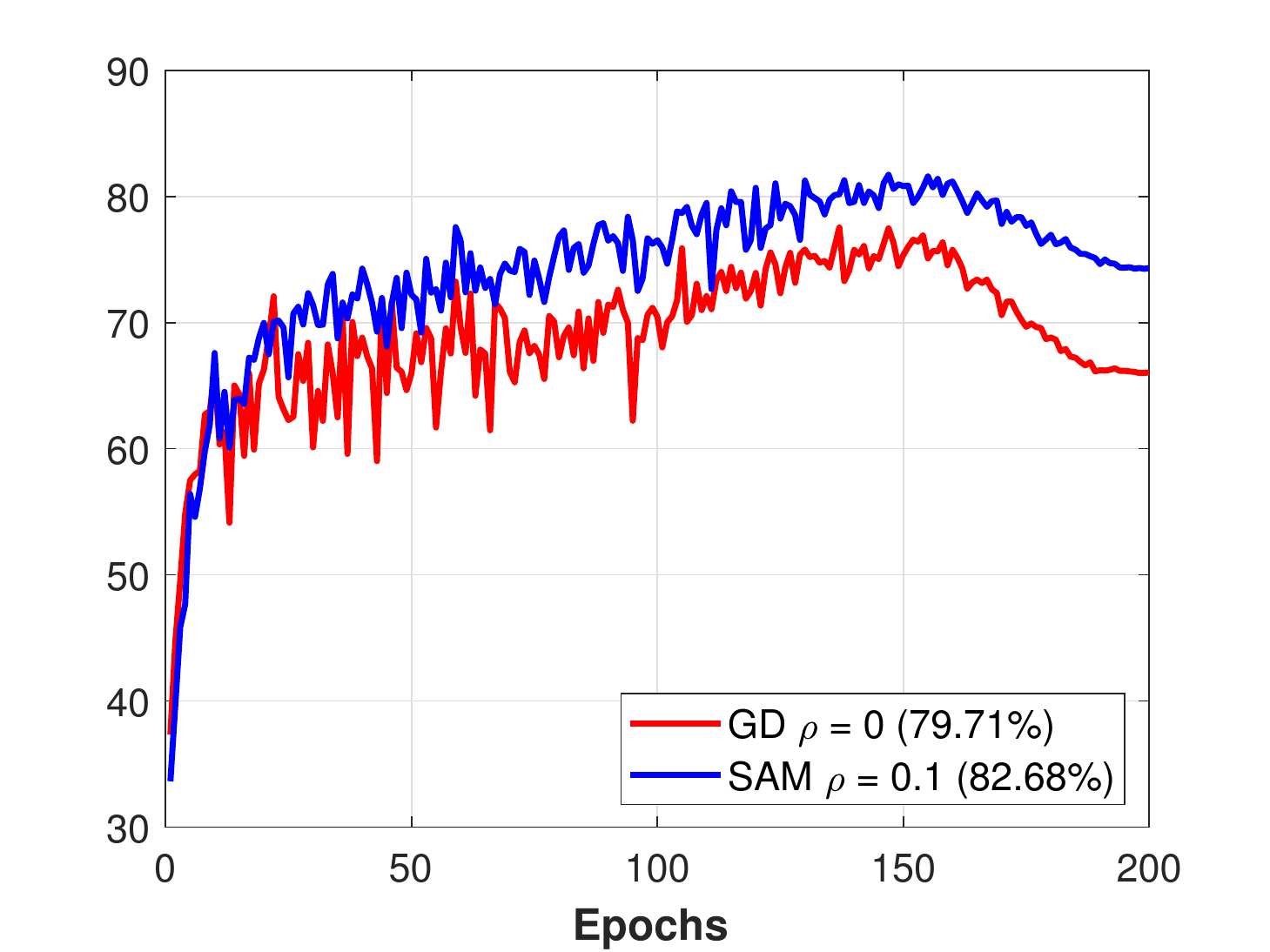}   
	\end{tabular}
	\caption{\small  Accuracy over epochs for SAM and SGD (ResNet18). The number in the parenthesis in the legend gives the average best accuracy.}
	\label{fig:vs18}
\end{figure}


\subsubsection{Effect of decaying $\rho$}
Based on our observations so far, we see that choosing large values of $\rho$ improves the performance in early epochs, while resulting in worse accuracy in later iterations. Therefore, our hypothesis is that starting the training with a large $\rho$ and then decaying $\rho$ might result in better performance. To test this, we start the training with a value of $\rho$ in our search grid of $\{0.1,0.3,0.5\}$ that is one step larger than the optimal $\rho$. For example, if the optimal $\rho$ is 0.1, we start training by $\rho=0.3$. Then, we decay $\rho$.
We consider two decay patterns: The value of $\rho$ is multiplied with a coefficient $\alpha\in\{0.7,0.8,0.9\}$ at either epoch 175, or epochs 150 and 200. We choose the best scenario among the 6 possible choices outlined above and report the results in Table~\ref{samdecay-complete-table} under the SAM-Decay row. SAM-Optimal shows the results for the best value of $\rho$. We run the training for full 200 epochs, but record the validation accuracy over the course of epochs. The early stopping column in Table~\ref{samdecay-complete-table} corresponds to the best accuracy over the first 120 epochs for SGD, and first 50 epochs for SAM-based methods (i.e. 100 forward-backward passes).  We can see that for the full training, decaying $\rho$ results in better or similar accuracy compared to the optimal $\rho$. This is while using the larger value of $\rho$ for full training leads to worse performance. Moreover, we see that the optimal value of $\rho$ generally performs worse than the decay case when training is stopped early, and using a large $\rho$ shows a considerable improvement in early epochs. As discussed, this is in agreement with the insights from our theory, and our observations above.

\begin{table}[t]
	\caption{\small Effect of starting training with a large $\rho$ and decaying over the course of epochs.}
	\label{samdecay-complete-table}
	\vskip 0.15in
	\begin{center}
		\begin{small}
			\begin{sc}
				\begin{tabular}{ccccc}
					\toprule
					Architecture & Dataset & Method & Full Training & Early Stopping   \\
					\midrule
					\multirow{6}{*}{ResNet18} & \multirow{3}{*}{CIFAR10} & SGD &  $95.45\pm0.13$& $85.98\pm1.26$  \\
					&  & SAM-Optimal & $96.20\pm0.02$ & $85.07\pm0.09$ \\
					&  &  SAM-Decay & $96.14\pm0.07$ & $88.60\pm1.68$ \\
					\cmidrule{2-5}
					& \multirow{3}{*}{CIFAR100} & SGD & $79.37\pm0.12$ & $62.58\pm1.23$ \\
					&  & SAM-Optimal & $80.01\pm0.03$ & $59.38\pm1.14$ \\
					&  &  SAM-Decay & $80.15\pm0.34$ & $63.37\pm0.60$ \\
					\midrule
					\multirow{6}{*}{ResNet50} & \multirow{3}{*}{CIFAR10} & SGD & $95.44\pm0.06$ & $82.99\pm0.75$ \\
					&  & SAM-Optimal & $96.31\pm0.06$ & $81.43\pm2.73$ \\
					&  &  SAM-Decay & $96.42\pm0.10$ & $86.79\pm0.38$ \\
					\cmidrule{2-5}
					& \multirow{3}{*}{CIFAR100} & SGD & $79.50\pm0.33$ & $58.87\pm0.62$ \\
					&  & SAM-Optimal & $82.01\pm0.09$ & $60.20\pm0.97$ \\
					&  &  SAM-Decay & $82.02\pm0.27$  & $61.92\pm1.63$\\
					\bottomrule
				\end{tabular}
			\end{sc}
		\end{small}
	\end{center}
	\vskip -0.1in
\end{table}
\subsubsection{Effect of sample splitting}
Finally, we explore the effect of stochasticity in SAM. To this end, we consider a version of SAM where two different sets of samples are used to calculate inner and outer gradients in SAM. That is, for a batch, we break it into two part and use part for the ascend step and the other for the descend. This ensures the independence between the data used for these two gradient steps, which can reduce the effect of stochasticity in SAM. We keep the experimental setup the same as before, but we use two batch size values of 256 (i.e. 128 samples per gradient) and 512 (i.e. 256 samples per gradient), as well as training for 200 and 400 epochs. We also shuffle batches randomly after each epoch. The results on ResNet50 for this case are shown in Table~\ref{split-table}. We see that the results in this case are worse than optimal SAM in Table~\ref{samdecay-complete-table}. This shows that sample splitting does not seem to be helpful in practice.

\begin{table}[t]
	\caption{\small Effect of sample splitting on SAM (ResNet50)}
	\label{split-table}
	\vskip 0.15in
	\begin{center}
		\begin{small}
			\begin{sc}
				\begin{tabular}{cccc}
					\toprule
					Dataset & Epochs & Batch Size 256 & Batch Size 512   \\
					\midrule
					\multirow{2}{*}{CIFAR10} & 200 & $94.39\pm0.11$ &  $95.03\pm0.09$\\
					& 400 & $94.73\pm0.11$ & $95.50\pm0.04$ \\
					\cmidrule{2-4}
					\multirow{2}{*}{CIFAR100} & 200 & $78.63\pm0.48$ & $78.72\pm0.42$ \\
					& 400 & $78.95\pm0.30$ & $79.55\pm0.33$ \\
					\bottomrule
				\end{tabular}
			\end{sc}
		\end{small}
	\end{center}
	\vskip -0.1in
\end{table}
	
\end{document}